\def\BibTeX{{\rm B\kern-.05em{\sc i\kern-.025em b}\kern-.08em
    T\kern-.1667em\lower.7ex\hbox{E}\kern-.125emX}}
\definecolor{LightCyan}{rgb}{0.8,0.8,1.0}
\definecolor{LightRed}{rgb}{1.0,0.8,0.8}
\definecolor{LightGreen}{rgb}{0.8,1.0,0.8}
\definecolor{LightYellow}{rgb}{1.0,1.0,0.8}
\newtheorem{corollary}{Corollary}
\newtheorem{proposition}{Proposition}
\let\NAT@parse\undefined
\title{AVOCADO: Adaptive Optimal Collision \\ Avoidance driven by Opinion}
\author{Diego Martinez-Baselga*, \and Eduardo Sebasti\'{a}n*, \and Eduardo Montijano, \\ \and Luis Riazuelo, \and Carlos Sag\"{u}\'{e}s and Luis Montano
\thanks{The authors are with the DIIS - I3A, Universidad de Zaragoza, Spain (\texttt{\small \{diegomartinez, esebastian, emonti, riazuelo, montano, csagues\}@unizar.es}). *Equal contribution}%
\thanks{This work has been supported by Spanish projects PID2022-139615OB-I00, PID2021-125514NB-I00, PID2021-124137OB-I00 and TED2021-130224B-I00 funded by MCIN/AEI/10.13039/501100011033/FEDER-UE, by ERDF A way of making Europe and by the European Union NextGenerationEU/PRTR, ONR Global grant N62909-24-1-2081, DGA T45-23R, and Spanish grants FPU19-05700 and PRE2020-094415.}%
}
\newcommand\copyrighttext{%
  \footnotesize \textcopyright This paper has been accepted for publication at IEEE Transactions on Robotics. Please, when citing the paper, refer to the official manuscript with the following DOI: 10.1109/TRO.2025.3552350.}
\newcommand\copyrightnotice{%
\begin{tikzpicture}[remember picture,overlay]
\node[anchor=south,yshift=10pt] at (current page.south) {\fbox{\parbox{\dimexpr\textwidth-\fboxsep-\fboxrule\relax}{\copyrighttext}}};
\end{tikzpicture}%
}
\begin{document}
\maketitle
\copyrightnotice


\begin{abstract}
We present \textsf{AVOCADO} (AdaptiVe Optimal Collision Avoidance Driven by Opinion), a novel navigation approach to address holonomic robot collision avoidance when the robot does not know how cooperative the other agents in the environment are. \textsf{AVOCADO} departs from a Velocity Obstacle's (VO) formulation akin to the Optimal Reciprocal Collision Avoidance method. However, instead of assuming reciprocity, it poses an adaptive control problem to adapt to the cooperation level of other robots and agents in real time. This is achieved through a novel nonlinear opinion dynamics design that relies solely on sensor observations. As a by-product, we leverage tools from the opinion dynamics formulation to naturally avoid the deadlocks in geometrically symmetric scenarios that typically suffer VO-based planners.
Extensive numerical simulations show that \textsf{AVOCADO} surpasses existing motion planners in mixed cooperative/non-cooperative navigation environments in terms of success rate, time to goal and computational time. In addition, we conduct multiple real experiments that verify that \textsf{AVOCADO} is able to avoid collisions in environments crowded with other robots and humans.
\end{abstract}

\begin{IEEEkeywords}
Collision Avoidance, Multi-Robot Systems, Motion and Path Planning, Opinion Dynamics
\end{IEEEkeywords}


\section{Introduction}\label{sec:intro}
\IEEEPARstart{R}{obot} navigation in dynamic environments is the task of moving a robot from one location to another while avoiding collisions with obstacles that are in motion around the environment\cite{choset2005principles}. Collision avoidance is particularly challenging when the dynamic obstacles are agents (humans, other robots, etc.), and the problem of accounting for the effect of the robot motion on the other agents is still unsolved\cite{mavrogiannis2023core}. Agents have their own intent and they exhibit different degrees of cooperation, i.e. they adjust their behavior to avoid collision with the robot to a certain extent that is unknown to the robot. Thus, it is hard for robots to predict their future motion and effectively plan to avoid potential collisions\cite{fisac2018probabilistically,rudenko2020human}. The ability of avoiding collisions in crowded dynamic environments is a low-level feature that is of key importance in robotic applications such as social robotics\cite{fridovich2020confidence,mirsky2024conflict}, robots in the wild \cite{tian2020search,tabib2021autonomous,soria2021predictive} or autonomous driving\cite{paden2016survey, song2023reaching}. 

Existing solutions (Section \ref{sec:related}) make one or more of the following assumptions: (i) the degree of cooperation between the robot and an agent is known a priori (e.g., the robot knows that the agent 
is going to collaborate in the collision avoidance and how), restricting the solution to predetermined cooperative settings\cite{van2011reciprocal,alonso2018cooperative,han2020cooperative,boldrer2023rule}; (ii) the robot can communicate with other robots and/or agents\cite{bajcsy2019scalable,li2020graph,patwardhan2022distributing,zhang2023neural}, which might not be possible during deployment due to a lack of cooperation, communication delays, packet losses or disturbances from external factors; (iii) the computational capabilities of the robot are sufficient to do inference on learning-based policies or long-term horizon planning\cite{long2017deep,everett2021collision,brito2019model,poddar2023crowd}, which is hard to ensure in low-cost onboard platforms if we take into account that collision avoidance is just a low-level feature among a myriad of desired high-level and computationally demanding capabilities (e.g., multi-robot active perception or semantic mapping \cite{morilla2023robust, asgharivaskasi2023semantic}).

\begin{figure}
    \centering
    \includegraphics[trim={2cm 10cm 15cm 7cm}, clip, width=\columnwidth]{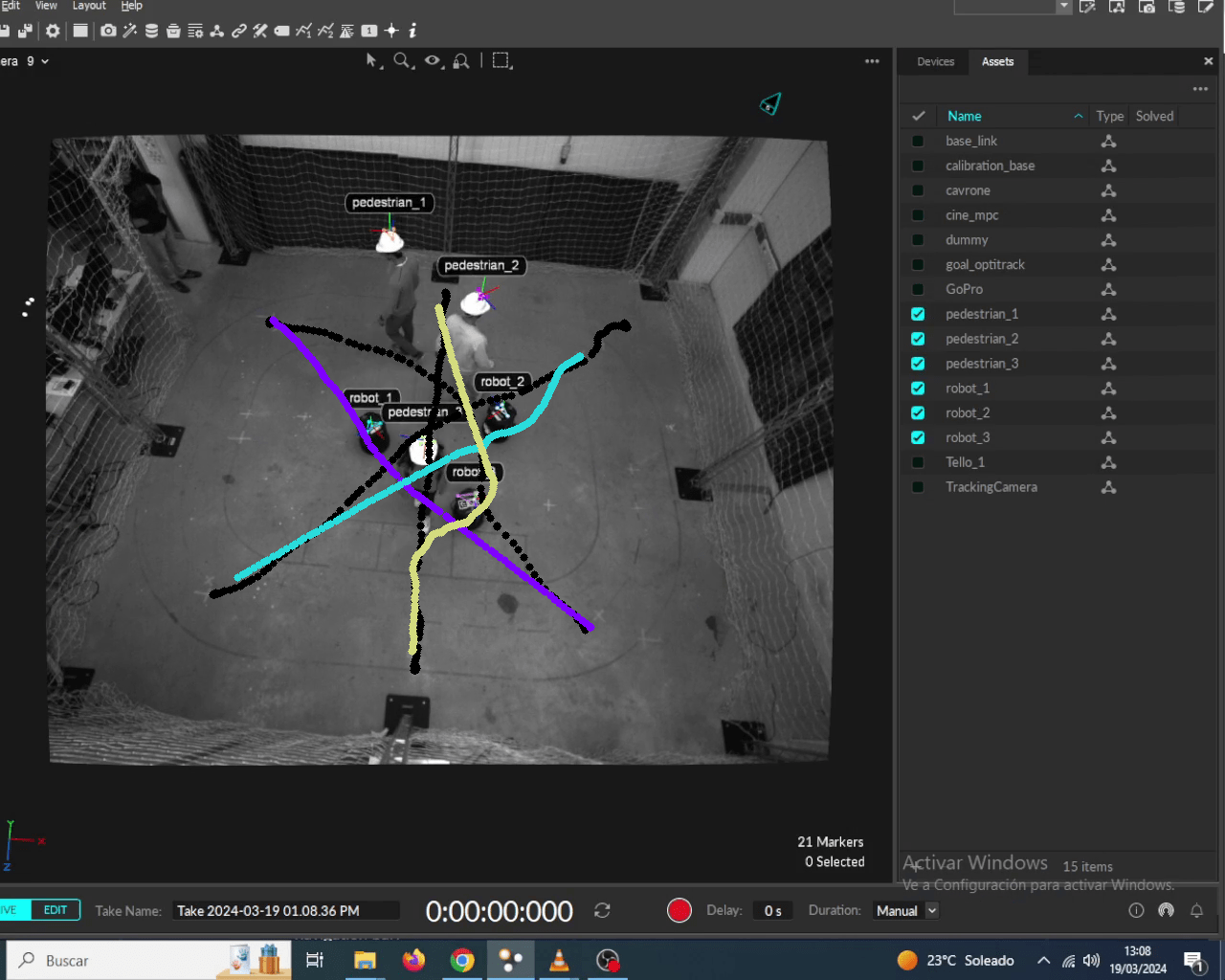}
    \caption{Illustrative example of one of the experiments with real robots and humans. Robots using \textsf{AVOCADO} adapt online to avoid collisions with the other entities in the arena despite not knowing the degree of cooperation of the other robots and humans. Section \ref{sec:experiments} discusses all the experiments in detail.}
    \label{fig:teaser}
\end{figure}

To cope with all these issues, \textbf{the main contribution} of this work is \textsf{AVOCADO} (\textbf{A}dapti\textbf{V}e \textbf{O}ptimal \textbf{C}ollision \textbf{A}voidance \textbf{D}riven by \textbf{O}pinion), a novel method for collision avoidance in cooperative and non-cooperative dynamic environments (Section \ref{sec:solution}). \textsf{AVOCADO} is built upon a geometrical formulation based on the classical Velocity Obstacle (\textsf{VO}, \cite{fiorini1993motion, fiorini1998motion}) framework (Section \ref{sec:prosta}). However, different from other \textsf{VO}-based approaches that assume either no cooperation or reciprocity \cite{van2008reciprocal,van2011reciprocal,alonso2013optimal}, we consider that the degree of cooperation between the robot and the agent is unknown and do not assume any predetermined behavior for the agent or communication infrastructure. To address uncertainty on the degree of cooperation, we propose a novel nonlinear opinion dynamics adaptive scheme that enables the robot to adapt to the agent's behavior in real-time from just its onboard sensor observations.
Furthermore, we exploit the nonlinear opinion dynamics adaptive law to develop a method that avoids the geometrical deadlocks coming from symmetries that suffer some existing  \textsf{VO}-based methods. We validate our proposal with simulations (Section \ref{sec:simulations}) and real experiments (Section \ref{sec:experiments}), showing that \textsf{AVOCADO} surpasses exiting collision avoidance approaches in environments that mix cooperative and non-cooperative agents in terms of avoided collisions while featuring an inexpensive computational cost. The scenarios include examples that combine real robots and humans in tight spaces, concluding (Section \ref{sec:conclusion}) that \textsf{AVOCADO} is a promising alternative for low-cost robot navigation in crowded environments where agents exhibit unknown degrees of cooperation.  

The code of the work is implemented in C++/Python and can be accessed in the repository of the work\footnote{\url{https://github.com/dmartinezbaselga/AVOCADO}}, as well as the videos of the experiments.

\section{Related work}\label{sec:related}
Research on collision avoidance was primarily dominated by geometrical approaches. Departing from robot observations, methods originally only considered static obstacles, as the Dynamic Window Approach~\cite{brock1999high}. Others like potential fields\cite{khatib1986real}, inevitable collision states \cite{petti2005safe} or social forces\cite{ferrer2013robot} compute control actions that reactively lead the robot towards collision-free regions. They are simple to compute but lack formal performance guarantees, leading to over-conservative policies. By contrast, \textsf{VO}-based methods\cite{vesentini2024survey} preserve the computational simplicity but guarantee optimality in terms of deviation with respect to the desired predefined velocity either for holonomic or non-holonomic robots \cite{alonso2013optimal, rufli2013reciprocal,bareiss2015generalized}. Reciprocal \textsf{VO} (\textsf{RVO}) extends VO by assuming reciprocity in the collision avoidance \cite{van2008reciprocal}, i.e, it assumes that agents encountered in the environment cooperate equally to avoid the collision. Specially relevant is Optimal Reciprocal Collision Avoidance \textsf{ORCA}~\cite{van2011reciprocal}, which builds a half-space geometrical constraint per agent that is incorporated in a linear program. Nevertheless, the reciprocal avoidance assumption may not be applied in mixed cooperative/non-cooperative environments, which are the ones considered in this paper.

Some solutions \cite{guo2021vr} remove the 
reciprocity assumption in \textsf{ORCA} and formulate an additional optimization problem to reduce the time the agents take to reach the goal. Instead of using optimization, 
\textsf{ORCA} can also be implemented as a low-level module integrated in a high-level learned policy that accounts for social and traffic cues\cite{qin2023srl}. Both approaches require 1-hop communication, which is not feasible in a non-cooperative environment due to the lack of cooperation among agents. Instead, \textsf{AVOCADO} is fully decentralized and only relies on the perception of the robot.

Many recent methods use deep learning for robot navigation. Either from a supervised\cite{long2017deep,tai2018socially,pokle2019deep, xie2021towards, sebastian2023lemurs} or reinforcement learning perspective\cite{long2018towards,fan2020distributed,everett2021collision,ourari2022nearest,cui2023scalable,martinez2023improving, sebastian2023physics}, deep learning develop end-to-end policies that map the perception observations into control commands\cite{han2020cooperative} or sub-goals that are fed into a low-level horizon planning controller\cite{brito2021go}. For instance, Long-Short Term Memory (LSTM) networks can be combined with barrier certificates\cite{yu2023sequential} to improve collision avoidance success rate and scalability. There are also works that combine \textsf{ORCA} scenario knowledge with reinforcement learning~\cite{qin2023srl,han2022reinforcement}. Nonetheless, deep learning approaches have a high memory and computational cost, often lack collision avoidance guarantees and fail in out-of-distribution settings.

The lack of guarantees of purely learning-based methods is addressed by reachability methods\cite{bajcsy2019scalable}, control barrier functions\cite{wang2017safety} and Model Predictive Control (MPC)\cite{morgan2014model}. Reachability-based approaches depart from the Hamilton-Jacobi equation\cite{althoff2021set} to build a safe set in the state space that fulfills all the desired constrains. Forward computation of the Hamilton-Jacobi equation is highly demanding and over-conservative\cite{chen2016multi}, so learning-based alternatives\cite{bansal2021deepreach,julian2021reachability,li2021prediction} propose approximations that trade computation and performance guarantees. On the other hand, control barrier functions formulate an optimization problem that finds the closest control action to the desired one that satisfies a set of desired forward invariance constraints, stemming from a control stability definition of the desired safe set. It is not clear how to extend control barrier functions to dynamic obstacles with unknown behaviors\cite{long2022safe} or communication-denied settings\cite{panagou2015distributed,chen2020guaranteed}. Finally, MPC shares with reachability methods the use of a time-horizon propagation and with control barrier functions the optimization-based formulation\cite{schulman2014motion}. MPC is flexible because it enables to satisfy additional goals or constraints such as motion uncertainties\cite{ryu2024integrating} or local maps\cite{brito2019model}. However, it has a significant computational burden that is only alleviated by reducing the time-horizon and the number of considered nearby agents, resulting in undesired local minima or oscillations.  Different from these alternatives, \textsf{AVOCADO} is inexpensive to compute, is not over-conservative as it minimizes the deviation with respect to the desired velocity, and does not suffer from symmetry deadlocks and oscillations.

One key characteristic of the aforementioned methods is that, in non-cooperative environments with unknown intents, a prediction model of the behavior of the other agents is required to effectively avoid collisions\cite{de2023scenario, de2024topology}. The first option is to use learned predictors that directly obtain future trajectory states of the agents given current and past samples\cite{katyal2020intent, liu2023intention}. The second option is to rely on a learned model\cite{mavrogiannis2022winding}, such that it can be integrated in a MPC program\cite{poddar2023crowd} to simultaneously optimize over the trajectories of the robot and the other agents. In both cases, the success of the collision avoidance module is subject to the accuracy of the complex prediction modules; instead, \textsf{AVOCADO} adapts to
unknown degrees of cooperation through a novel nonlinear opinion dynamics adaptive law that is inexpensive to compute, does not require learning and is only based on current perception observations, without relying on prediction of future trajectories.

Opinion dynamics \cite{altafini2012consensus,jia2015opinion,cisneros2020polarization} originate from studies on how to model social interactions. The ineffectiveness of linear models to represent behaviors like saturation of information leads to nonlinear opinion dynamics\cite{bizyaeva2022nonlinear,leonard2024fast}. They have been applied in a wide variety of fields, such as explaining political polarization~\cite{leonard2021nonlinear} or perception and reaction to epidemics~\cite{ordorica2024opinion}; lately, they have been applied in robotic problems, mainly combining them with game theory for cooperative and non-cooperative multi-agent decision making \cite{park2021tuning,hu2023emergent,hu2024think} or multi-robot task allocation~\cite{bizyaeva2022switching}. Regarding collision avoidance, there is an important property of nonlinear opinion dynamics. They ensure the existence of a bifurcation in the state-space, i.e., the existence of two simultaneous stable equilibrium points. This feature has been exploited \cite{cathcart2023proactive} to design a collision avoidance method that avoids deadlocks, where the opinion represents the preference of the robot to move left or right. A recent work \cite{amorim2024spatially} extends opinion dynamics from the planar to the circle space to represent, e.g., the heading angle of a robot. Nevertheless, these approaches do not guarantee collision avoidance nor consider other motion patterns than only changing the heading angle. \textsf{AVOCADO} exploits 
nonlinear opinion dynamics to guarantee that the robot decides if the agent is cooperative or non-cooperative and in what degree, leaving the collision avoidance guarantees to a \textsf{VO}-based program.


\section{Problem formulation}\label{sec:prosta}
Consider a robot navigating in an environment populated with $\mathsf{N}>0$ agents. The robot follows single integrator holonomic dynamics given by
\begin{equation}\label{eq:robot_dynamics}
    \dot{\mathbf{p}_r} = \mathbf{v}_r^{\mathsf{pre}},
\end{equation}
where $\mathbf{p}_r \in \mathbb{R}^2$ is the position of the robot and $\mathbf{v}_r^{\mathsf{pre}} \in \mathbb{R}^2$ is a prescribed velocity command generated by a higher-level motion planner. On the other hand, each agent is associated with an index $i \in \{1, \dots, \mathsf{N} \}$, and has a position $\mathbf{p}_i \in \mathbb{R}^2$ and a velocity $\mathbf{v}_i \in \mathbb{R}^2$. The robot can only sense the agents that are within a disc centered in the robot position and of perception radius $r_p > 0$
\begin{equation}\label{eq:perception_disc}
    \mathsf{D}_{p}(\mathbf{p}_r, r_p) = \{\mathbf{p} \mid ||\mathbf{p} - \mathbf{p}_r|| < r_p \},
\end{equation}
where $||\bullet||$ is the L2-norm of a vector and $\mathbf{p} \in \mathbb{R}^2$. Therefore, the set of neighboring agents of the robot is defined as \mbox{$\mathcal{N} = \{i \mid \mathbf{p}_i \in \mathsf{D}_p(\mathbf{p}_r, r_p)\}$}. The purpose of the paper is to develop an algorithm that avoids collisions between the robot and the agents. To that end, we define the set of locations that lead to collisions as a disc centered in the position of the robot and security radius $r_s > 0$
\begin{equation}\label{eq:collision_disc}
    \mathsf{D}_{c}(\mathbf{p}_r, r_s) = \{\mathbf{p} \mid ||\mathbf{p} - \mathbf{p}_r|| < r_s \},
\end{equation}
where $r_s$ is given by the geometry of the robot and safety requirements. Agents are also characterized by a collision radius $r_i>0$. The degree of cooperation of agent $i$ with respect to the robot is modeled through a continuous variable $\alpha_i \in [0, 1]$, where $\alpha_i = 1$ means that the agent is fully cooperative with the robot (i.e., agent $i$ will do all the efforts at its hands to avoid collision), whereas $\alpha_i = 0$ means that the agent is completely non-cooperative (i.e., agent $i$ will ignore the robot and continue its movement without any collision avoidance effort). We assume that agents are not competitive, i.e., agents do not try to force collision with the robot as in, e.g., pursuit-evasion\cite{chung2011search} or herding\cite{sebastian2022adaptive} problems. As in realistic mixed cooperative/non-cooperative environments, the degree of cooperation of the agents is unknown.

To model collisions, we exploit the concept of Velocity Obstacle\cite{fiorini1993motion, fiorini1998motion}. Given a certain time instant $t \geq 0$, the Velocity Obstacle of the robot and agent $i$ is the set of velocities that can lead to a collision
\begin{equation}\label{eq:vo_def}
    \mathsf{VO}_{i} = \{\mathbf{v}_r\mid \exists t \in [0,\tau] \text{ s.t. } t\mathbf{v}_r \in \mathsf{D}_c(\mathbf{p}_i-\mathbf{p}_r, r_s+r_i)\}.
\end{equation}
In $\mathsf{VO}_{i}$, $\tau>0$ is the time horizon to check collisions between the robot and agent $i$ happens for any velocity $\mathbf{v}_r \in \mathbb{R}^2$. In Section \ref{sec:solution} we make use of $\mathsf{VO}_{i}$ to develop an optimization program that uses the adapted degree of cooperation to compute a velocity $\mathbf{v}_r^*$ as close as possible to $\mathbf{v}_r^{\mathsf{pre}}$ that guarantees collision avoidance. 

To adapt to the unknown degree of cooperation, we rely on the concept of nonlinear opinion dynamics\cite{bizyaeva2022nonlinear}. Opinion dynamics emerge as a means of modeling interactive behaviors where nodes in a network can choose among a discrete set of options regarding one or more topics. 
Let $o_j \in [-1, 1]$ be the opinion that node $j$ has about a certain topic. Typically, $o_j=0$ represents neutrality whereas $o_j=\{-1, 1\}$ represent extreme opinions over a topic with two possible opinions. By interacting with other nodes, node $j$ can evolve its opinion. For a topic with two opinions, one possible nonlinear opinion dynamics design is 
\begin{align}\label{eq:opinion_original}
    \dot{o}_j = -d_j o_j + g(o_j)f\left(a_j o_j + \sum_{k=1, k\neq j}^{\mathsf{M}}c_{jk}o_k\right) + b_j.
\end{align}

In the equation above, $d_j>0$ tunes how fast the current opinion vanishes with time; $b_j \in \mathbb{R}$ is a bias that models inherent priorities that the node $j$ has on the topic; $g(o_j)$ is an attention term that evolves with time and weights the importance of the influence of other nodes; $a_j>0, c_{jk}>0$ are consensus-like weights that model the exchange of opinions among the $\mathsf{M}>0$ nodes of the network; and $f(\bullet)$ is a nonlinear function, such as $\mathsf{tanh}$ or $\mathsf{sigmoid}$, that bounds the influence of the interactions. More importantly, the nonlinear function in the opinion dynamics enforces a bifurcation phenomenon. The bifurcation phenomenon is characterized by the appearance of a pitchfork bifurcation in the evolution of the state variable $o_j$ that leads to two non-zero stable equilibrium, each of them associated to one of the opinions. Whether the state evolves to one or another depends on the interactions between nodes. For more details on this, we refer the reader to \cite{bizyaeva2022nonlinear}. In relation with this work, the inherent bifurcation property of nonlinear opinion dynamics and the conceptual relationships between the opinion and the parameters of Eq. \eqref{eq:opinion_original} motivate their use to estimate (``build an opinion'') on the unknown degree of cooperation of the other agents. 

In this work, we will exploit nonlinear opinion dynamics to design a novel adaptive law that adjusts the unknown degree of cooperation of the agents in real-time, using only relative position and velocity measurements from onboard sensors and without the need of any communication infrastructure, prior knowledge on their degree of cooperation or other unfeasible assumptions. As a practical note, all the expressions that involve continuous-time dynamic systems are implemented using an Euler forward discretization scheme, with sample time $T>0$. In this work, we select $T=0.05$s to match the real-time operation of the onboard sensing and computation capabilities of the robot.


\section[AVOCADO: Adaptive Optimal Collision Avoidance driven by Opinion]{AVOCADO: Adaptive Optimal Collision \\ Avoidance driven by Opinion}\label{sec:solution}
\subsection{Optimal Collision Avoidance for Unknown Degrees of Cooperation}\label{subsec:oca}

We propose \textsf{AVOCADO} to address the robot collision avoidance problem in multi-agent dynamic environments. The first step towards deriving \textsf{AVOCADO} is to consider the geometry defined by $\mathsf{VO}_i$ for all $i \in \{1, \dots, \mathsf{N}\}$. Let assume that the relative velocity $\mathbf{v}_r^{\mathsf{pre}} - \mathbf{v}_i \in \mathsf{VO}_i$ for some agent $i$. This means that, if robot and agent $i$ follow $\mathbf{v}_r^{\mathsf{pre}}$ and $ \mathbf{v}_i$ respectively, then collision will happen at any time in $[0, \tau]$. The velocity $\mathbf{v}_r^{\mathsf{pre}}$ is provided by some higher-level planner, while $\mathbf{v}_i$ is known from onboard sensor measurements. The latter can be measured or estimated in real-time from onboard cameras or LiDAR \cite{eppenberger2020leveraging,dong2020real}. Since cooperation of agent $i$ cannot be assumed, then the robot needs to choose a new velocity $\mathbf{v}_r^*$ to avoid the collision.

Let $\partial \mathsf{VO}_i$ denote the boundary of the velocity obstacle set $\mathsf{VO}_i$. Then, the minimum change in velocity that makes \mbox{$\mathbf{v}_r^{\mathsf{pre}} - \mathbf{v}_i \notin \mathsf{VO}_i$} is given by the vector $\mathbf{u}_i$
\begin{equation}\label{eq:u}
    \mathbf{u}_{i} = \underset{\mathbf{v}_r\in\partial\mathsf{VO}_{i}}{\arg\min} || \mathbf{v}_r-(\mathbf{v}_r^{\mathsf{pre}} - \mathbf{v}_i)||-(\mathbf{v}_r^{\mathsf{pre}} - \mathbf{v}_i)
\end{equation}
\begin{figure}
    \centering
    \includegraphics[width=\columnwidth]{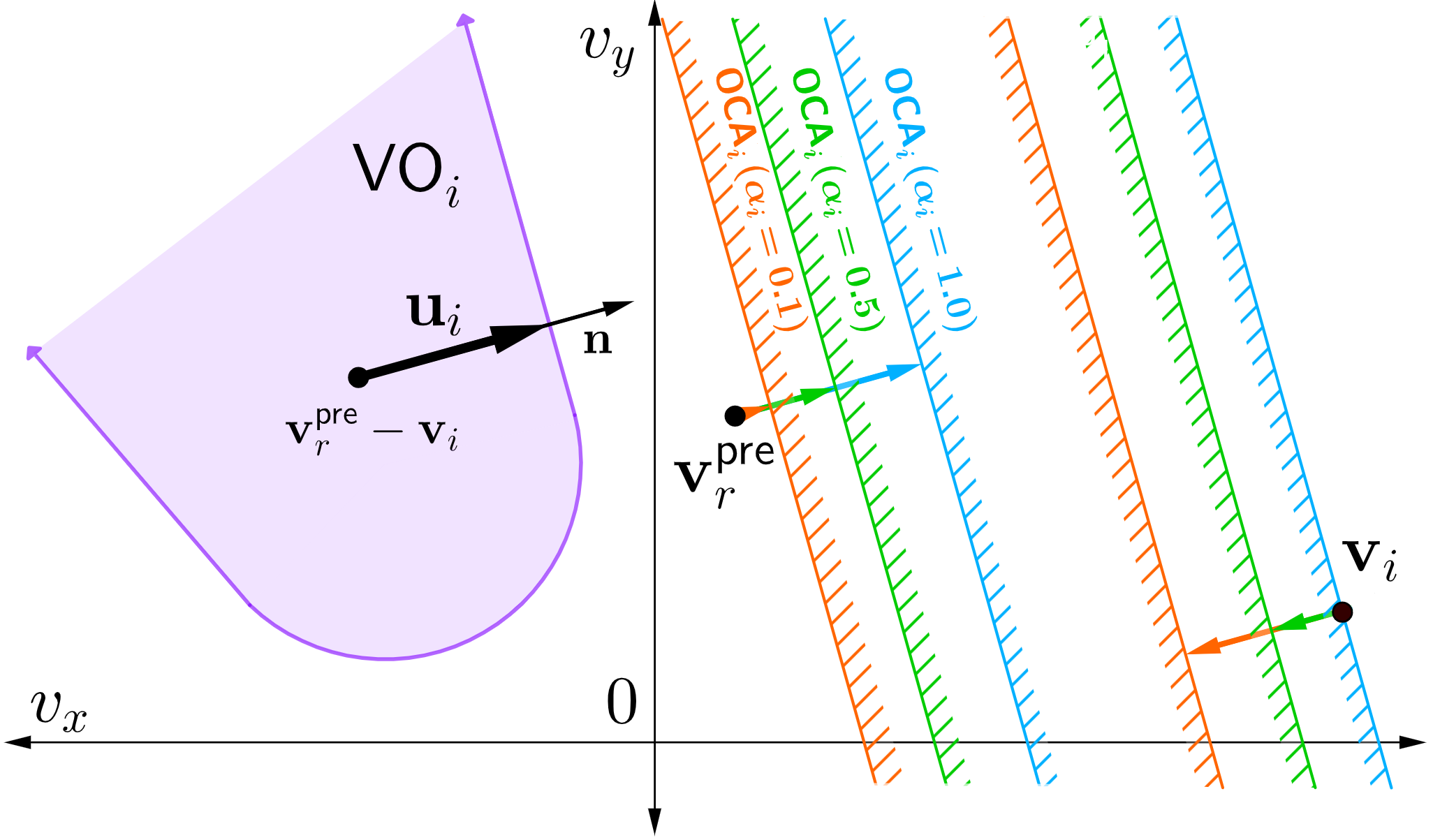}
    \caption{Geometry of $\mathsf{VO}_{i}$ (purple) and admissible velocities' regions associated to different degrees of cooperation (orange, green, blue). \textsf{AVOCADO} selects the closest velocity to the desired one that is inside all the admissible velocity sets. }
    \label{fig:VO_OD}
\end{figure}
Figure \ref{fig:VO_OD} depicts the geometrical reasoning behind Eq. \eqref{eq:u}. Given the Euclidean space of 2D velocities, vector $\mathbf{u}_i$ is a perpendicular vector to the boundary of the velocity obstacle set and with minimum magnitude. To ensure collision avoidance, the robot and agent $i$ together must, at least, exert vector $\mathbf{u}_i$. Let $\alpha_i$ be the degree of cooperation of agent $i$ with respect to the robot such that agent $i$ exerts the $\alpha_i$ part of $\mathbf{u}_i$. Then, to ensure collision avoidance, the robot has to exert, at least, the $(1-\alpha_i)$ part of $\mathbf{u}_i$, which defines a constraint set of admissible velocities
\begin{equation}\label{eq:OCA_set}
    \mathsf{OCA}_{i} = \{\mathbf{v}_r \mid (\mathbf{v}_r-(\mathbf{v}_r^{\mathsf{pre}} + (1-\alpha_i)\mathbf{u}_{i}))\cdot \mathbf{n} \geq 0\},
\end{equation}
where $\mathbf{n}$ is the normal to $\partial\mathsf{VO}_{i}$ at the point given by the head of vector $\mathbf{u}_i$. The set $\mathsf{OCA}_{i}$ defines a half-space constraint of admissible velocities for the robot to avoid collision. As depicted in Fig. \ref{fig:VO_OD}, the greater the value of $\alpha_i$, the more cooperative is agent $i$ and therefore the less severe the restriction on the admissible velocities for the robot. Note that $\alpha_i = 0.5$ corresponds to the \textsf{ORCA} algorithm \cite{van2011reciprocal}, which corresponds to perfect reciprocity between the robot and agent $i$.

Given $\mathsf{OCA}_{i}$ for all $i \in \{1,\hdots, \mathsf{N}\}$, \textsf{AVOCADO} selects the closest velocity to $\mathbf{v}_r^{\mathsf{pre}}$ that respects all the constraints sets at the same time. Formally, this is given by the following linear program
\begin{subequations}\label{eq:optimization}
\begin{alignat}{2}
\mathbf{v}_r^* = &\underset{\mathbf{v}_r}{\arg\min}         \quad ||\mathbf{v}_r^{\mathsf{pre}}-\mathbf{v}_r||
\label{eq:optProb}
\\
   s.t. &\quad \mathbf{v}_r \in \mathsf{OCA}_{i} \quad \forall i \in \mathcal{N}. \label{eq:constraint1}
\end{alignat}
\end{subequations}
Problem \eqref{eq:optimization} is guaranteed to have a solution unless the intersection of the $\mathsf{OCA}_i$ sets is empty. In this case, the robot computes the velocity that minimizes the distance with the half-planes $\mathsf{OCA}_i \quad \forall i\in \mathcal{N}$, i.e., it chooses the ``least unsafe'' velocity in the absence of a velocity that guarantees collision avoidance in that instant (see Section 5.3 of \cite{van2011reciprocal} for further details). We remark that \eqref{eq:optimization} is a natural extension of \textsf{ORCA} to settings where the degree of cooperation of the agents is not necessarily reciprocal. Fig.~\ref{fig:opt-problem} shows a representation of the constrained set \eqref{eq:constraint1}.

\begin{figure}
    \centering
    \includegraphics[width=0.5\linewidth]{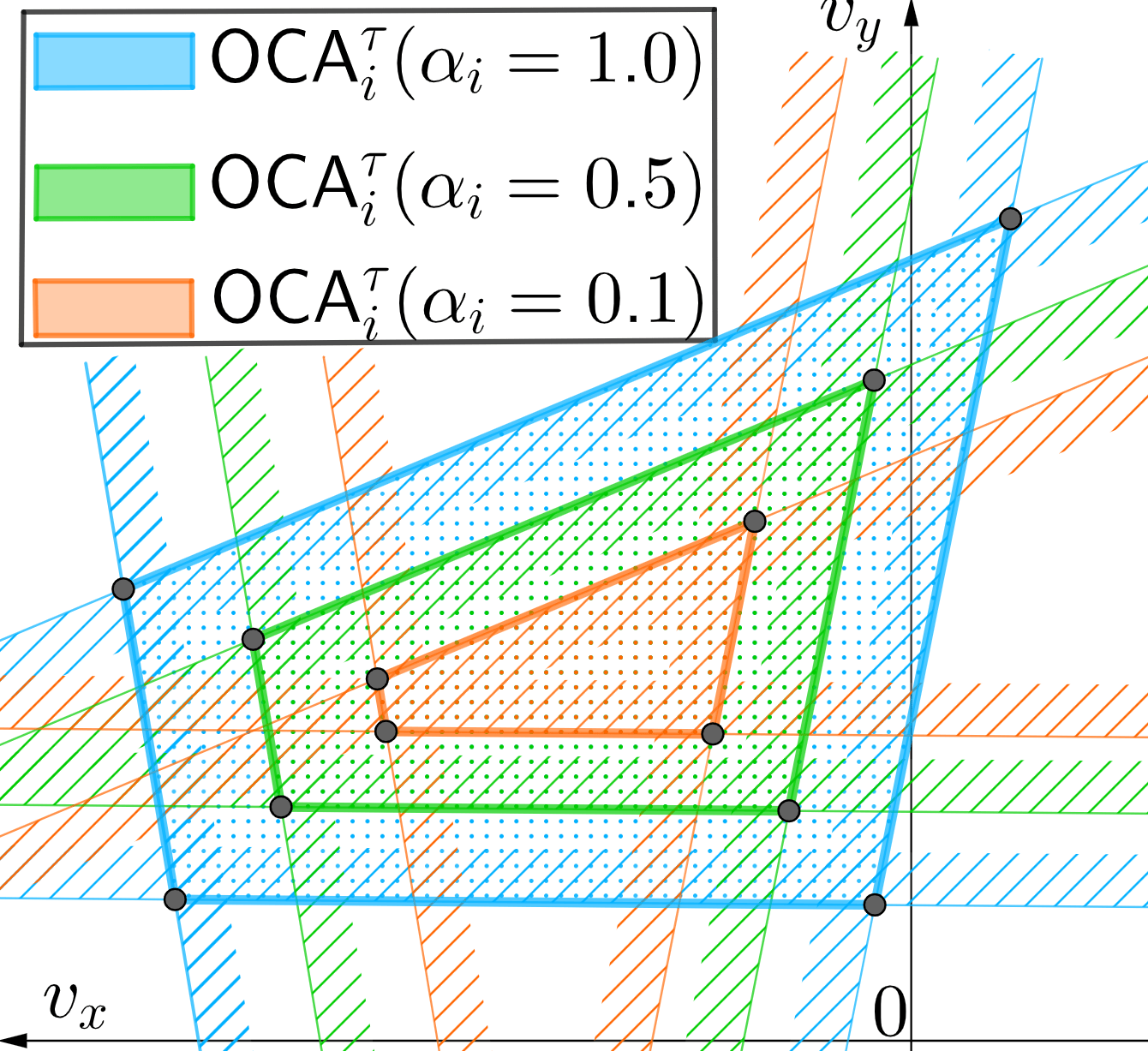}
    \caption{Representation of the intersection set specified by the $\mathsf{OCA}_i$ half-planes of $4$ agents for different values of $\alpha_i$. Note that $\alpha_i$ may be different for each of the agents, but it is considered the same in this figure for a simple visualization. If the intersection set of all half-planes is not empty, the problem is guaranteed to have a solution.}
    \label{fig:opt-problem}
\end{figure}

The challenge is how to estimate $\alpha_i$ for all perceived agents $i\in\mathcal{N}$ from local sensing. Next, we describe how \textsf{AVOCADO} exploits opinion dynamics to adapt online to the degree of cooperation of each agent.

\subsection{Nonlinear Opinion Dynamics Adaptive Law}\label{subsec:NOD}

To derive the adaptive law based on nonlinear opinion dynamics, we firstly define, for convenience, the following change of coordinates:
\begin{equation}\label{eq:change_coordinates}
    {o}_{i} = 2\alpha_{i} - 1 \Leftrightarrow \alpha_{i} = \frac{{o}_{i}+1}{2}.
\end{equation}
The purpose of this change is to shift the degree of cooperation to better suit the definition of nonlinear opinion dynamics provided in Eq. \eqref{eq:opinion_original}. From now on, we refer to ${o}_i$ as the shifted degree of cooperation of agent $i$ with respect to the robot. 

The (shifted) degree of cooperation associated to each agent is a quantity that only involves a pair-wise interaction between the robot and the agent. This motivates the following design for the nonlinear opinion dynamics adaptive law
\begin{equation}\label{eq:NODAL}
    \dot{{o}}_{i} = -d_{i} {o}_{i} + d_i A_i\mathsf{tanh}\left(a_{i} {o}_{i} + c_{i} {e}_i\right) + b_{i}.
\end{equation}
Compared to \eqref{eq:opinion_original}, Eq. \eqref{eq:NODAL} simplifies the consensus term to a weighted sum of the current shifted degree of cooperation ${o}_{i}$ and a quantity ${e}_i$ that will be defined later and which represents an estimate of the opinion agent $i$ has about ${o}_i$. Ideally, the consensus term would include not only ${e}_i$, but also all the other agents. However, this would require some sort of communication infrastructure or $k$-hop neighboring dependence that is not feasible in a real setting that only depends on onboard sensing. Therefore, we assume that the opinion of the robot on the degree of cooperation of agent $i$ is a pair-wise interaction isolated from other agents. In this sense, the design of \eqref{eq:NODAL} manifests a fundamental difference with respect to existing nonlinear opinion dynamics formulations, tailored towards a feasible implementation in a real robotic platform. The quantities $a_i, b_i, c_i, d_i$ are gains with similar meaning to those in Eq. \eqref{eq:opinion_original} and which parameterized the adaptive law. For instance, $b_i=-0.5$ represents some inductive bias on the degree of cooperation of agent $i$ towards not cooperating with the robot. As nonlinear function, we choose $\mathsf{tanh}$ to respect the domain of ${o}_i$. Finally, $A_i$ represents the attention, and is a variable that dynamically evolves with time, in contrast to the algebraic relationship $g({o}_j)$ formulated in Eq. \eqref{eq:opinion_original}. As depicted in Fig. \ref{fig:NOD}, the idea is to design and tune \eqref{eq:NODAL} such that, when the agent $i$ is close to the robot and does not collaborate to avoid the collision, the (shifted) degree of cooperation associated to agent $i$ evolves such that the robot takes the responsibility. On the other hand, if agent $i$ modifies its velocity to avoid collision, then the (shifted) degree of cooperation evolves to enable the robot a more selfish behavior. The attention term $A_i$ appears scaled in \eqref{eq:NODAL} by gain $d_i$. The reason behind this scaling will be explained later, in Section \ref{subsec:tuning}.

\begin{figure}
    \centering
    \includegraphics[width=\columnwidth]{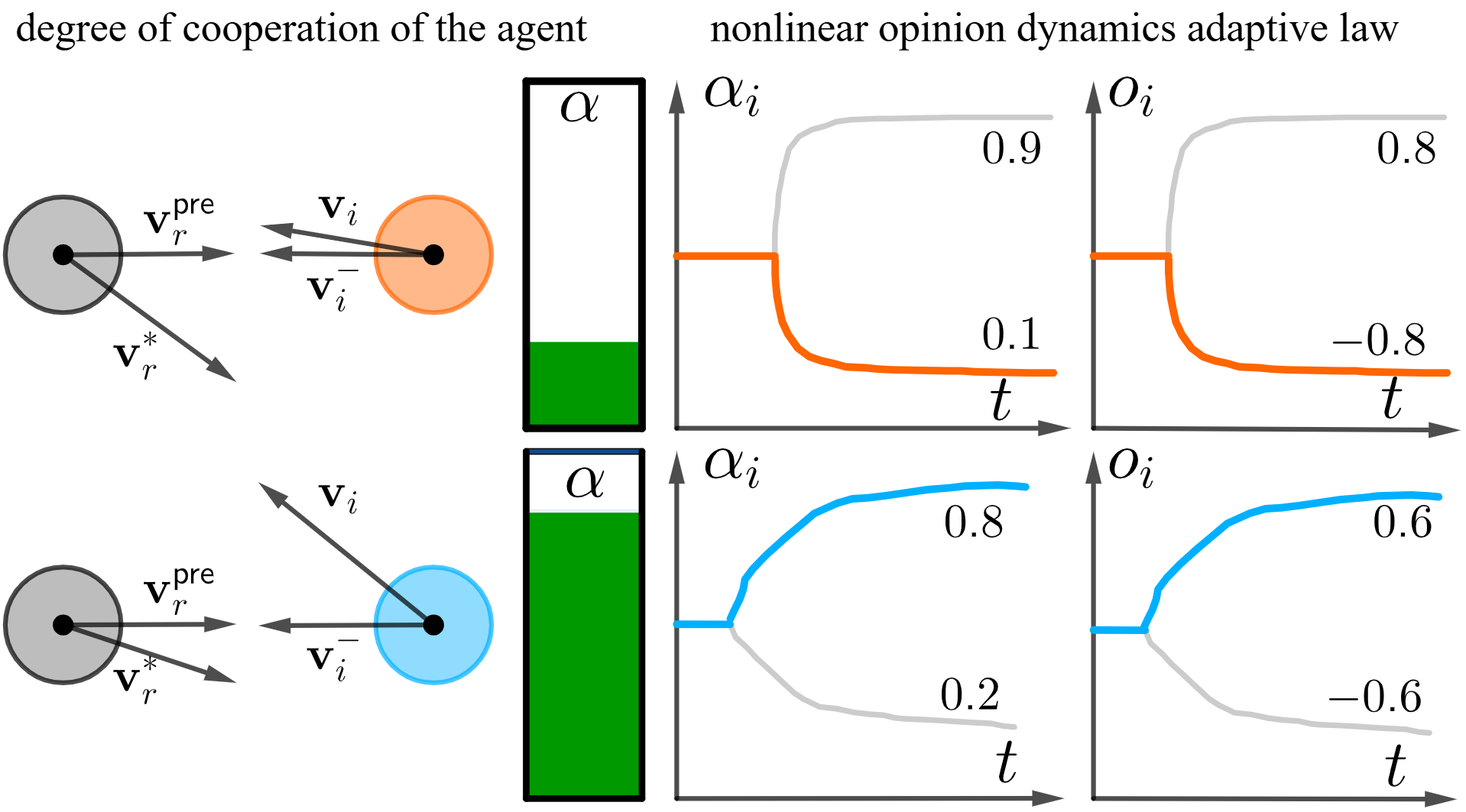}
    \caption{Evolution of the nonlinear opinion dynamics adaptive law for a non-cooperative (top, orange) or a cooperative (bottom, blue) agent.}
    \label{fig:NOD}
\end{figure}

There are two variables in Eq. \eqref{eq:NODAL} that are not readily available at the robot. The first one is the attention, which should capture the importance of adapting the shifted degree of cooperation or if, otherwise, the shifted degree of cooperation must not change. The second quantity is ${e}_i$, which a priori depends on the information at agent $i$. We will later prove how the robot can estimate ${e}_i$ from just onboard sensor observations. Now, we focus on the attention $A_i$. 

The goal is to design an attention mechanism for the nonlinear opinion dynamics adaptive law that evolves such that the shifted degree of cooperation changes when the robot and agent $i$ are close to each other, whereas it remains constant if the robot and the agent are far. The attention $A_i$ is modeled as a dynamic quantity that evolves with time
\begin{equation}
\label{eq:u_dynamics}
\dot{A}_{i} = 
-\delta_i A_{i} +  (1-\delta_i)\mathsf{tanh}(\kappa_i \tau_{i}^{-1}),
\end{equation}
where $\kappa_i > 0$ and $\delta_i \in [0, 1)$ are gains and $\tau_{i}>0$ is the expected collision time. The dynamics in \eqref{eq:u_dynamics} evolve such that $A_i=0$ when the robot and agent $i$ are far from each other, and $A_i$ tends to $1$ when the robot and agent $i$ are close to collision. To compute the time to collision, we exploit the geometry of the problem.

A potential collision between the robot and agent $i$ happens at the intersection between (i) a circumference of radius \mbox{$R= r_s + r_i$} and center $\mathbf{p}_i$ and (ii) the line given by $\mathbf{v}_r^{\mathsf{pre}} - \mathbf{v}_r$ passing through point $\mathbf{p}_r$. Let $\mathbf{p}_r = (p^x_r,p^y_r)$ and \mbox{$\mathbf{p}_i = (p^x_i,p^y_i)$}. The equation of the circumference with center $\mathbf{p}_i$ and radius $R$ is
\begin{equation}\label{eq:tau_1}
    (p^x-p^x_i)^2 + (p^y-p^y_i)^2 = R^2,
\end{equation}
where the circumference is parameterized by $\mathbf{p} = (p^x, p^y)$. Let $\Tilde{\mathbf{v}}_{i}=\mathbf{v}_r^{\mathsf{pre}} - \mathbf{v}_i = (\Tilde{v}^x_{i}, \Tilde{v}^y_{i})$. Then, the intersection between (i) the circumference defined in \eqref{eq:tau_1} and (ii) a line of slope given by vector $\Tilde{\mathbf{v}}_{i}$ and point $\mathbf{p}_r$ happens at:
\begin{equation}
\begin{aligned}\label{eq:tau2}
    (p^x_r + \tau_{i} \Tilde{v}^x_{i}-p^x_i)^2 + (p^y_r + \tau_{i} \Tilde{v}^y_{i}-p^y_i)^2 = R^2. 
\end{aligned}
\end{equation}
The above equation leads to the second order equation
\begin{equation}\label{eq:tau3}
\beta_1\tau^2_i + \beta_2\tau_i + \beta_3 = 0
\end{equation}
with $\beta_1 \kern -0.1cm = \kern -0.1cm   (\Tilde{v}^x_{i})^2 \kern -0.1cm + \kern -0.1cm (\Tilde{v}^y_{i})^2$, $\beta_2 \kern -0.1cm = \kern -0.1cm 2 \Tilde{v}^x_{i} (p^x_r - p^x_i) \kern -0.1cm +\kern -0.1cm  2 \Tilde{v}^y_{i} (p^y_r - p^y_i)$ and $\beta_3 = R^2 - (p^x_r - p^x_i)^2 - (p^y_r - p^y_i)^2$.
The solutions of the quadratic equation can be the following:
\begin{enumerate}
    \item if $\beta_2^2 < 4\beta_1\beta_3$, then there is no intersection and, therefore, no potential collision. As a consequence, we set $\tau_{i} = \infty$.
    \item if $\beta_2^2 = 4\beta_1\beta_3$, then the solution is given by a double root $\tau_i = \frac{-\beta_2}{2\beta_1}$. Otherwise, if $\beta_2^2 > 4\beta_1\beta_3$, then there are two real roots. In both cases, the following reasoning holds: (i) if both roots are positive, then we take the minimum among them; (ii) if both roots are negative, then there is no collision and $\tau_{i} = \infty$; (iii) if one root is positive and one root is negative, then collision has already happened and, therefore, motion is terminated.
    \end{enumerate}
Note that $\tau_i$ is inexpensive to compute and enables time-to-collision checking in real-time.

The next step is to define ${e}_i$ and develop a method to compute it from sensor observations. According to the definition of the nonlinear opinion dynamics in Eq. \eqref{eq:NODAL}, ${e}_i$ represents the opinion that agent $i$ has on the value of the shifted degree of cooperation ${o}_i$. Recalling the reasoning behind the nonlinear opinion dynamics design in \eqref{eq:opinion_original}, the consensus term $a_i {o}_i + c_i {e}_i$ aims at fusing the values that the robot and agent $i$ have on the degree of cooperation ${o}_i$. In the case of the robot, this is simply the current value of ${o}_i$; in the case of agent $i$, this is precisely the opinion that agent $i$ has on the value of the shifted degree of cooperation ${o}_i$. In typical opinion dynamics formulations it is assumed that these values can be communicated. This is not the case in this work, so we need to recover ${e}_i$ using an estimation method that relies solely on onboard sensing. 

Figure \ref{fig:projection} depicts the geometry of the problem. At a certain instant, the robot has a predefined velocity $\mathbf{v}_r^{\mathsf{pre}}$ but executes $\mathbf{v}_r^*$ to avoid collision. The difference between both velocities is $\mathbf{v}_r^* - \mathbf{v}_r^{\mathsf{pre}} = (1-\alpha_i) \mathbf{u}_i$ due to  constraint \eqref{eq:constraint1} in   \eqref{eq:optimization}. 
\begin{figure}
    \centering
    \includegraphics[width=\columnwidth]{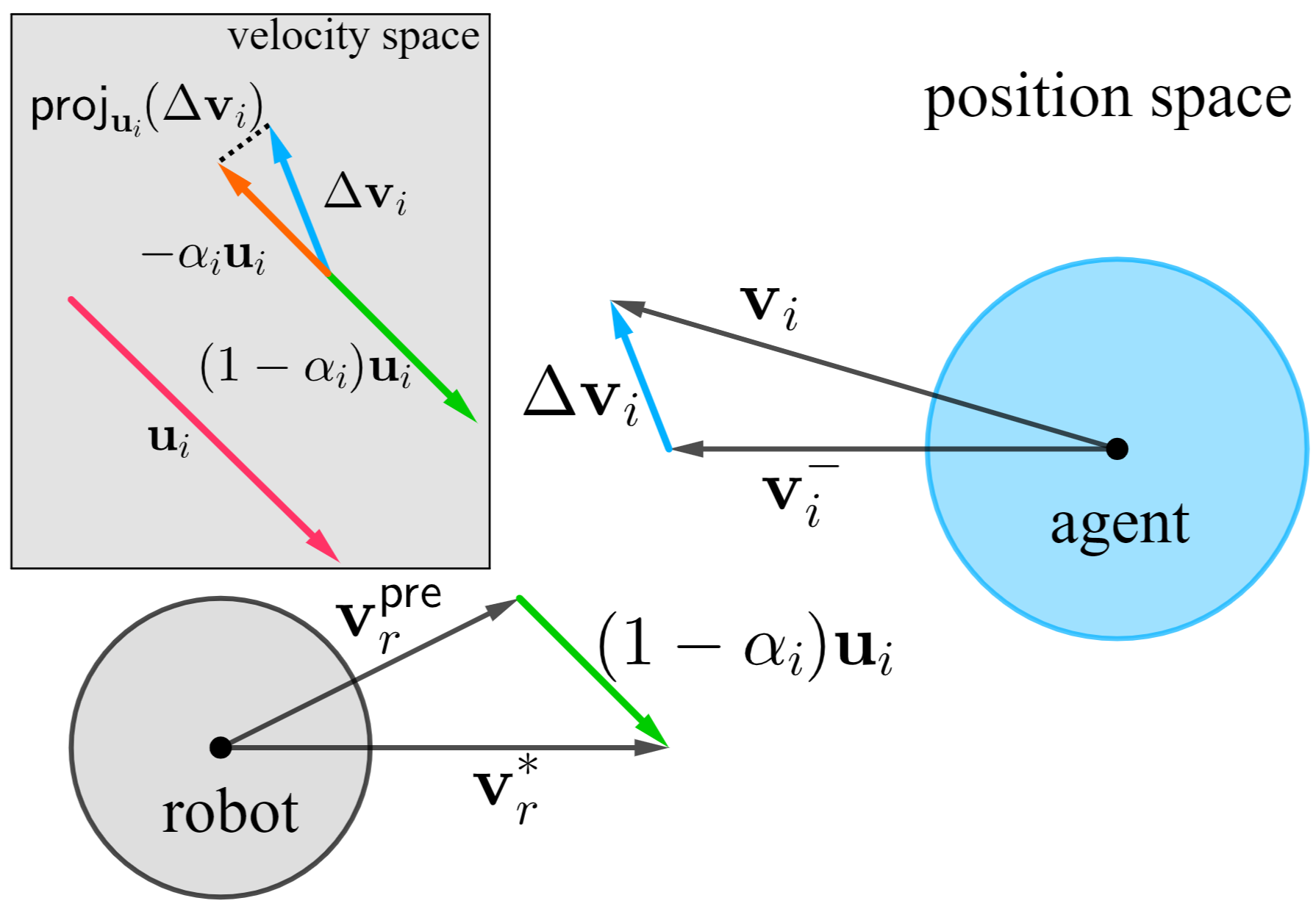}
    \caption{Geometry behind the projection estimator for ${e}_i$.}
    \label{fig:projection}
\end{figure}
On the other hand, agent $i$ executes $\mathbf{v}_i$, and executed $\mathbf{v}^{-}_i$ in the previous instant, where $\mathbf{v}_i^{-} \in \mathbb{R}^2$ denotes the velocity of agent $i$ sensed by the robot at the previous time instant. Since the robot has been in charge of exerting $(1-\alpha_i) \mathbf{u}_i$, the remaining part of $\mathbf{u}_i$ to avoid collision must be exerted by agent $i$, i.e., $\alpha_i\mathbf{u}_i$. Recall that $\mathbf{u}_i$ represents a change in the velocity. Henceforth, the remaining part of $\alpha_i\mathbf{u}_i$ is exerted through the change in velocity of agent $i$, namely, \mbox{$\Delta \mathbf{v}_i = \mathbf{v}_i - \mathbf{v}^{-}_i$}. However, in general, $\Delta \mathbf{v}_i \cdot \alpha_i \mathbf{u}_i \neq 0$, i.e., vectors $\Delta \mathbf{v}_i$ and $\alpha_i \mathbf{u}_i$ are not aligned. This means that it is the component of $\Delta \mathbf{v}_i$ parallel to $\alpha_i \mathbf{u}_i$ the one that exerts the change in velocities to avoid collision. Therefore, if we compute the magnitude of vector component of $\Delta \mathbf{v}_i$ parallel to $\mathbf{u}_i$ and compare it to the magnitude of $\mathbf{u}_i$, then we can obtain $\alpha_i$.

These insights motivate the following projection estimator of ${e}_i$:
\begin{equation}\label{eq:projection1}
    {e}_i = \mathsf{tanh} \left(\varepsilon\left(\frac{||\mathsf{proj}(\Delta\mathbf{v}_i, \mathbf{u}_{i})||}{||\mathbf{u}_{i}||} - \frac{1}{2}\right)\right). 
\end{equation}
The projection operator $\mathsf{proj}(\mathbf{a}, \mathbf{b}) = \frac{\mathbf{a} \cdot \mathbf{b}}{\mathbf{b} \cdot \mathbf{b}} \mathbf{b}$ projects the vector $\Delta\mathbf{v}_i$ on vector $\mathbf{u}_i$. Furthermore, by normalizing with the norm $||\mathbf{u}_i||$, the estimate of $\alpha_i$ is extracted. The other operations translate the estimated degree of cooperation to the estimated shifted degree of cooperation ${e}_i$, where $\varepsilon>0$ tunes how sensitive is the $\mathsf{tanh}$ function to changes in $\Delta\mathbf{v}_i$. In this sense, we remark that Eq. \eqref{eq:projection1} is not a perfect estimator since it only relies on the sensor observations of the change in velocities $\Delta \mathbf{v}_i$. This motivates the use of the function $\mathsf{tanh}$, which simultaneously restricts the domain of the estimation to $[-1, 1]$ and allows a fast evolution of the estimated shifted degree of cooperation. For instance, if the robot observes that agent $i$ moves straight to it and $\Delta \mathbf{v}_i \approx 0$, then this means that ${e}_i \approx -1$ and the robot estimates that agent $i$ is not cooperating. It is worth noting that the projection estimator in \eqref{eq:projection1} directly uses the velocity measurements from onboard sensors, which are subject to disturbances. 
However, designing the adaptive law as a non-linear opinion dynamic naturally provides robustness against small perturbations, and exploits past information in the opinion estimation by means of the memory term $-d_i {o}_i$ \cite{leonard2024fast}. Therefore, the proposed adaptive law can be framed as a nonlinear estimator with a dynamic model (Eq. \eqref{eq:NODAL}) and a measurement model (Eq. \eqref{eq:projection1}). The sensitivity and robustness of the general formulation of the nonlinear opinion dynamics model is analyzed in previous works \cite{bizyaeva2022nonlinear}. The next section analyzes specific considerations on how to tune the parameters of \textsf{AVOCADO}.


\subsection{Nonlinear Opinion Dynamics Adaptive Law Tuning}\label{subsec:tuning}

To achieve fast and effective adaptation to the unknown degree of cooperation of the agent, the nonlinear opinion dynamics adaptive law of \textsf{AVOCADO} must be appropriately parameterized. 

First, we examine the stability properties of the adaptive law and the conditions under which bifurcation, i.e., instability of the initially stable equilibrium point of Eq. \eqref{eq:NODAL}, happens. To do so, we use the expression $\text{sech}(\bullet) = 1/\cosh (\bullet)$.
\begin{proposition}\label{prop:stability}
Let $d_i>0$, $a_i>0$ and $b_i \in [-1, 1]$. Then, ${o}_i = b_i/d_i$ is an unstable equilibrium point of the nonlinear opinion dynamics adaptive law in \eqref{eq:NODAL} if \mbox{$A_i>1/\left(a_i\text{sech}^2 \left(a_i \frac{b_i}{d_i}\right)\right)$}.  
\end{proposition}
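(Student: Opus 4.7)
The plan is to apply Lyapunov's indirect (first) method to the scalar dynamics in \eqref{eq:NODAL}, treating the measurement-driven quantity $c_i e_i$ as a frozen external input for the local analysis, as is standard in the nonlinear opinion dynamics literature \cite{bizyaeva2022nonlinear}. First I would verify that $o_i = b_i/d_i$ is an equilibrium in the relevant regime: substituting it into the right-hand side of \eqref{eq:NODAL} gives $-d_i (b_i/d_i) + d_i A_i \tanh(a_i b_i/d_i + c_i e_i) + b_i = d_i A_i \tanh(a_i b_i/d_i + c_i e_i)$, which vanishes in the baseline case $c_i e_i = 0$ combined with either $A_i = 0$ or $b_i = 0$. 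More generally, $b_i/d_i$ is the unique fixed point of the ``no-attention'' system obtained by zeroing the nonlinear term in \eqref{eq:NODAL}, so the claim characterizes the onset of its instability as the attention $A_i$ grows past a bifurcation threshold.

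Second, I would linearize \eqref{eq:NODAL} around $o_i = b_i/d_i$. Differentiating the right-hand side with respect to $o_i$ and using $\tfrac{d}{dx}\tanh(x) = \text{sech}^2(x)$, the scalar Jacobian reads
\begin{equation*}
J(o_i) \;=\; -d_i + d_i A_i a_i \, \text{sech}^2\bigl(a_i o_i + c_i e_i\bigr),
\end{equation*}
and evaluating at the equilibrium (with $c_i e_i = 0$) yields $J(b_i/d_i) = -d_i + d_i A_i a_i \, \text{sech}^2(a_i b_i/d_i)$. Since the state is one-dimensional, Lyapunov's indirect method renders the equilibrium unstable precisely when $J(b_i/d_i) > 0$. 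Using $d_i > 0$ and $\text{sech}^2(\cdot) > 0$ on $\mathbb{R}$, solving this strict inequality for $A_i$ yields exactly the threshold $A_i > 1/\bigl(a_i \, \text{sech}^2(a_i b_i/d_i)\bigr)$ stated in the proposition.

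The main subtlety I anticipate is not the derivative computation, which is routine, but the justification for freezing $c_i e_i$ during the local analysis, since $e_i$ is itself a sensor-driven signal via \eqref{eq:projection1} and therefore generally time-varying. I would address this by invoking the time-scale separation between the attention/perception signals and the opinion dynamics that is standard in the nonlinear opinion dynamics literature, framing the proposition as a local/instantaneous statement conditioned on $e_i$ being constant on the characteristic time scale $1/d_i$. The marginal case $J(b_i/d_i) = 0$, where linearization is inconclusive, is deliberately excluded by the strict inequality in the hypothesis, so no center-manifold analysis is needed.
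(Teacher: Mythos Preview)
Your proposal is correct and follows essentially the same route as the paper: verify that $o_i=b_i/d_i$ is the equilibrium when $A_i=0$ and $e_i=0$, linearize \eqref{eq:NODAL} to obtain the scalar eigenvalue $\lambda=-d_i+d_iA_ia_i\,\text{sech}^2(a_ib_i/d_i)$, and read off the instability threshold from $\lambda>0$. Your treatment is in fact slightly more careful than the paper's in flagging that $b_i/d_i$ is only an exact equilibrium under the baseline conditions and in justifying why $e_i$ may be frozen, but the computation and conclusion are identical.
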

\begin{proof}
Initially, $A_i=0$, ${e}_i=0$ and $\dot{{o}}_i=0$. Therefore, the robot has an initial opinion ${o}_i=b_i/d_i$ that is an equilibrium of the nonlinear opinion dynamics adaptive law in Eq. \eqref{eq:NODAL}. Nevertheless, whether this equilibrium is stable or unstable depends on the parameters of the adaptive law. Let

\begin{equation}\label{eq:linearization}
   \lambda = -d_i + d_i A_i a_i \text{sech}^2 \left(a_i \frac{b_i}{d_i}\right)
\end{equation}
be the unique eigenvalue of \eqref{eq:NODAL} after linearization at \mbox{${o}_i=b_i/d_i$}. Equilibrium becomes unstable when $\lambda>0$. Assuming that $a_i$, $d_i$ and $b_i$ are fixed, and taking into account that $\text{sech}(\bullet) \in (0, 1]$ in all its domain, ${o}_i=b_i/d_i$ becomes an unstable equilibrium when $A_i>1/\left(a_i\text{sech}^2 \left(a_i \frac{b_i}{d_i}\right)\right)$.
\end{proof}
\begin{corollary}\label{coro:stability}
    If $b_i=0$, then $\text{sech}^2 \left(a_i \frac{b_i}{d_i}\right) = 1$ and the condition on the attention level reduces to $A_i > 1/a_i$. Since $\text{sech}^2 \left(a_i \frac{b_i}{d_i}\right) \in (0, 1)$, $A_i > 1/a_i$ is an upper bound on the attention value beyond which the equilibrium ${o}_i = b_i/d_i$ becomes an unstable equilibrium and, therefore, $A_i > 1/a_i$ is a sufficient condition for ${o}_i = b_i/d_i$ being an unstable equilibrium.
\end{corollary}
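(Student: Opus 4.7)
Since the instability threshold in Proposition \ref{prop:stability} is already explicit, the plan is to specialize it at $b_i=0$ and then relate it to the general case via a bound on $\text{sech}^2$. First, I would evaluate the hyperbolic secant at zero: $\cosh(0)=1$, so $\text{sech}(0)=1$ and therefore $\text{sech}^2(a_i \cdot 0/d_i) = 1$. Plugging this into the condition $A_i > 1/(a_i\,\text{sech}^2(a_i b_i/d_i))$ inherited from Proposition \ref{prop:stability} immediately reduces the instability criterion to $A_i > 1/a_i$, establishing the first part of the corollary.

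For the second claim I would invoke the elementary bound $\text{sech}^2(x) \in (0,1]$ on $\mathbb{R}$, which follows from $e^x + e^{-x} \geq 2$ with equality iff $x=0$. This yields $1/(a_i\,\text{sech}^2(a_i b_i/d_i)) \geq 1/a_i$, with equality precisely at $b_i=0$. In other words, the attention threshold from Proposition \ref{prop:stability} is minimized at $1/a_i$, attained exactly in the unbiased case. Restricted to the $b_i=0$ regime that the corollary considers, the condition $A_i > 1/a_i$ is therefore both necessary and sufficient for ${o}_i = b_i/d_i$ to be an unstable equilibrium, as claimed.

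I do not anticipate a serious obstacle: the argument reduces to a one-line substitution combined with a textbook monotonicity remark on $\text{sech}^2$. The only conceptual subtlety worth flagging is semantic. The corollary's wording that ``$A_i > 1/a_i$ is an upper bound on the attention value beyond which the equilibrium becomes unstable'' should be read as saying that $1/a_i$ is the smallest attention threshold achievable in Proposition \ref{prop:stability}, and therefore the weakest sufficient condition for instability; this minimum is realized at $b_i=0$, which is exactly the setting on which the sufficiency conclusion is framed.
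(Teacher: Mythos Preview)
Your proposal is correct and matches the paper's reasoning, which is embedded directly in the corollary statement rather than given as a separate proof: substitute $b_i=0$ to get $\text{sech}^2(0)=1$, then use the range of $\text{sech}^2$ to compare thresholds. Your flagging of the semantic subtlety is apt---the corollary's phrasing is loose, and your reading (that the sufficiency claim is framed at $b_i=0$, where $1/a_i$ is the exact threshold) is the only one under which the statement holds as written.
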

When the robot and agent $i$ are far from each other, the dynamics of the attention in \eqref{eq:u_dynamics} are designed such that $A_i=0$. When the robot and agent $i$ approach each other in a trajectory of potential collisions, $\mathsf{tanh}(\kappa_i\tau_i^{-1}) > 0$ and attention evolves towards $A_i>0$, with a speed determined by $\delta_i$. Since the attention dynamics in \eqref{eq:u_dynamics} is a linear system with bounded input, the attention is stable with equilibrium given by \mbox{$A_i^{\mathsf{eq}} = \mathsf{tanh}(\kappa_i(\tau_i^{\mathsf{eq}})^{-1})$}. Therefore, $\kappa_i$ must be large enough to ensure that $A_i>\frac{1}{a_i}$ when the time to collision $\tau_i$ is large enough to avoid collision. Moreover, since $A_i \in [0,1]$, $a_i$ must be designed such that $\frac{1}{a_i} < 1$ to ensure that the condition $A_i>\frac{1}{a_i}$ is possible for some $\tau_i$.   

Now, we examine the stability properties of the nonlinear opinion dynamics adaptive law in Eq. \eqref{eq:NODAL} after bifurcation.
\begin{proposition}\label{prop:equilibriums}
    Let, $d_i>0$, $a_i>0$ and $b_i \in [-1, 1]$. Assume that $A_i>\frac{1}{a_i}$ holds. Then, the nonlinear opinion dynamics adaptive law in Eq. \eqref{eq:NODAL} exhibits two stable equilibrium points, given by the following expression:
    \begin{equation}\label{eq:equilibriums}
      {o}_i^{\mathsf{eq}} = A_i\mathsf{tanh}(a_i {o}_i^{\mathsf{eq}} + c_i {e}_i) + \frac{b_i}{d_i}.  
    \end{equation}
    Besides, $|{o}_i^{\mathsf{eq}}| \in \left[\frac{|b_i|}{d_i} - 1, \frac{|b_i|}{d_i} + 1\right]$.
\end{proposition}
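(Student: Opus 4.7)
The plan is to derive the fixed-point condition by setting $\dot{o}_i = 0$ in \eqref{eq:NODAL}, then to count equilibria from the sigmoidal shape of the right-hand side of \eqref{eq:NODAL} viewed as a scalar field $f(o_i)$, and finally to read off the magnitude bounds from the boundedness of $\mathsf{tanh}$. Setting $\dot{o}_i = 0$ and dividing by $d_i > 0$ yields \eqref{eq:equilibriums} directly, so every equilibrium has the claimed form.

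Next I would study $f(o_i) := -d_i o_i + d_i A_i \mathsf{tanh}(a_i o_i + c_i e_i) + b_i$, whose derivative is $f'(o_i) = -d_i + d_i A_i a_i \text{sech}^2(a_i o_i + c_i e_i)$. Because $\text{sech}^2$ peaks at $1$ at $o_i = -c_i e_i/a_i$ and decays to $0$ at infinity, the hypothesis $A_i > 1/a_i$ ensures that $f'$ is strictly positive on an open interval around $-c_i e_i/a_i$ and strictly negative outside a bounded set; hence $f$ is decreasing, then increasing, then decreasing on $\mathbb{R}$. Combining $f(\pm\infty) = \mp\infty$ with the conclusion of Corollary \ref{coro:stability} that $f$ has a zero in the increasing interval, $f$ must have exactly three zeros: two in the decreasing regions, where $f'(o_i^{\mathsf{eq}}) < 0$ guarantees asymptotic stability, and one in the increasing region, which is the unstable equilibrium identified by Corollary \ref{coro:stability}. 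The two outer zeros are the stable equilibria of the proposition, and they satisfy \eqref{eq:equilibriums} by construction. The magnitude bound then follows from $|\mathsf{tanh}(\cdot)| < 1$ and $A_i \in [0,1]$: applying the triangle and reverse triangle inequalities to \eqref{eq:equilibriums} gives $|b_i|/d_i - 1 \leq |o_i^{\mathsf{eq}}| \leq |b_i|/d_i + 1$.

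The main obstacle I anticipate is the bifurcation-counting step: to assert that $A_i > 1/a_i$ alone suffices to produce three equilibria, one must ensure that the vertical shift of $f$ neither lifts the increasing branch entirely above the horizontal axis nor drops it entirely below. A clean way to formalize this is to substitute $y = a_i o_i + c_i e_i$ and analyze $h(y) = y - a_i A_i \mathsf{tanh}(y)$, whose local extrema when $a_i A_i > 1$ explicitly bound the admissible range of the offset $a_i b_i/d_i + c_i e_i$ for which three equilibria coexist; otherwise one leans on the standard bifurcation structure of nonlinear opinion dynamics already invoked in Proposition \ref{prop:stability} and argues by continuity from that reference case.
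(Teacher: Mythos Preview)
Your approach is more explicit than the paper's. The paper does not carry out the equilibrium-counting analysis you perform: it simply notes that at $A_i = 1/a_i$ a pitchfork bifurcation occurs (citing the nonlinear opinion dynamics literature), observes that the forgetting term $-d_i o_i$ keeps trajectories bounded so equilibria must exist after bifurcation, imposes $\dot{o}_i = 0$ to obtain \eqref{eq:equilibriums}, and then reads off the magnitude bound from $A_i \in [0,1]$ and $|\mathsf{tanh}|\leq 1$ exactly as you do. Your calculus-based analysis of the scalar field $f$ is therefore a genuinely different and more self-contained route: it replaces the appeal to external bifurcation theory with a direct shape argument, and it actually explains \emph{why} the two surviving equilibria are stable (namely $f'<0$ there), which the paper does not spell out.

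One caution: your invocation of Corollary~\ref{coro:stability} to place a zero of $f$ inside the increasing region is imprecise. That corollary asserts instability of $o_i = b_i/d_i$, but $b_i/d_i$ is an equilibrium only when $A_i = 0$ (or when the argument of $\mathsf{tanh}$ happens to vanish there); for general $A_i$ and $e_i$ it need not be a zero of $f$ at all, so the corollary does not hand you the middle root. What you actually need---and correctly identify in your final paragraph---is the sign condition that the local maximum of $f$ is positive and the local minimum negative, equivalently the bound on the offset via the extrema of your auxiliary $h(y)$. The paper sidesteps this entirely by delegating the bifurcation structure to the cited reference, so the gap you flag is present in both arguments; yours has the merit of making it visible and sketching the remedy.
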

\begin{proof}
From the assumption that $A_i>\frac{1}{a_i}$ holds, in the transition   $A_i=\frac{1}{a_i}$ bifurcation happens \cite{bizyaeva2022nonlinear}.
The dynamics in Eq, \eqref{eq:NODAL} are bounded due to the term $-d_i {o}_i$, so equilibrium(s) point(s) exist(s) after bifurcation. By enforcing the equilibrium condition $\dot{{o}}_i=0$ in Eq. \eqref{eq:NODAL}, Eq. \eqref{eq:equilibriums} is obtained.
We recall that $A_i \in [0, 1]$ and $|\mathsf{tanh}(\bullet)| \in [0, 1]$. Therefore, the absolute value of the equilibrium $|{o}_i^{\mathsf{eq}}|$ is bounded and such that $|{o}_i^{\mathsf{eq}}| \in \left[\frac{|b_i|}{d_i} - 1, \frac{|b_i|}{d_i} + 1 \right]$.
\end{proof}
If there is no bias, $b_i = 0$ and 
${o}_i^{\mathsf{eq}} \in \left[- 1, 1 \right]$; otherwise, the bias introduces a shift in the domain that naturally encodes a priori knowledge on the degree of cooperation of the agent. Proposition \ref{prop:equilibriums} motivates the use of the scaling factor $d_i$ in the attention $A_i$: in the absence of this factor, the expression of the equilibrium points is ${o}_i^{\mathsf{eq}} = \frac{A_i\mathsf{tanh}(a_i {o}_i^{\mathsf{eq}} + c_i {e}_i) + b_i}{d_i}$, leading to an asymmetrical domain for $|{o}_i^{\mathsf{eq}}|$ with respect to $b_i/d_i$, i.e., not symmetrical with respect to the initial equilibrium before bifurcation. This asymmetry would derive in an intrinsic bias towards any of the two subsequent equilibrium points, which is an undesired effect that we overcome by including the scaling factor $d_i$.

Finally, $c_i$ shall be designed to guarantee that the nonlinear term $\mathsf{tanh}(\bullet)$ in the adaptive law is enough sensitive to the behavior of agent $i$, reacting fast enough to avoid collisions. On the other hand, $\varepsilon$ in~\eqref{eq:projection1} tunes how sensitive is the estimator of ${e}_i$ to changes in the speed of agent $i$. 

In summary, the nonlinear opinion dynamics adaptive law depends on six parameters: (i) $\varepsilon>0$ determines how sensitive is the estimator of ${e}_i$ and should be chosen large enough to ensure ${e}_i \approx 1$ with enough time to avoid collision but small enough to avoid noisy estimations; (ii) $\kappa_i > 0$ and $\delta_i \in [0, 1)$ determines how fast the adaptive law approaches the bifurcation point, and therefore establishes when the robot decides its opinion about the degree of cooperation of agent $i$; (iii) $d_i > 0$ sets the convergence speed of the dynamics of the adaptive law, associated to the forgetting factor $-d_i {o}_i$; (iv) $a_i$ determines the attention level necessary to reach the bifurcation, taking into account the value of $d_i$; (v) $b_i$ is a bias that encodes potential prior knowledge on the (shifted) degree of cooperation of agent $i$; (vi) $c_i$ weights the importance of the estimate ${e}_i$ compared to the current value of $a_i {o}_i$ such that, if $a_i >> c_i$, the robot tends to preserve its opinion over the (shifted) degree of cooperation, or heavily relies on the estimator if $c_i >> a_i$.

\subsection{Exploiting attention to avoid symmetry deadlocks}\label{subsec:symmetries}

The previous sections address the problem of unknown degree of cooperation in collision avoidance. However, there is an additional issue, typical in \textsf{VO}-based methods like \textsf{ORCA}, that is characterized by navigation deadlocks under symmetrical configurations \cite{battisti2020velocity}. Among other mechanisms, \textsf{VO}-based methods usually solve symmetry deadlocks by injecting some arbitrary noise in the perceived velocity, where this amount is typically handcrafted. Instead, we propose an adaptive noise injection mechanism that exploits the attention $A_i$ of the opinion dynamics to correlate the amount of noise with safety.

Specifically, our solution consists in injecting some small noise to 
the perceived velocity of agent $i$ when the agent is far from the robot, decreasing the noise to zero when the robot and the agent are close to a potential collision. When the robot and the agent are far from each other, the impact of noise is negligible because attention $A_i \approx 0$ and bifurcation does not happen. This preserves the neutrality of the opinion on the (shifted) degree of cooperation. When the robot and the agent are close to collision, noise is removed, so the performance guarantees on the adaptive law and the collision avoidance are preserved.

Mathematically speaking, we reformulate Eq. \eqref{eq:u} as 
\begin{equation}\label{eq:u_v2}
\mathbf{u}_{i} = \underset{\mathbf{v}_r\in\partial\mathsf{VO}_{i}}{\arg\min} || \mathbf{v}_r-(\mathbf{v}_r^{\mathsf{pre}} - \mathbf{v}_i^\mathsf{\mu})||-(\mathbf{v}_r^{\mathsf{pre}} - \mathbf{v}_i^\mathsf{\mu}),
\end{equation}
where
\begin{equation}\label{eq:noise}
    \mathbf{v}_i^\mathsf{\mu} = \mathbf{v}_i + (1-A_i)\mu(\sigma),
\end{equation}
with $\mu(\sigma) \sim \mathcal{U}(-\sigma,\sigma)$ a uniformly distributed perturbation bounded by $\sigma > 0$.

In essence, our method adds a uniformly distributed noise to the velocity of agent $i$ sensed by the robot. This noise depends on the attention mechanism developed for the nonlinear opinion dynamics adaptive law. This perturbed velocity is the one used to compute vector $\mathbf{u}_i$ and the $\textsf{OCA}_i^{\tau}$ constraint sets. Hence, the noise perturbs the potential symmetry between $\mathbf{v}^{\mathsf{pre}}_r$ and $\mathbf{v}_i$ only when the robot and agent $i$ are sufficiently close to each other. The intensity of the perturbation increases when the distance between robot and agent $i$ decreases in order to ensure that there is an effective deadlock breaking when robot and agent $i$ approach each other. It is interesting to remark that this deadlock breaking approach can also be applied to other \textsf{VO}-based methods such as \textsf{ORCA}, since it only requires the implementation of the attention mechanism in Eq. \eqref{eq:u_dynamics}.

\textsf{AVOCADO} is summarized in Algorithm \ref{al:avocado}. By combining a geometrical approach with a nonlinear opinion dynamics adaptive law, \textsf{AVOCADO} is able to adapt in real-time to the unknown degree of cooperation of the agents in a multi-agent setting. The unknown degree of cooperation of the agent induces an additional layer of complexity in collision avoidance, since, as it is shown in Section \ref{sec:simulations}, an incorrect assumption on the degree of cooperation leads to collision to existing geometrical, learning and model predictive approaches.
On the other hand, \textsf{AVOCADO} is agnostic to the nature of the agents the robot encounters, so the robot can take a fast and flexible decision irrespective of the dynamic model of the agent.
\textsf{AVOCADO} explicitly couples prediction with planning by estimating the effect its motion is producing in the other agents it encounters through the degree of cooperation $\alpha_i$ $\forall i \in \mathcal{N}$. In other words, the degree of cooperation $\alpha_i$, unless $0$, expresses the fact that the motion of agent $i$ is influenced by the velocity exerted by the robot. By designing an adaptive law that estimates $\alpha_i$  in the continuous domain  $\alpha_i \in [0,1]$, our method is able to deconflict the undesirable effects of such coupling by predicting how the agent reacts to the robot motion and plan the collision avoidance maneuver accordingly. No communication is involved, only relying on the perceived position and velocity of the nearby agents. All the mathematical operations are inexpensive to compute, except for the optimization program in line 11. Nonetheless, since it entails a linear program, with an efficient implementation it is proven that a robot can process thousands of nearby agents in milliseconds \cite{van2011reciprocal}. We validate the computational simplicity of \textsf{AVOCADO} in Section \ref{sec:simulations}. 

\begin{algorithm}
\caption{\textsf{AVOCADO}}\label{al:avocado}
\begin{algorithmic}[1]
\STATE \textbf{Parameters}: $r_s, r_p>0$, $\varepsilon>0$, $\kappa_i > 0$, $d_i > 0 $, $a_i, b_i, c_i \in \mathbb{R}$, $\sigma>0$, $\delta_i \in [0, 1)$
\FOR{all $t$}
    \STATE Get $\mathbf{p}_r$ and $\mathbf{v}_r^{\mathsf{pre}}$ from a higher-level planner.
    \STATE Measure $\mathbf{p}_i, \mathbf{v}_i \quad \forall i \in \mathcal{N}$ from sensors and get the stored $\mathbf{v}_i^{-} \quad \forall i \in \mathcal{N}$.
    \STATE Update the attention level $A_i$ using $\mathbf{p}_r$, $\mathbf{v}_r^{\mathsf{pre}}$, $\mathbf{p}_i$ and $\mathbf{v}_i$ through Eqs. \eqref{eq:u_dynamics} and \eqref{eq:tau3}, for all $i \in \mathcal{N}$.
    \STATE Apply Eq. \eqref{eq:noise} to compute $\mathbf{v}_i^{\mathsf{\mu}} \quad \forall i \in \mathcal{N}$.
    \STATE Calculate $\mathbf{u}_i$ through Eq. \eqref{eq:u_v2}.
    \STATE Estimate ${e}_i$ through Eq. \eqref{eq:projection1}.
    \STATE Update the opinion on the shifted degree of cooperation using the adaptive law in Eq. \eqref{eq:NODAL}.
    \STATE Build all admissible sets $\mathsf{OCA}_i$ through Eq. \eqref{eq:OCA_set}, for all $i \in \mathcal{N}$.
    \STATE Solve optimization problem \eqref{eq:optimization} to obtain $\mathbf{v}_r^*$.
    \STATE Apply $\mathbf{v}_r^*$ and store $\mathbf{v}_i$ as $\mathbf{v}_i^{-} \quad \forall i \in \mathcal{N}$.
\ENDFOR 
\end{algorithmic}
\end{algorithm}


\section{Simulated results}\label{sec:simulations}
First, we evaluate \textsf{AVOCADO} in simulated scenarios. In Section \ref{sec:experiments} we describe the results obtained in real settings with robots and humans.

In Section \ref{subsec:eval_AVOCADO} we analyze the impact of the different parameters in the behavior of $\textsf{AVOCADO}$. Next, in Section \ref{subsec:multiagent} we conduct extensive multi-agent simulations using two navigation settings. For each setting, we compare qualitatively and quantitatively $\textsf{AVOCADO}$ with existing state-of-the-art planners against other cooperative robots and non-cooperative agents. In all the simulated scenarios, we consider that robots are cooperative, i.e., they act using the same motion planner under comparison. For instance, if \textsf{AVOCADO} is assessed, then all robots use \textsf{AVOCADO}. Importantly, in our paper, a robot being cooperative does not mean knowledge on the degree of cooperation nor communication exchange, but just the robot uses a planner to avoid collisions.
In contrast, agents are non-cooperative, i.e., they are blind against the robots. Non-cooperative agents resolve \eqref{eq:optimization}, but their neighbor set $\mathcal{N}_i$ never includes the robots, only other agents. This is done to ensure collision avoidance among agents but complete non-cooperation with the robots, so the behavior is more complex than a simple dynamic obstacle with a fixed trajectory. A mixed cooperative/non-cooperative navigation scenario involves cooperative robots and non-cooperative agents. Finally, robots and agents have a disc shape of radius $r_s=0.2$m and a sensor range of $r_p=2.5$m.

The maximum velocity, $\mathbf{v}_r^{\mathsf{max}}$, of the robots is set to $1$m/s and $\mathbf{v}_i^{\mathsf{max}}$ of the agents is set to $0.75$m/s. The agents have a lower velocity than the robots, allowing the robots to escape if the agents move quickly towards them. For robots using \textsf{AVOCADO}, $\mathbf{v}_r^{\mathsf{pre}}=\mathbf{v}_r^{\mathsf{max}}\frac{\mathbf{p}_{r,j}^* - \mathbf{p}_{r,j}^t}{||\mathbf{p}_{r,j}^* - \mathbf{p}_{r,j}^t||}$, where $\mathbf{p}_{r,j}^*$ denotes the desired goal for robot $j$ and $\mathbf{p}_{r,j}^t$ is the position of robot $j$ at time $t$. Therefore, robots using \textsf{AVOCADO} are not helped by any higher-level planner to choose the desired velocity.

\subsection{Head-on scenarios}\label{subsec:eval_AVOCADO}

\begin{figure}
 \centering
 \begin{tabular}{@{}cc@{}}
      
      \footnotesize{a) $b_i/d_i=-0.9$} & \footnotesize{b) $b_i/d_i=-0.9$} \\
      \includegraphics[width=0.45\linewidth]{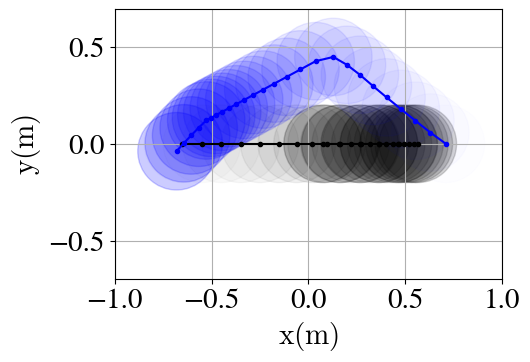} & 
      \includegraphics[width=0.45\linewidth]{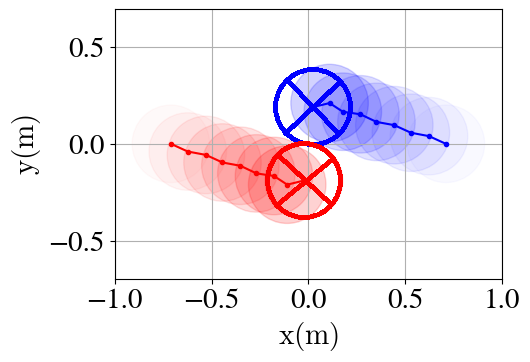}\\
      \footnotesize{c) $b_i/d_i=0$} & \footnotesize{d) $b_i/d_i=0$}\\
     \includegraphics[width=0.45\linewidth]{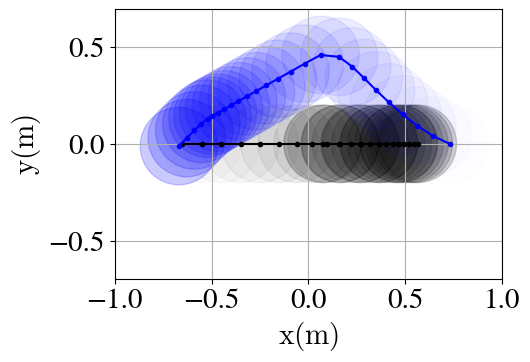} & 
      \includegraphics[width=0.45\linewidth]{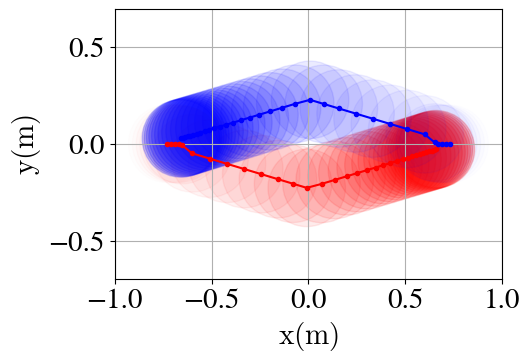}\\
      \footnotesize{e) $b_i/d_i=0.9$} & \footnotesize{f) $b_i/d_i=0.9$} \\
     \includegraphics[width=0.45\linewidth]{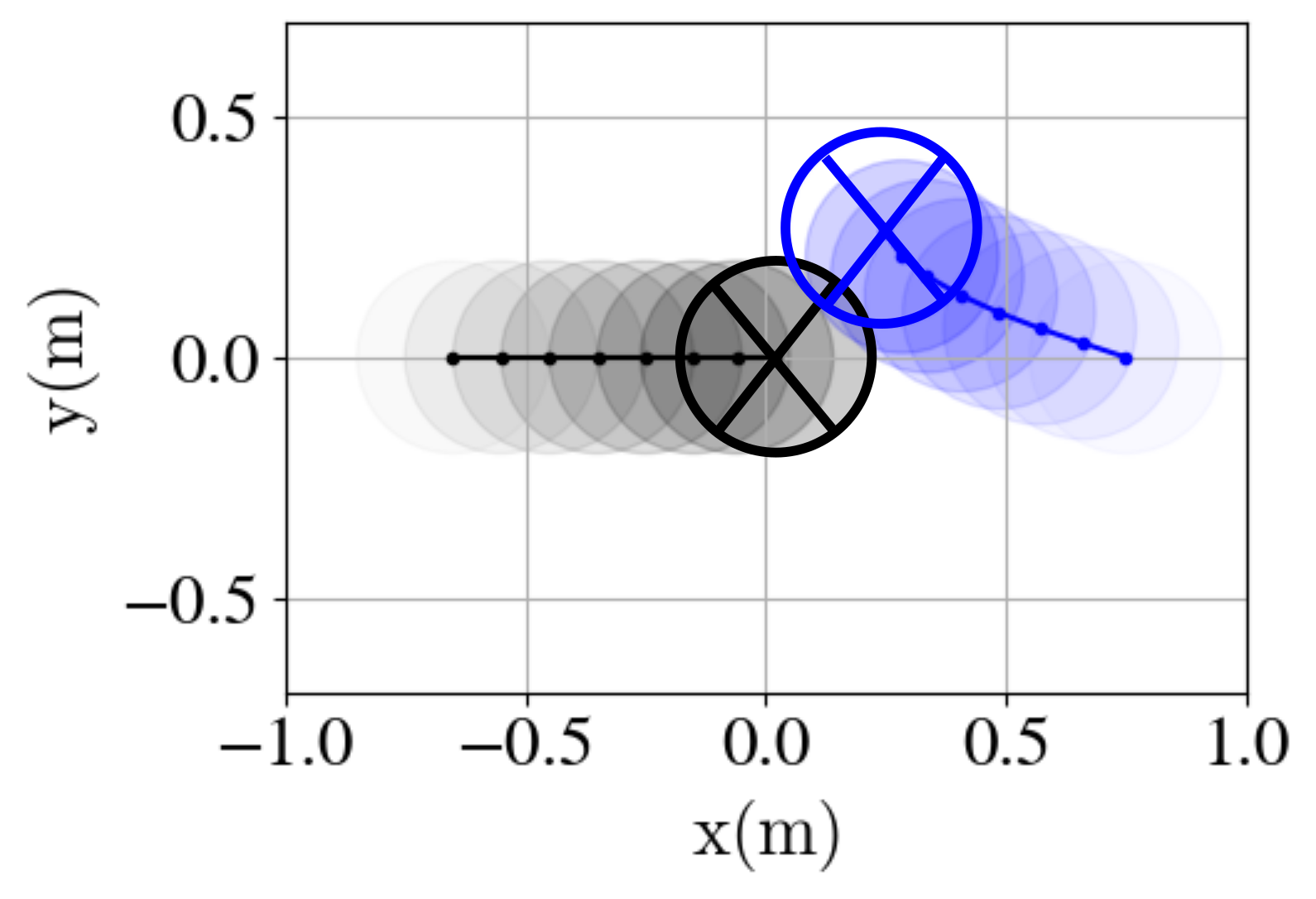} & 
      \includegraphics[width=0.45\linewidth]{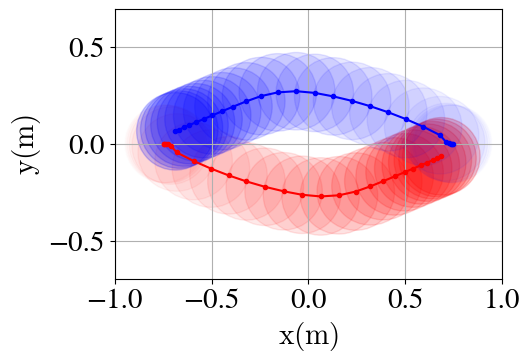}\\
 \end{tabular}
 \caption{Head-on scenarios for different ratio $b_i/d_i$ and degree of cooperation. The robot (blue) navigates using \textsf{AVOCADO}, whereas the agent is either non-cooperative (black, left column) or navigates using \textsf{AVOCADO} (red, right column). 
 The motion of the robot and the agent is represented with disks of increasing transparency as time advances, and solid dots represent the center of the disks. Collision is marked with crosses surrounded by circles}.
 \label{fig:bias}
 \end{figure}
 
We simulate head-on scenarios to analyze the impact of the different parameters in \textsf{AVOCADO}. 
A robot and either a cooperative robot or a non-cooperative agent face each other, where their goals are the starting position of the other. Fig.~\ref{fig:bias} illustrates these scenarios for different values of bias $b_i$ (a, c, e for the non-cooperative agent; b, d, f for the cooperative robot). The robot using \textsf{AVOCADO} starts, in scenarios a) and b), assuming that the other robot or agent is non-cooperative with a low degree of cooperation; in scenarios c) and d), the robot initially assumes reciprocity, with a degree of cooperation equal to $0.5$; and in e) and f) the robot initially assumes a great degree of cooperation. Extreme bias values can lead to collisions, as in b) and e). In a), the robot effectively avoids the collision with the agent, as the highly negative biased initial degree of cooperation (pessimistic robot) corresponds to reality. However, when the bias is incorrect, even if the agent is cooperative as in b), the robot takes some time to recover from the incorrect prior, resulting in collision. The opposite conflict arises when there is an initial highly positive bias in the degree of cooperation (optimistic). In case e), the robot is unable to correct the estimated degree of cooperation fast enough to realize that the agent is non-cooperative and take all the responsibility to avoid collision. Collision is avoided in case f), but the extreme bias leads to sub-optimal trajectories. Finally, a balanced bias allows the robot to react effectively to non-cooperative agents in c) and cooperative agents in d).

The impact of the other parameters is evaluated by studying the evolution of ${o}_i$ when the parameters change. The default parameters are specified in Table~\ref{tab:parameter-values}. Now, we focus on the non-cooperative head-on scenario.

\begin{table}[ht]
    \centering
    \begin{tabular}{c|c|c|c|c|c|c|c}
        Parameter & $a_i$ &
         $b_i$ & 
         $d_i$ & 
         $\kappa_i$ & 
         $\varepsilon$ &
         $\delta_i$ & 
         $b_i$ 
         \\
         \hline
         Value & 0.3 & 0.7 & 2 & 14.15 & 3.22 & 0.57 & 0 
    \end{tabular}
    \caption{Default parameter values}
    \label{tab:parameter-values}
\end{table}

Fig.~\ref{fig:alphas} depicts the evolution of ${o}_i$ for the different cases. As the robot detects that agent $i$ does not cooperate, ${o}_i$ decreases, trying to reach ${o}_i=-1$. Finally, when the collision is avoided and $A_i=0$, ${o}_i$ recovers to the initial value. In some cases, as in Fig.~\ref{fig:alphas}e) with $\delta=0.1$, ${o}_i$ does not recover to the initial value because agent $i$ leaves the sensor range of the robot before letting the nonlinear opinion dynamics to converge to ${o}_i=b_i/d_i$. First, regarding $d_i$ (Eq. \eqref{eq:NODAL}), greater values imply a greater forgetting factor and therefore, a faster convergence. For very large values ($d_i=6.5$), the nonlinear opinion dynamics are too reactive to the estimate ${e}_i$ and to the noise introduced (Eq.~\ref{eq:noise}), leading to abrupt changes in ${o}_i$. Second, regarding $a_i, c_i$ (Eq. \eqref{eq:NODAL}), since $c_i = 1- a_i$ and $a_i \in \{0.1, 0.3, 0.6, 0.9\}$, greater values of $a_i$ imply less reactivity against changes in the estimate ${e}_i$ and, henceforth, slowest convergence of the nonlinear opinion dynamics. Third, regarding $\varepsilon$ (Eq. \eqref{eq:projection1}), it is observed that beyond some value around $\varepsilon = 1$, the estimate of ${e}_i$ is reactive enough to guarantee a fast convergence of the nonlinear opinion dynamics. Fourth, the behavior of $\kappa_i$ (Eq. \eqref{eq:u_dynamics}) is similar to that of $\varepsilon$, in the sense that there is a point beyond which increasing $\kappa$ does not provide any improvement.  Fifth, the same holds for $\delta_i$ (Eq. \eqref{eq:u_dynamics}). Finally, regarding the ratio $b_i/d_i$ (Eq. \eqref{eq:NODAL}), it is very interesting to observe that different values lead to different initial and final equilibrium, equal to ${o}_i = b_i/d_i$. In all the cases, since the agent is always non-cooperative, ${o}_i$ evolves towards a low value in order to take the responsibility of avoiding the collision. The plots of Fig.~\ref{fig:alphas} show that, in general, very low parameters' values are not desirable, as they make the opinion grow slowly and the robot to react late. Nevertheless, very large values can make the nonlinear opinion dynamics adaptive law to be too sensitive to changes in the agent and the estimate ${e}_i$, leading to a non-smooth evolution of ${o}_i$. Therefore, it is suggested to design the parameters as low as possible to guarantee collision avoidance but obtain fluid maneuvers. The nominal parameters described in Table \ref{tab:parameter-values} have been tuned using Bayesian Optimization, constrained the domain of the parameters according to the conclusions drawn from Fig. \ref{fig:alphas}.

 \begin{figure}
 \centering
 \begin{tabular}{@{}cc@{}}
  \footnotesize{a) Different $d_i$} & \footnotesize{b) Different $a_i$, $c_i=1-a_i$} \\
  \includegraphics[width=0.47\linewidth]{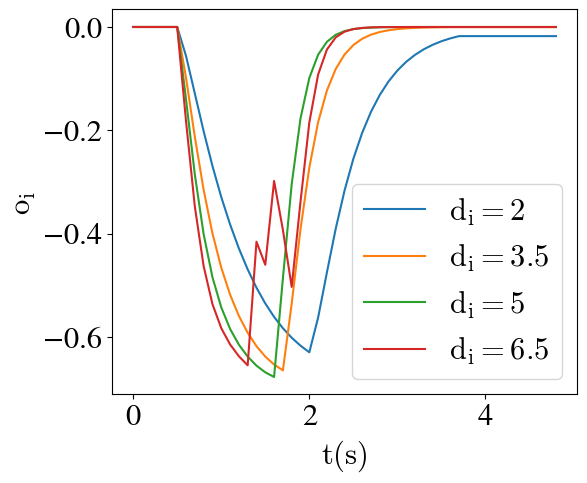} & 
  \includegraphics[width=0.47\linewidth]{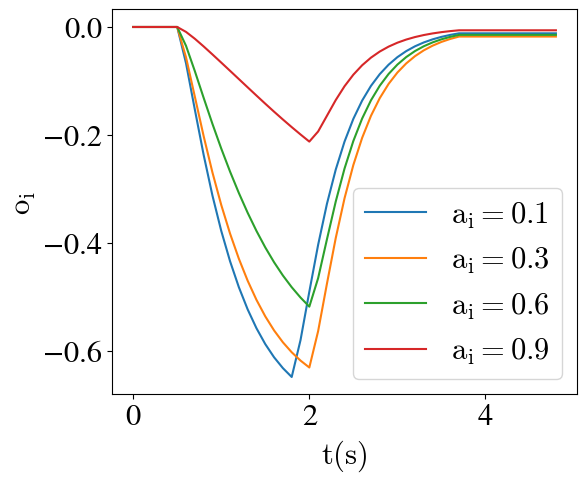} \\
  \footnotesize{c) Different $\varepsilon$} & \footnotesize{d) Different $\kappa_i$} \\
    \includegraphics[width=0.47\linewidth]{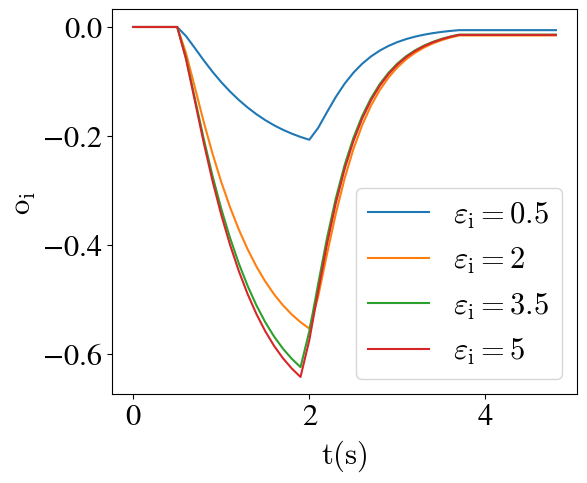} & 
  \includegraphics[width=0.47\linewidth]{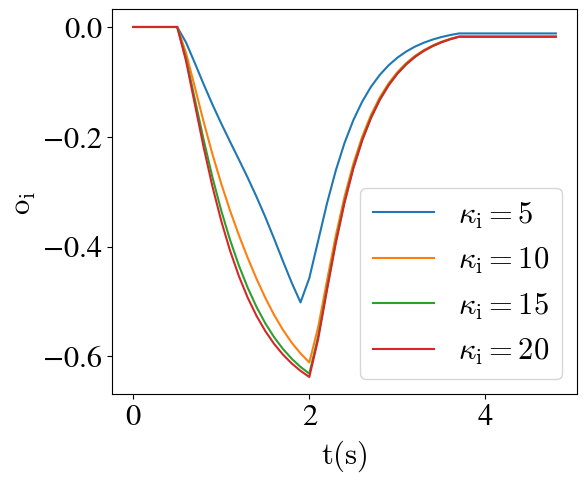} \\
  \footnotesize{e) Different $\delta_i$} & \footnotesize{f) Different $b_i$} \\
    \includegraphics[width=0.47\linewidth]{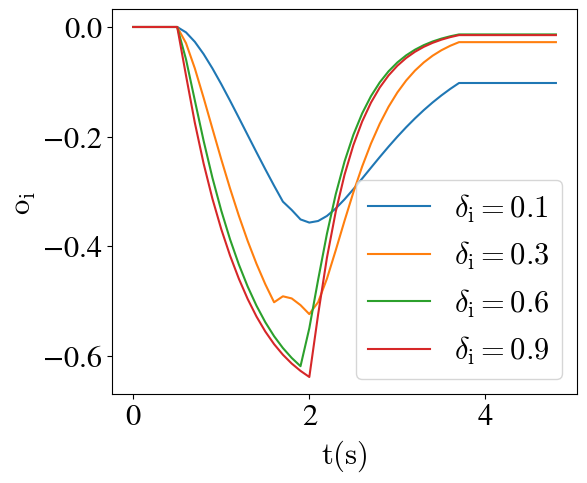} & 
  \includegraphics[width=0.47\linewidth]{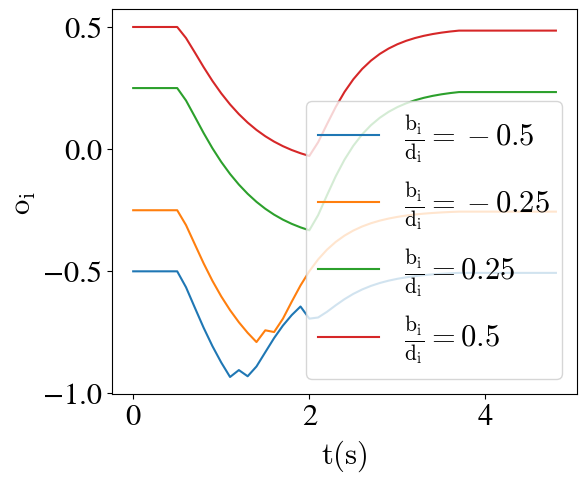} \\
 \end{tabular}
 \caption{Evolution of the degree of cooperation in the head-on scenario against a non-cooperative agent, varying the value of each parameter of 
 \textsf{AVOCADO}.}
 \label{fig:alphas}
 \end{figure}

\subsection{Multi-agent scenarios}\label{subsec:multiagent}

After studying the influence of the different parameters in \textsf{AVOCADO}, we conduct a series of multi-agent simulations to evaluate the performance of \textsf{AVOCADO} compared to existing state-of-the-art planners.  We compare \textsf{AVOCADO} with \textsf{ORCA} \cite{van2011reciprocal} and \textsf{RVO-RL} \cite{han2022reinforcement} as baselines planners that consider reciprocal collision avoidance, and \textsf{T-MPC} \cite{mavrogiannis2022winding,poddar2023crowd} and \textsf{SARL} \cite{chen2019crowd} as baseline planners for non-cooperative collision avoidance. The former is based on a model predictive control formulation and the latter is a neural-network-based planner trained using reinforcement learning. In this sense, 
\textsf{SARL} is retrained to satisfy the constraints in perception from the limited sensing radius of robots and agents. Moreover, we set $\sigma =0.0001 $ for \textsf{AVOCADO}.

We design two evaluation settings. The \textit{circle} scenario is characterized by an evenly spaced initial position of robots and agents, forming a circle and with the goal of reaching the initial position of the opposite robot or agent. On the other hand, the \textit{crossing} scenario is characterized by a random initial position of all the agents and robots in the border of a square. Robots and agents must navigate to a individually assigned random goal in the opposite side of the square. Cooperative robots are placed in the sides oriented to one axis and non-cooperative agents are placed in the sides oriented to the other axis. This arrangement enforces that cooperative robots traverse non-cooperative agents that move perpendicularly as a traffic flow they have to cross. Besides, to ensure that robots do not take trivial motion strategies (e.g., wait until all the agents have reached the goal), once an agent reaches its goal, the goal is modified to be the initial position of the agent, so the environment is always dynamic.

We show the qualitative behavior of the robots in non-cooperative \textit{circle}  and \textit{crossing} scenarios in Fig.~\ref{fig:non-collaborative}. The planners address the navigation conflict at the center of the circle in two different ways. \textsf{T-MPC} and \textsf{RVO-RL} completely avoid dangerous zones by taking a detour; this might not be possible in a constrained scenario with boundaries. Among the other three approaches, \textsf{ORCA} is the one that takes more risks, since it assumes reciprocity in the degree of cooperation,  colliding in both scenarios. \textsf{SARL} and \textsf{AVOCADO} achieve adaptation to the unknown degree of cooperation of the agents and other robots and manage to reach the goal in both scenarios; nonetheless, the movements exerted by the robots using \textsf{AVOCADO} are much more seamless than those exerted by \textsf{SARL}, which is a key aspect when transferring navigation policies to real robots.

 \begin{figure*}
 \centering
 \begin{tabular}{@{}ccccc@{}}
     \footnotesize{a) \textsf{ORCA}} & \footnotesize{b) \textsf{T-MPC}} & \footnotesize{c) \textsf{SARL}} & \footnotesize{d) \textsf{RVO-RL}} & \footnotesize{e) \textsf{AVOCADO}} \\\includegraphics[width=0.18\linewidth]{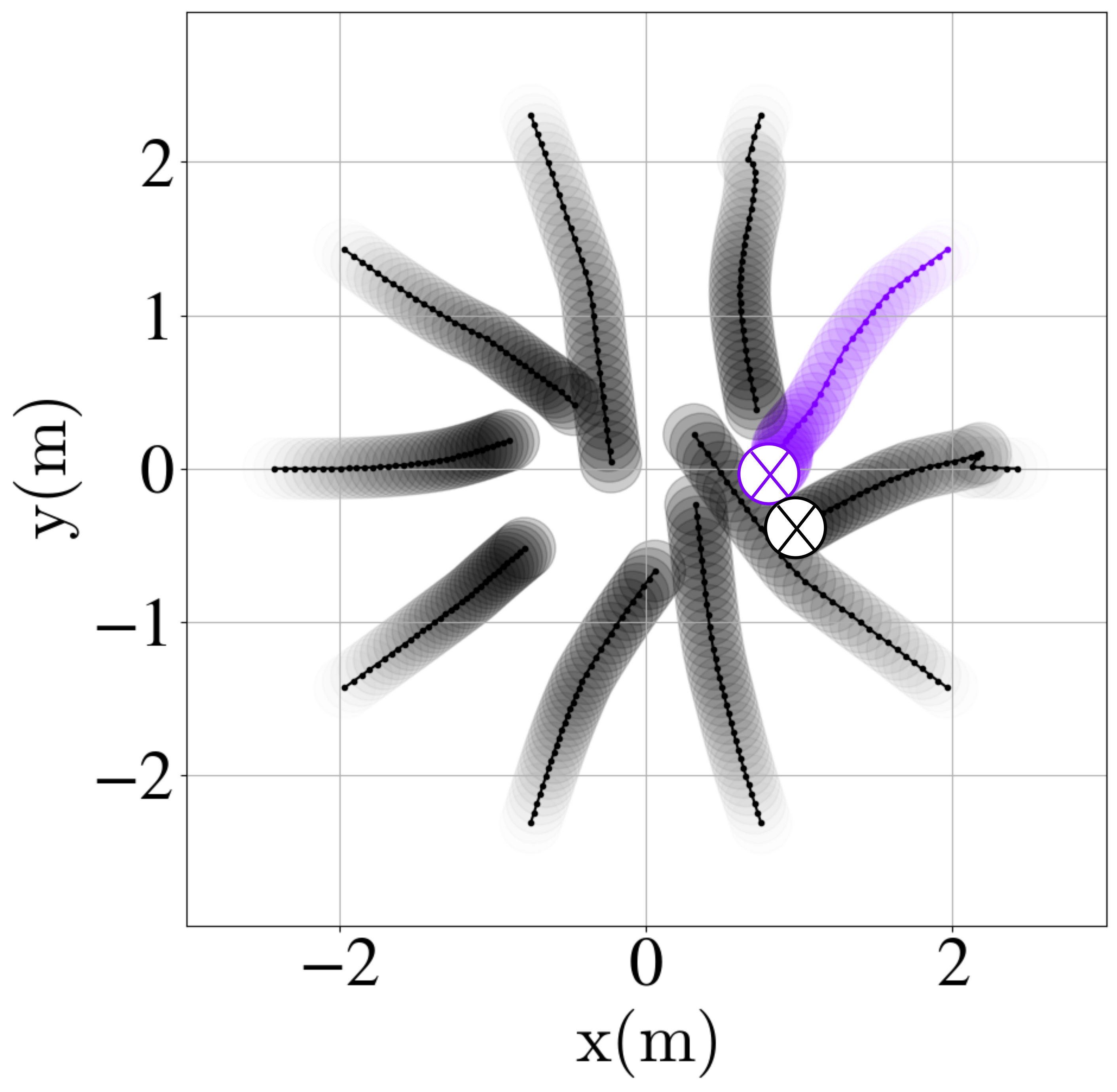} & 
     \includegraphics[width=0.18\linewidth]{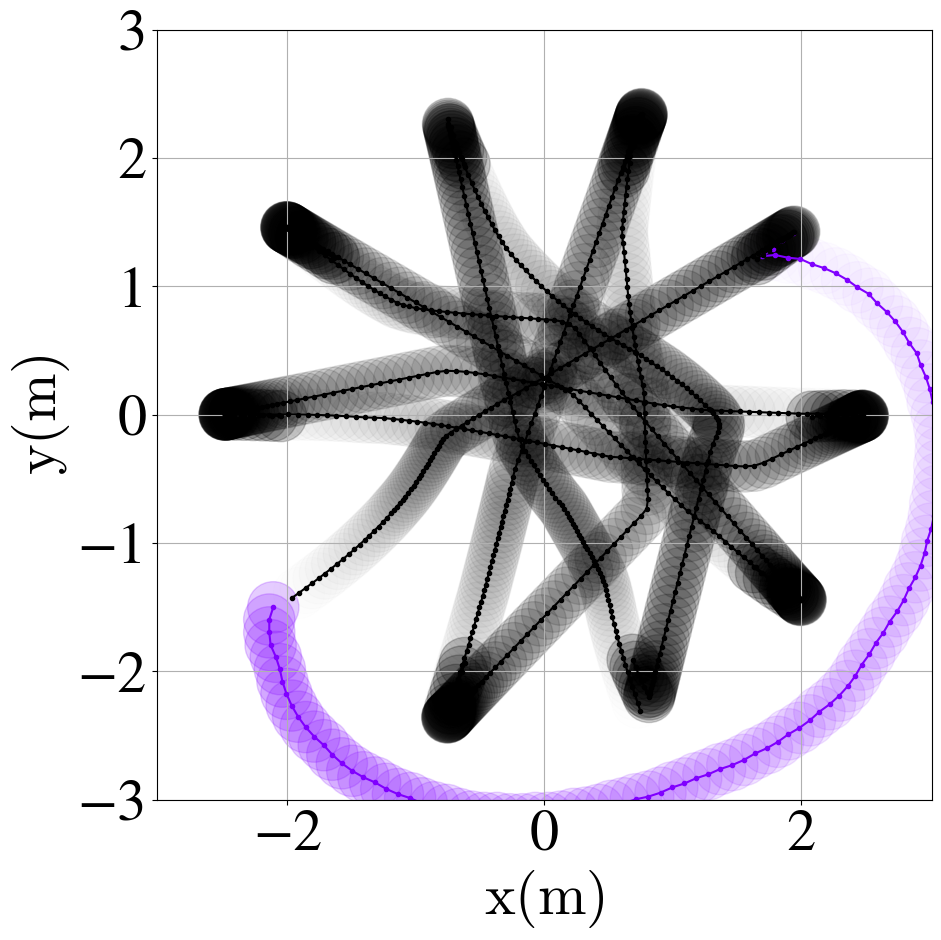} & 
      \includegraphics[width=0.18\linewidth]{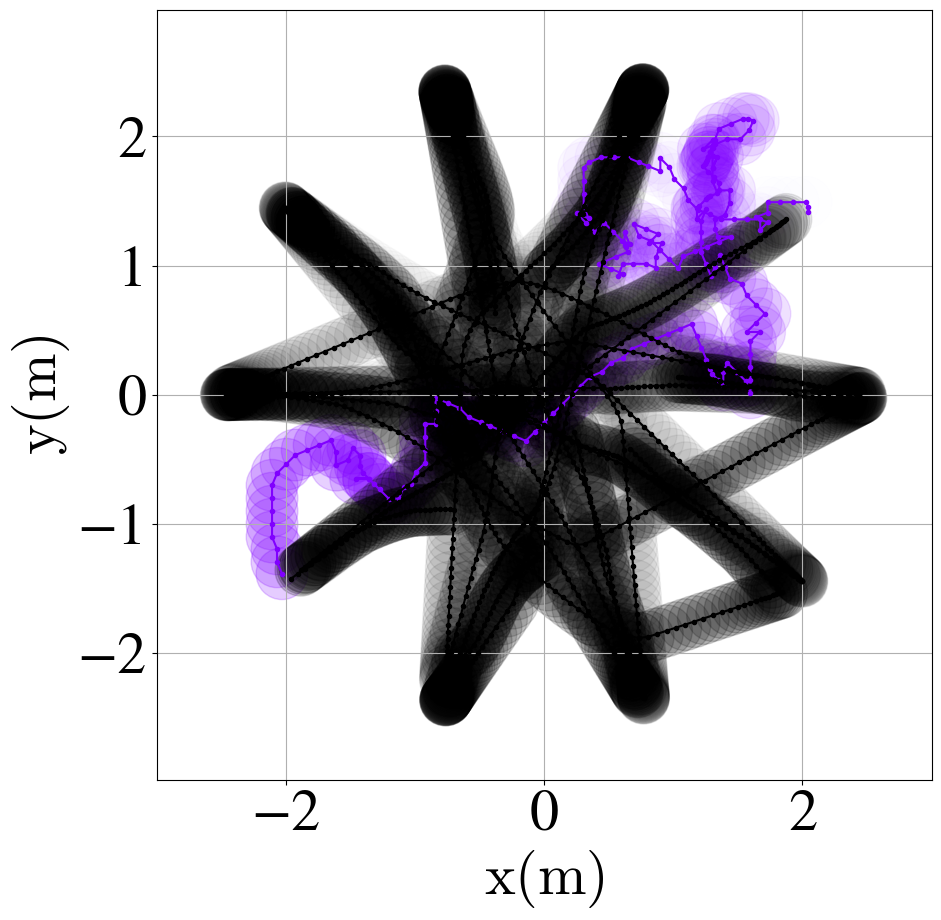} &
      \includegraphics[width=0.18\linewidth]{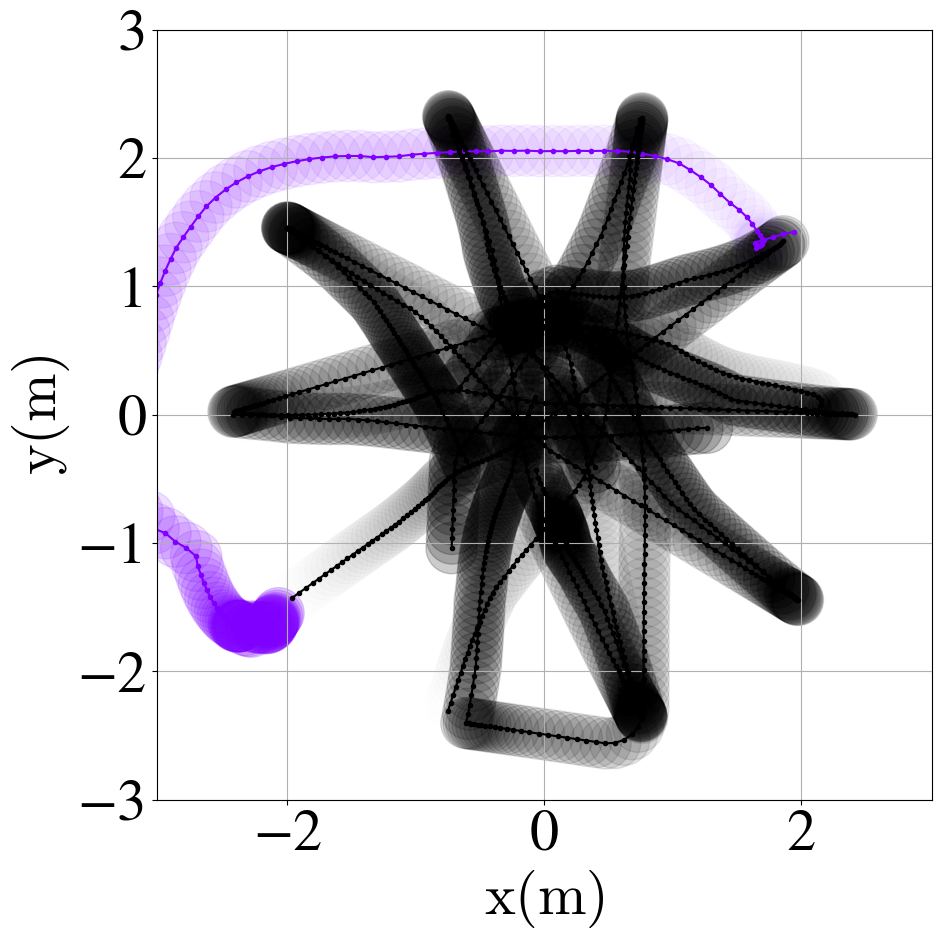} &
     \includegraphics[width=0.18\linewidth]{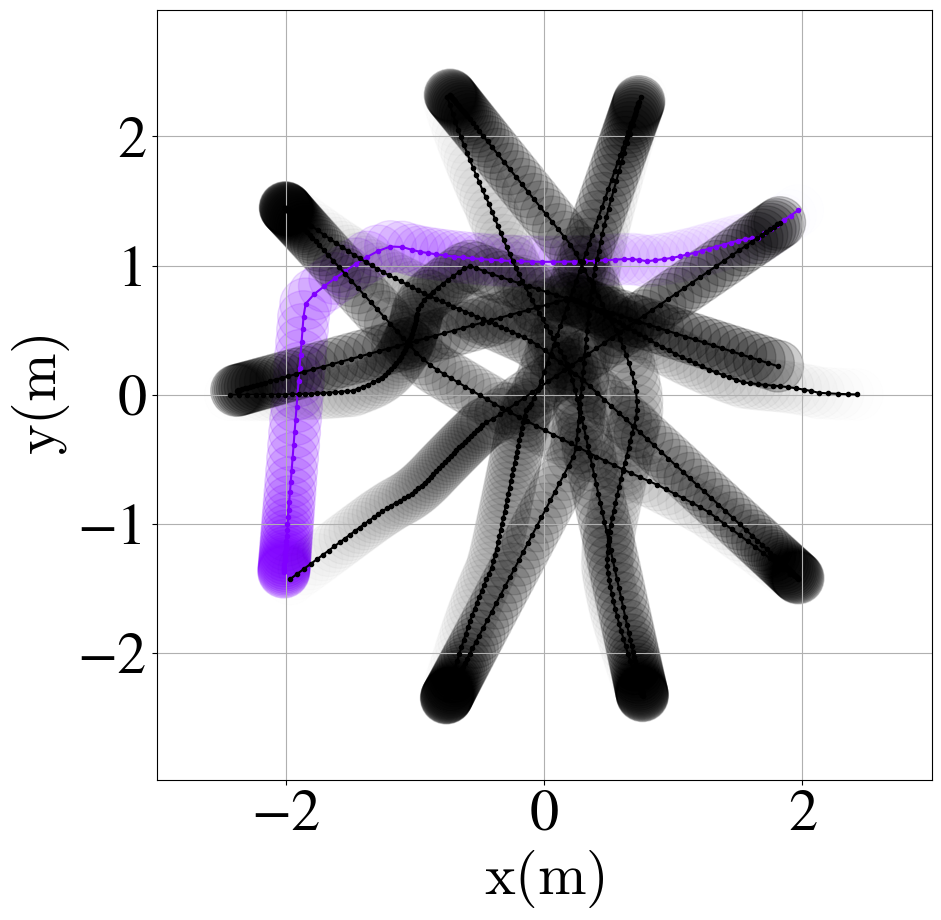}  
 \\
      \includegraphics[width=0.18\linewidth]{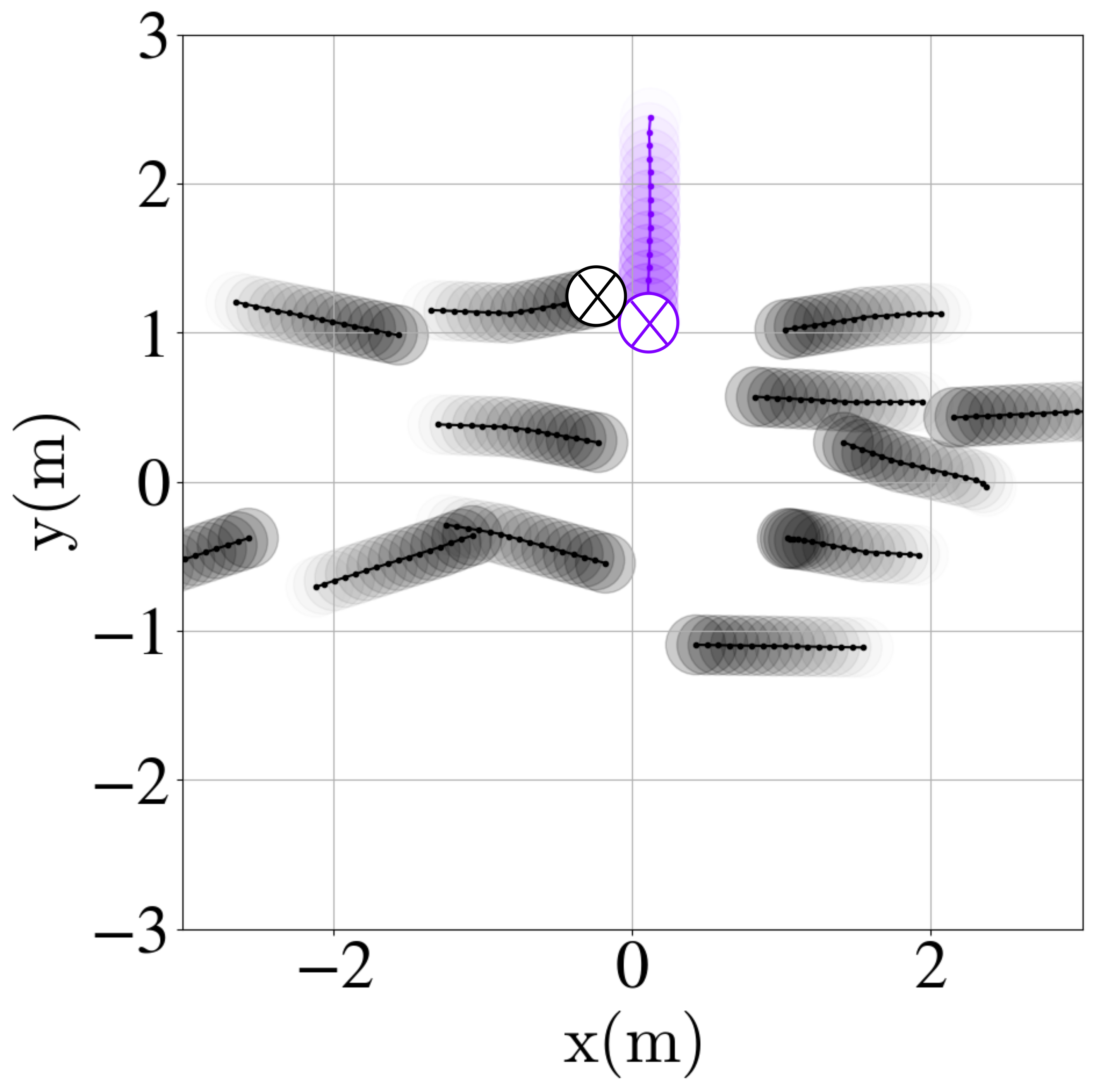} & 
      \includegraphics[width=0.18\linewidth]{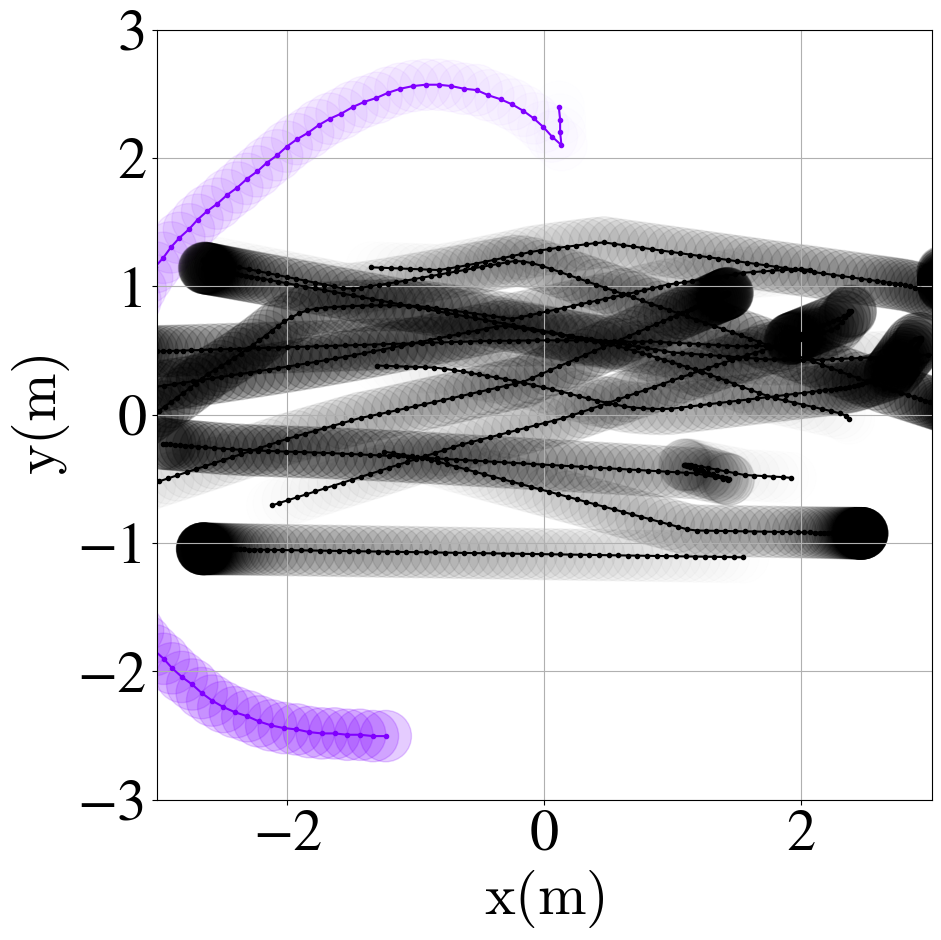} & 
      \includegraphics[width=0.18\linewidth]{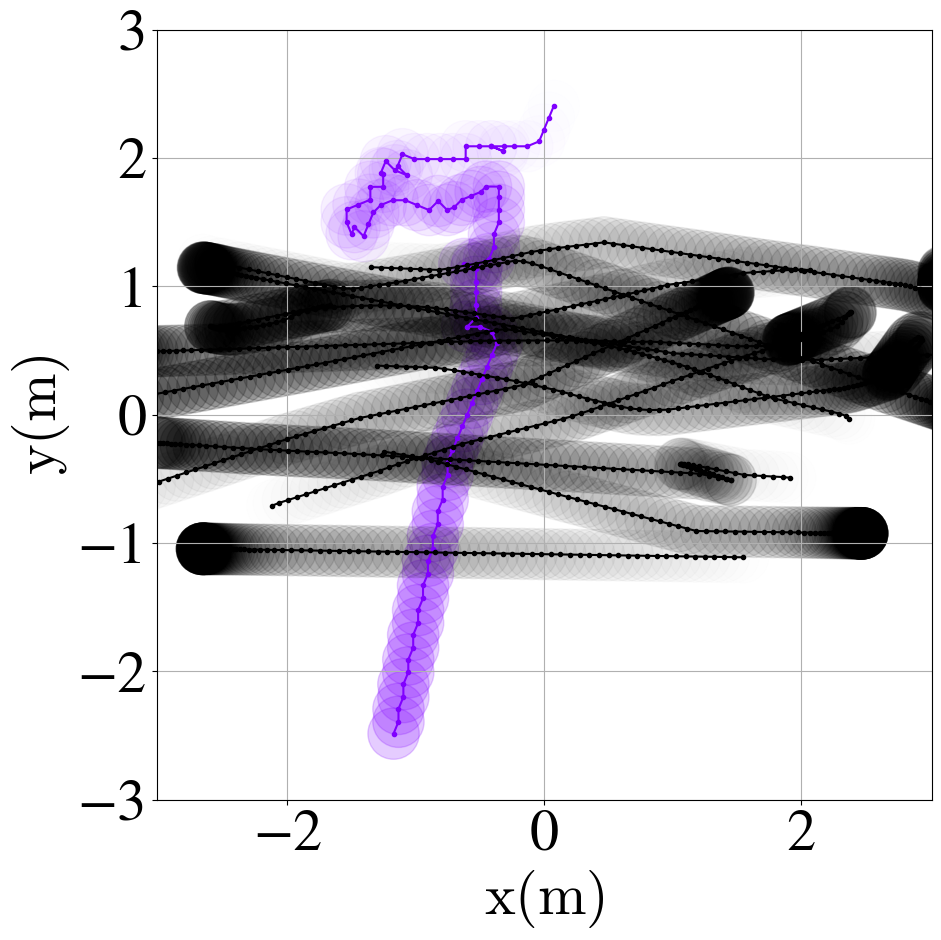} &
      \includegraphics[width=0.18\linewidth]{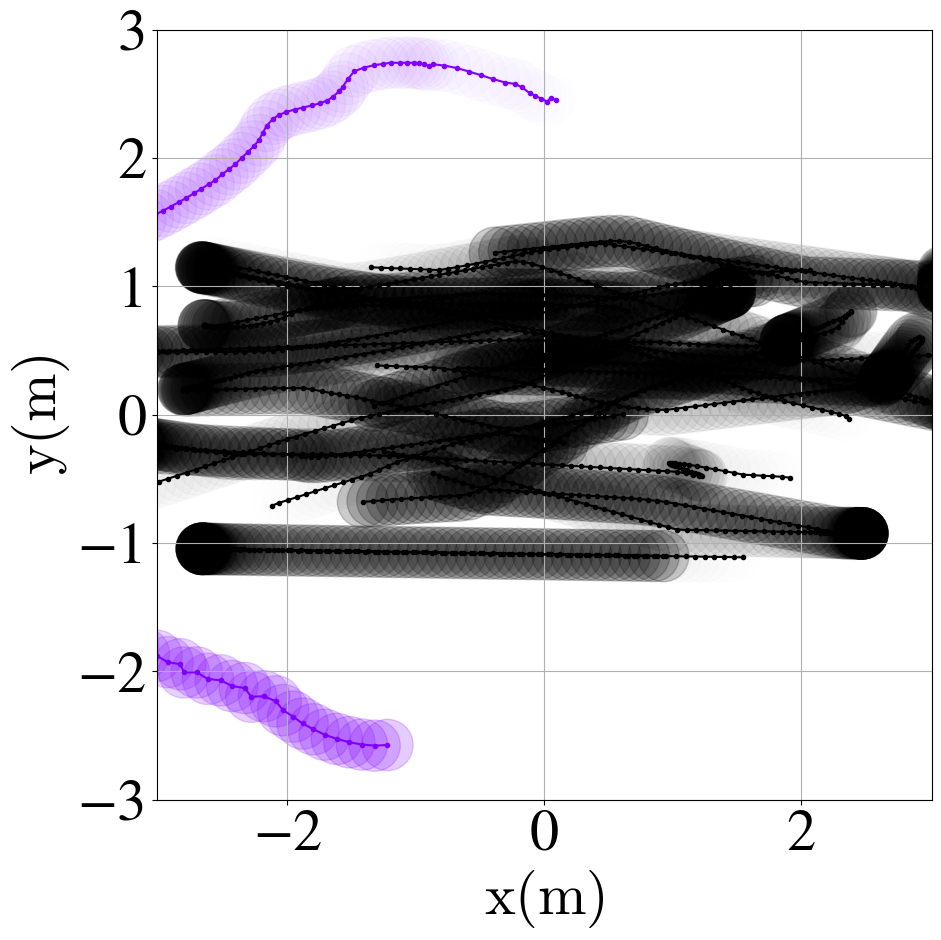} &
      \includegraphics[width=0.18\linewidth]{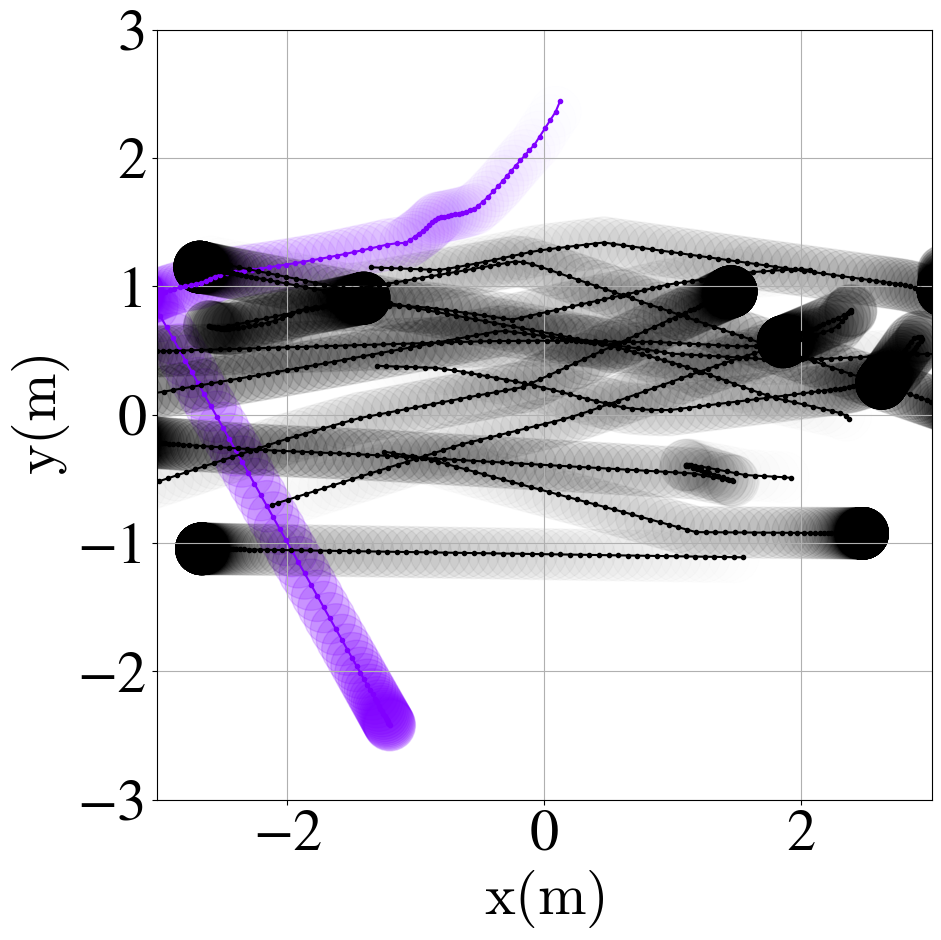}  \\
 \end{tabular}
 \caption{Non-cooperative \textit{circle} (top) and \textit{crossing} (bottom) scenarios with $10$ agents and different planners. The robot is colored in purple, and the non-cooperative agents are in black.  When the robot collides with an agent, it becomes transparent. The episode finishes when all the agents and the robot reach the goal or collide, or the simulation lasts more than $100$s.}
 \label{fig:non-collaborative}
 \end{figure*}

Fig.~\ref{fig:col-circles} shows examples of scenarios only populated with robots. Fig.~\ref{fig:col-circles}a demonstrates the deadlocks suffered by reciprocal approaches like \textsf{ORCA} when symmetries are encountered. \textsf{AVOCADO} overcomes this problem, as seen in Fig.~\ref{fig:col-circles}e, by exploiting the attention mechanism in the nonlinear opinion dynamics adaptive law. \textsf{T-MPC} and \textsf{RVO-RL} take long unnecessary detours, as they are designed for scenarios with fewer robots and to take larger safety margins. \textsf{T-MPC} even presents many collisions, as it is not prepared to interact with other robots, and some of the \textsf{RVO-RL} robots do not reach the goal before the time out of $100$s. Trajectories of robots using \textsf{SARL} are, again, very irregular, probably due to a reciprocal dance problem, specially noticeable in \textit{circle} scenarios where this leads to collisions. All the robots using \textsf{AVOCADO} reach their goals avoiding collisions, with short path lengths and fluid trajectories, in both scenarios.

 \begin{figure*}
 \centering
 \begin{tabular}{@{}ccccc@{}}
      
     \footnotesize{a) \textsf{ORCA}} & \footnotesize{b) \textsf{T-MPC}} & \footnotesize{c) \textsf{SARL}} & \footnotesize{d) \textsf{RVO-RL}} & \footnotesize{e) \textsf{AVOCADO}} \\\includegraphics[width=0.18\linewidth]{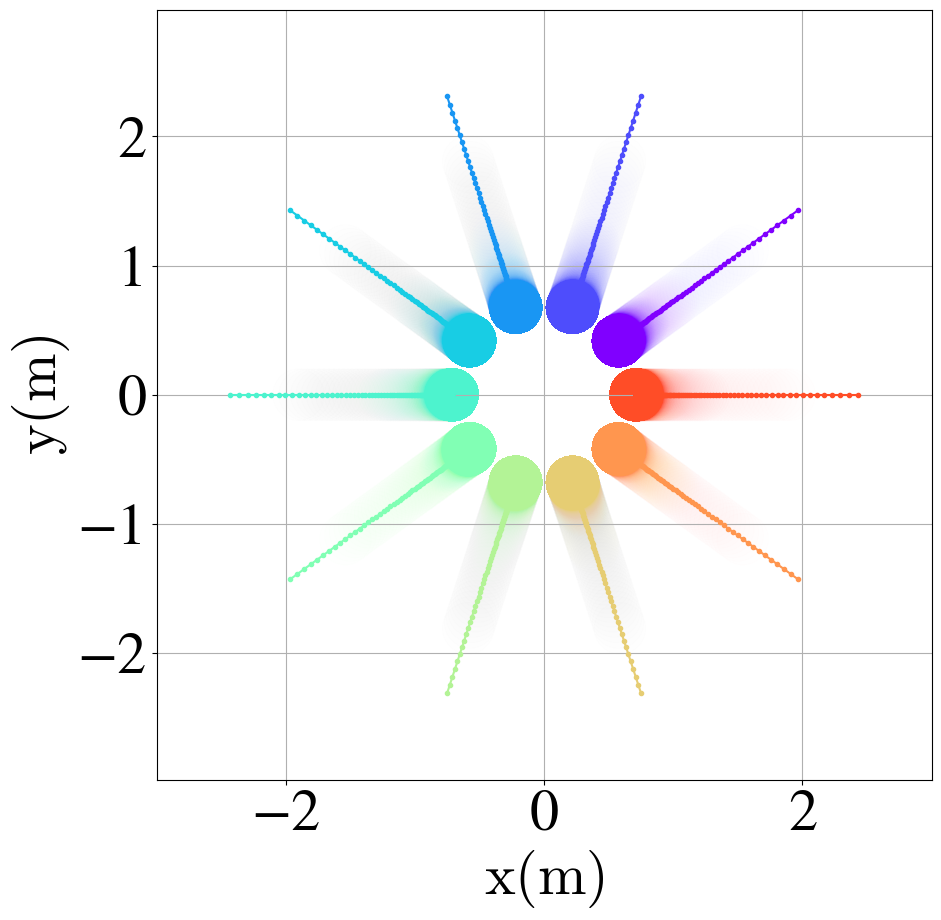} & 
      \includegraphics[width=0.18\linewidth]{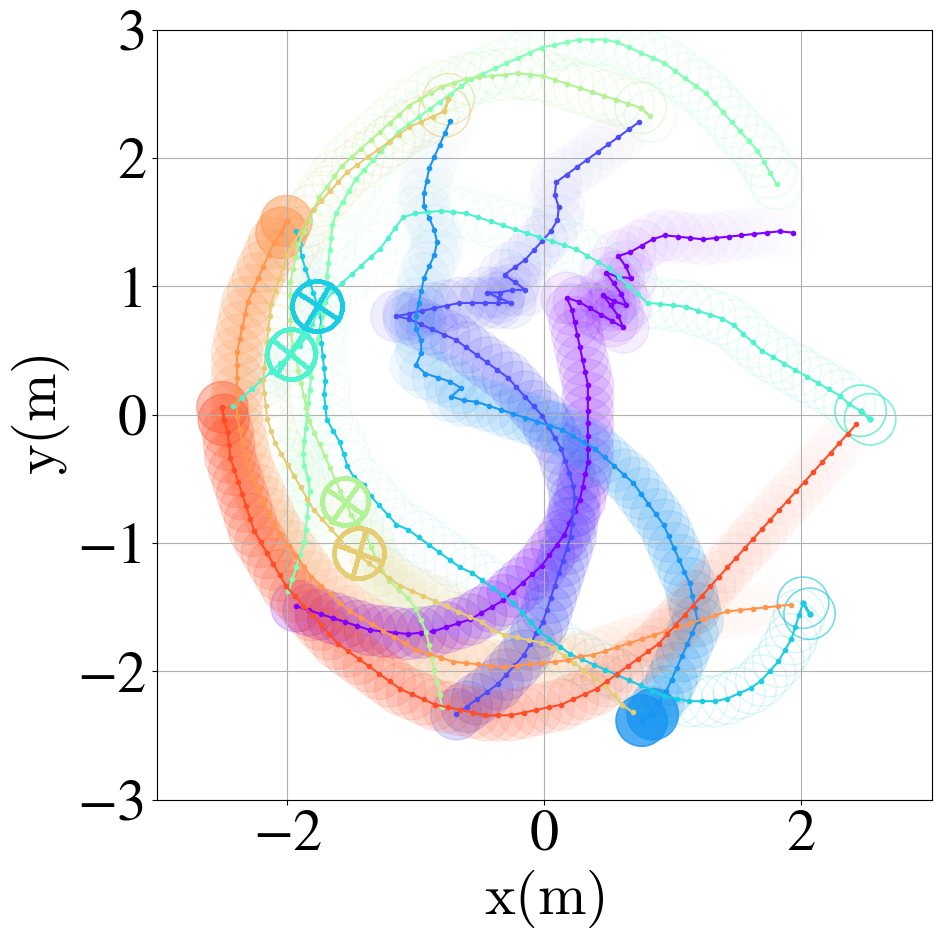} & 
      \includegraphics[width=0.18\linewidth]{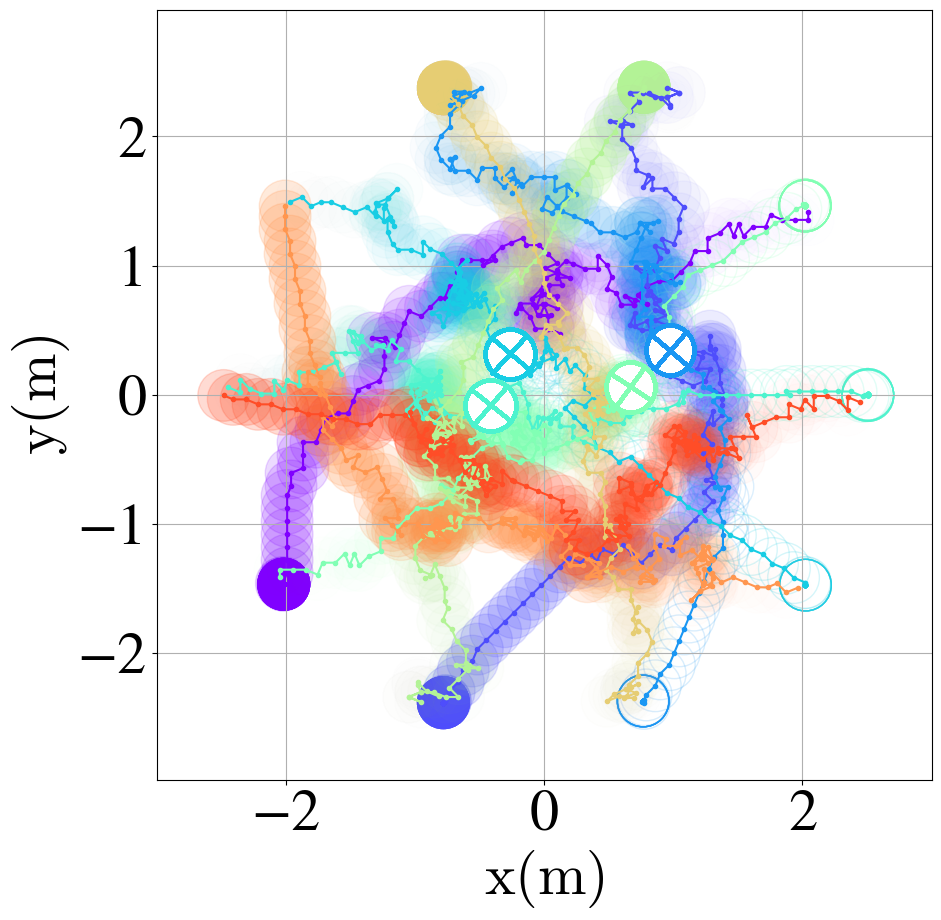} & 
      \includegraphics[width=0.18\linewidth]{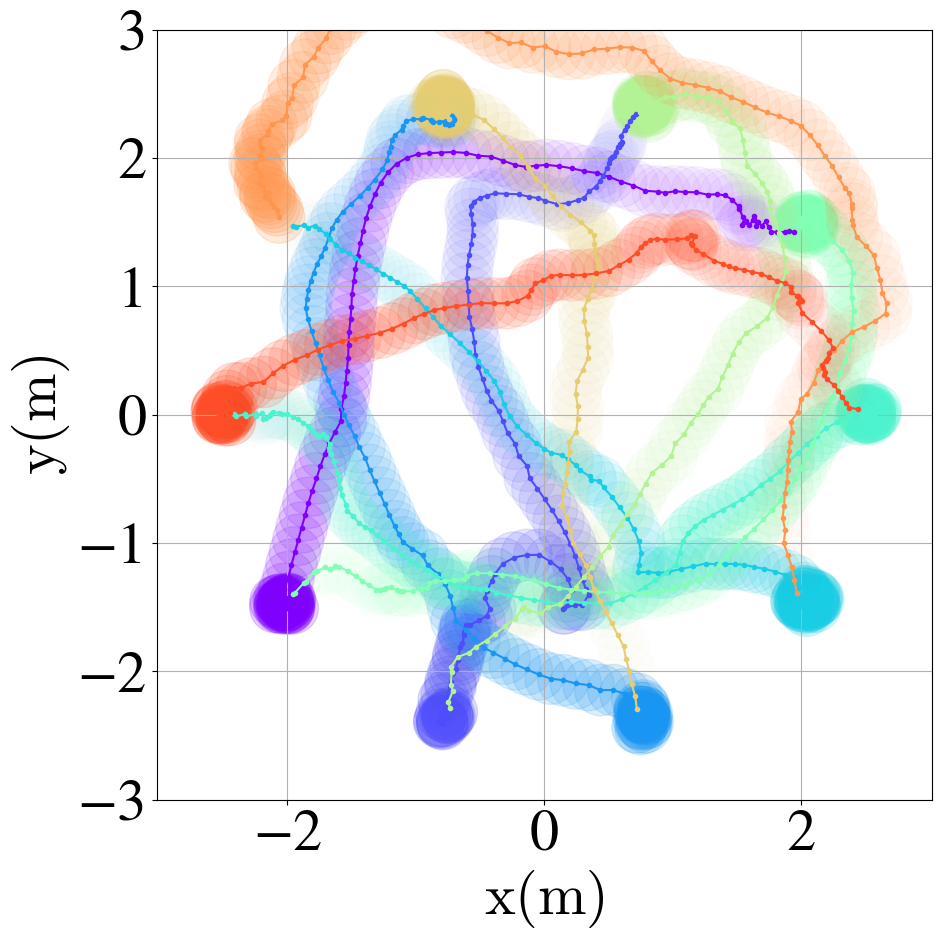} & \includegraphics[width=0.18\linewidth]{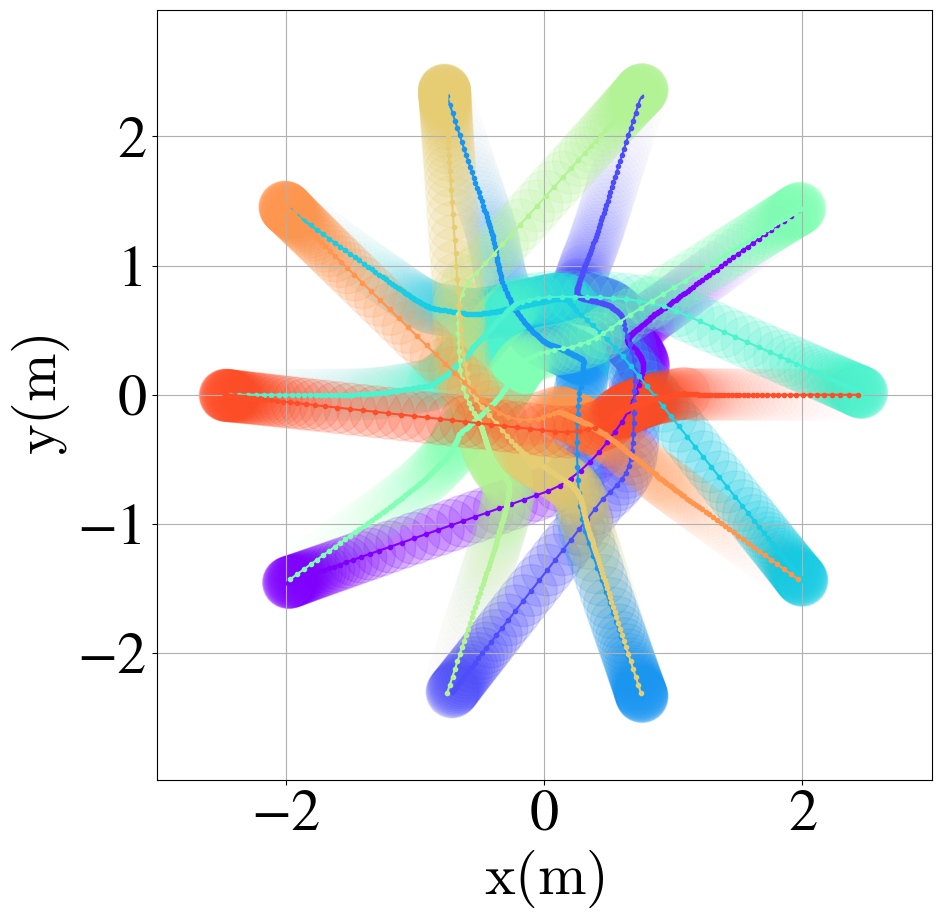}\\
      \includegraphics[width=0.18\linewidth]{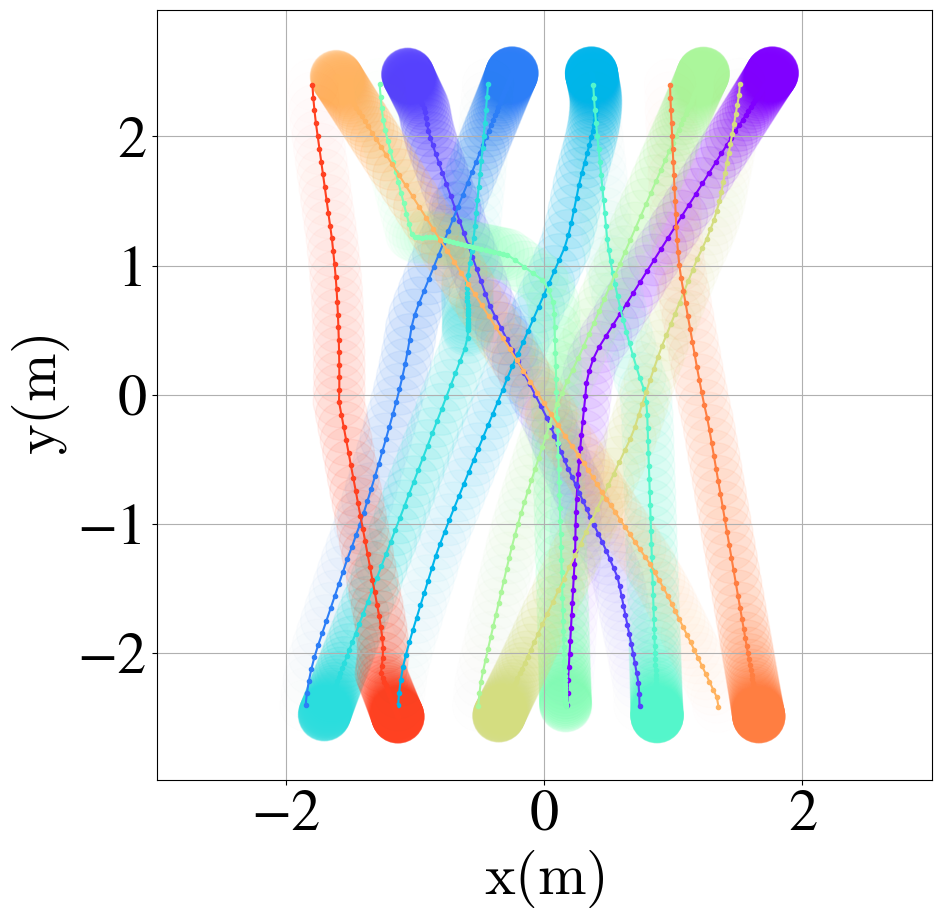} & 
     \includegraphics[width=0.18\linewidth]{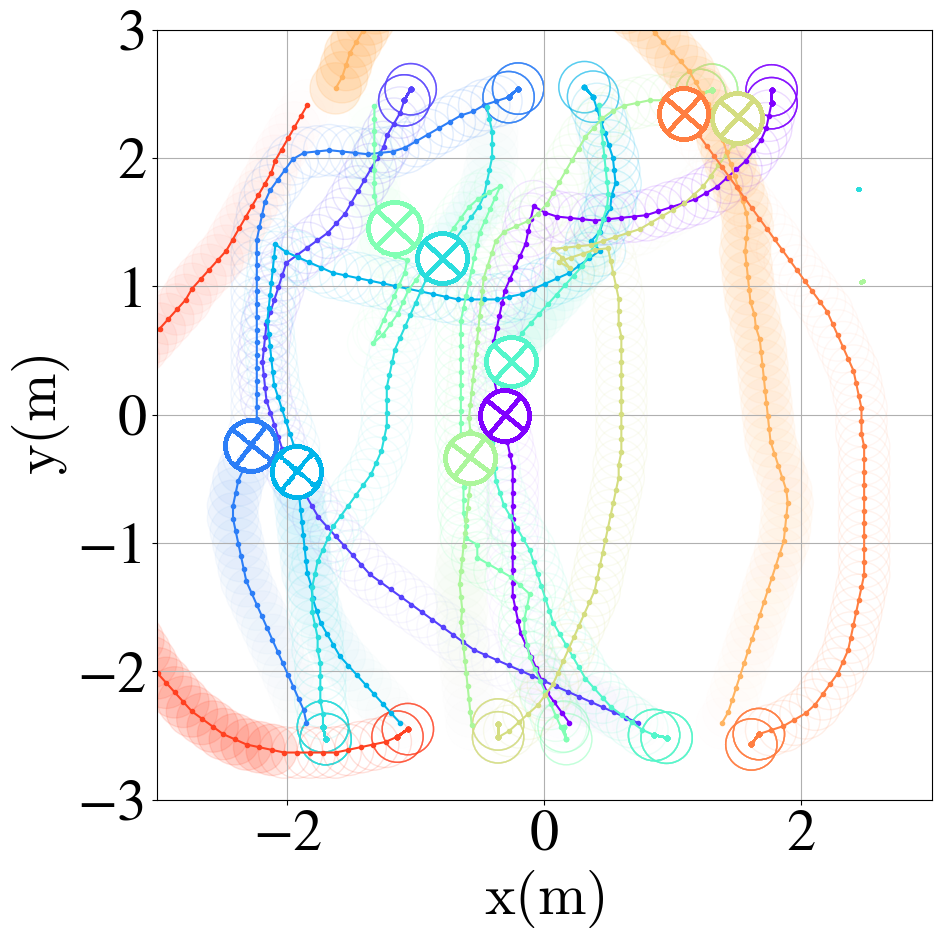} & 
      \includegraphics[width=0.18\linewidth]{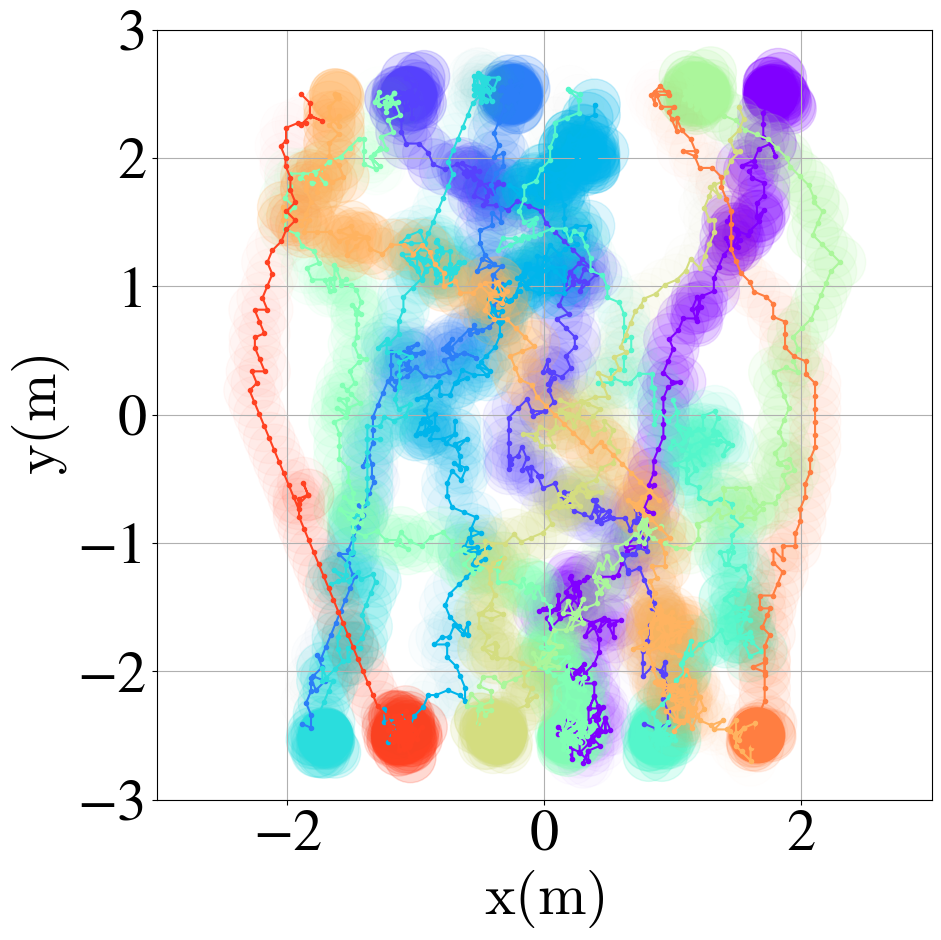} &
      \includegraphics[width=0.18\linewidth]{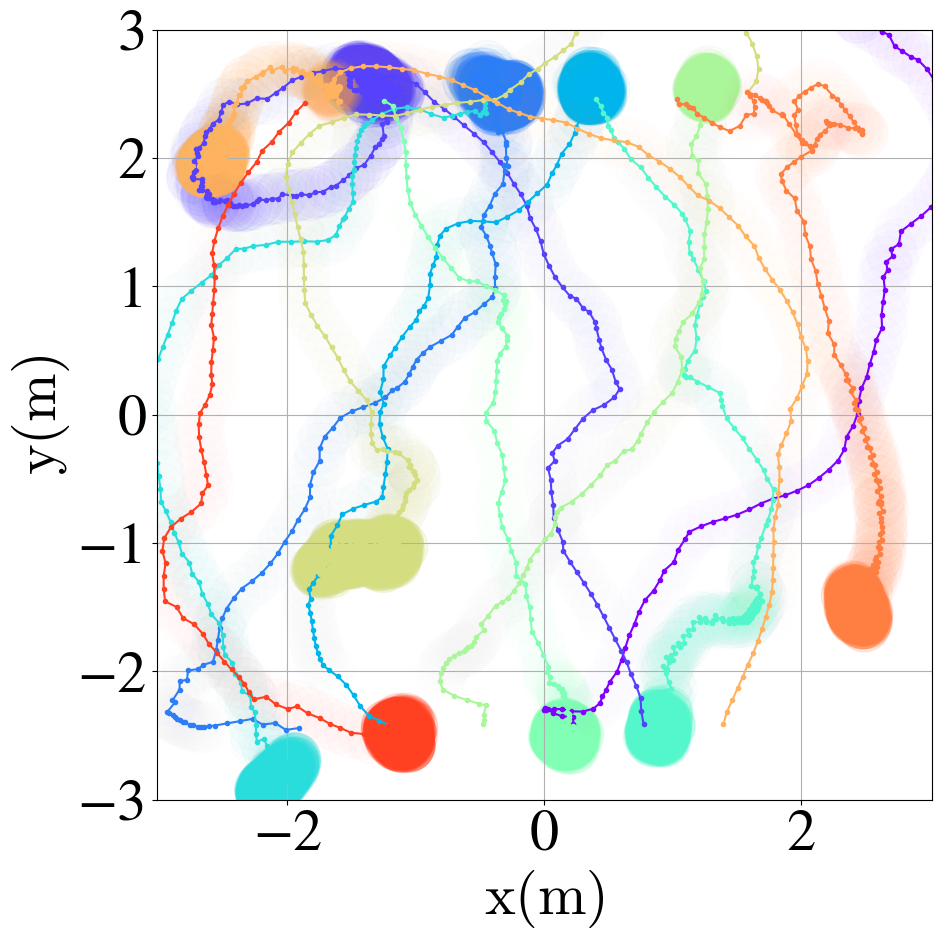} &
     \includegraphics[width=0.18\linewidth]{orca-12-head-on.png} \\ 
 \end{tabular}
 \caption{Cooperative \textit{circle} (top) and \textit{crossing} (bottom) scenarios with 10 robots and different planners. Each robot is depicted with a different color. The representation follows the same rules in Fig.~\ref{fig:non-collaborative}. Collisions are marked with crosses surrounded by circles}.
 \label{fig:col-circles}
 \end{figure*}

Mixed cooperative/non-cooperative scenarios are the most challenging for most of the planners, as they assume some sort of degree of cooperation, either reciprocity or complete absence of cooperation. Fig.~\ref{fig:part-circles} provides examples of such scenarios. \textsf{T-MPC} and \textsf{RVO-RL} avoid passing through the center of the stage, as there are many non-cooperative agents. However, they encounter cooperative robots during the detour, so they are unable to avoid collisions. \textsf{SARL} faces again the reciprocal dance problem when evading both non-cooperative agents and other robots avoiding collisions. \textsf{ORCA} and \textsf{AVOCADO} perform similar trajectories, but \textsf{AVOCADO} adaptation capabilities makes it safer, avoiding all collisions unlike \textsf{ORCA}. 

 \begin{figure*}
 \centering
 \begin{tabular}{@{}ccccc@{}}
      
     \footnotesize{a) \textsf{ORCA}} & \footnotesize{b) \textsf{T-MPC}} & \footnotesize{c) \textsf{SARL}} & \footnotesize{d) \textsf{RVO-RL}} & \footnotesize{e) \textsf{AVOCADO}} \\
     \includegraphics[width=0.18\linewidth]{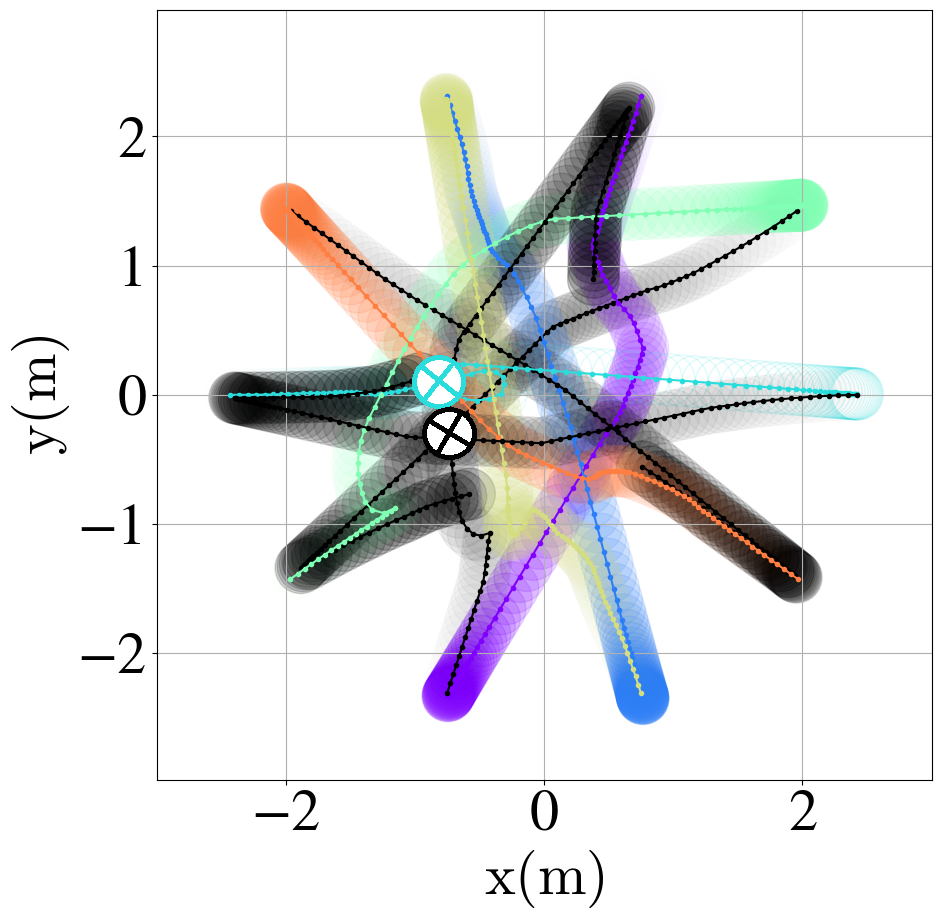} & 
      \includegraphics[width=0.18\linewidth]{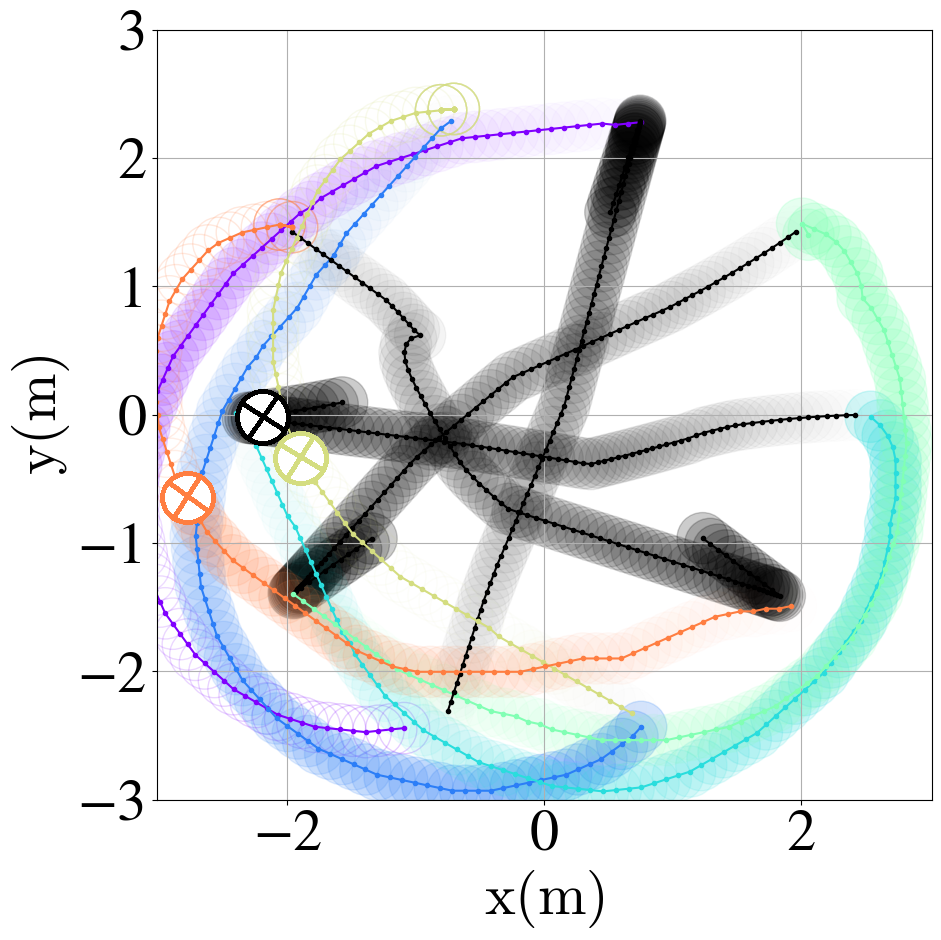} & 
      \includegraphics[width=0.18\linewidth]{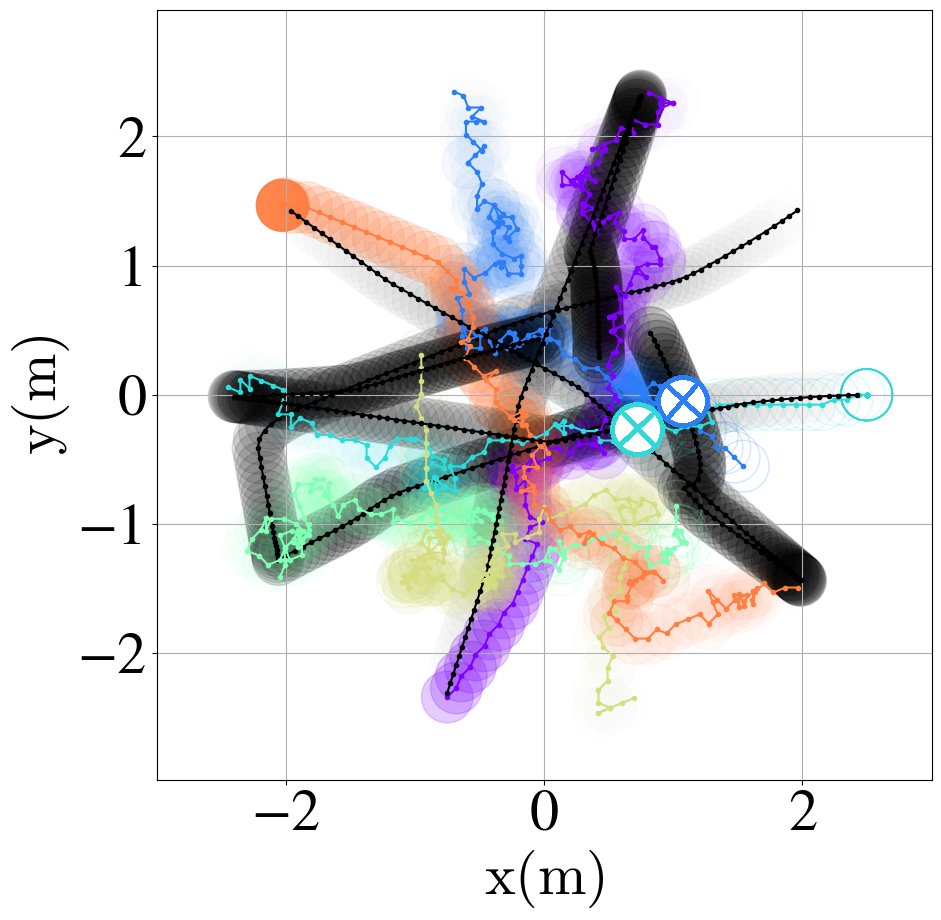} &
      \includegraphics[width=0.18\linewidth]{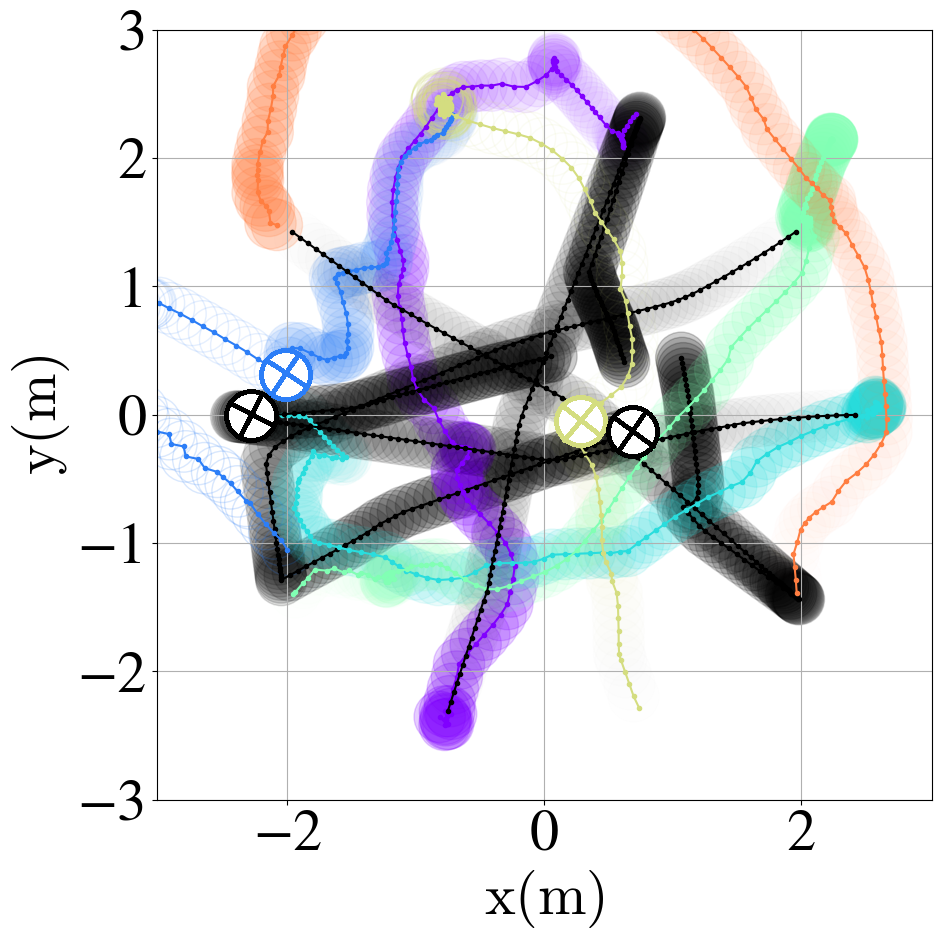} &
     \includegraphics[width=0.18\linewidth]{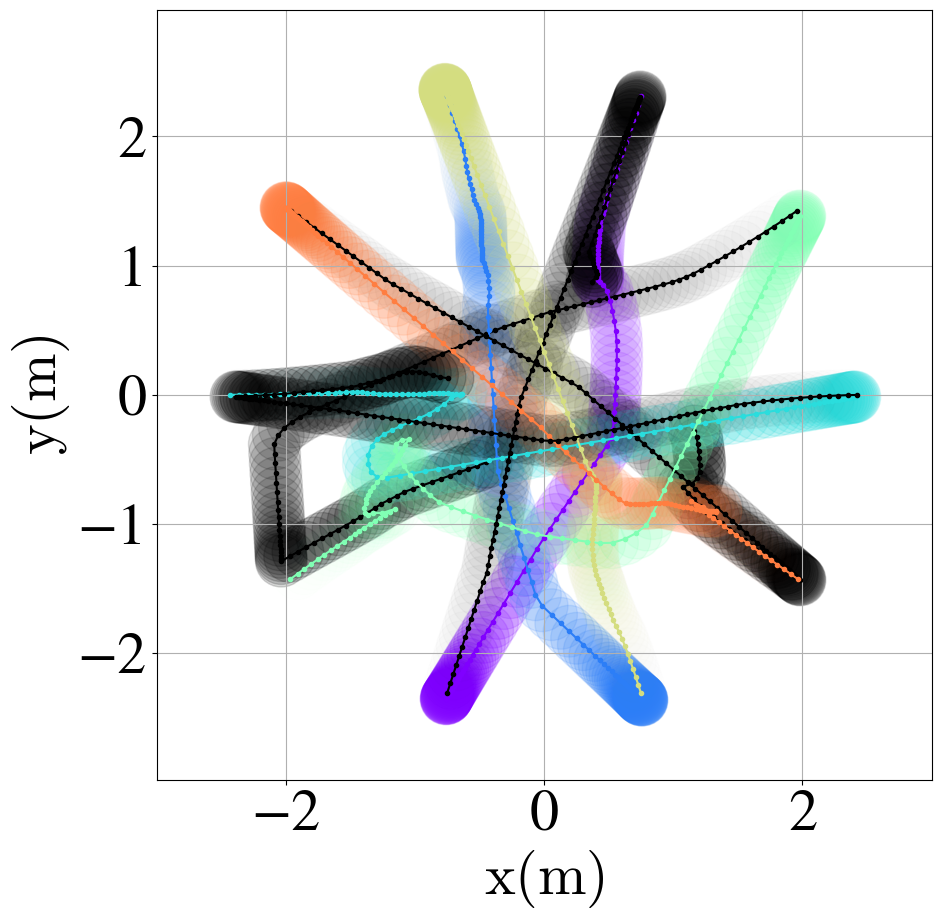} \\ 
     \includegraphics[width=0.18\linewidth]{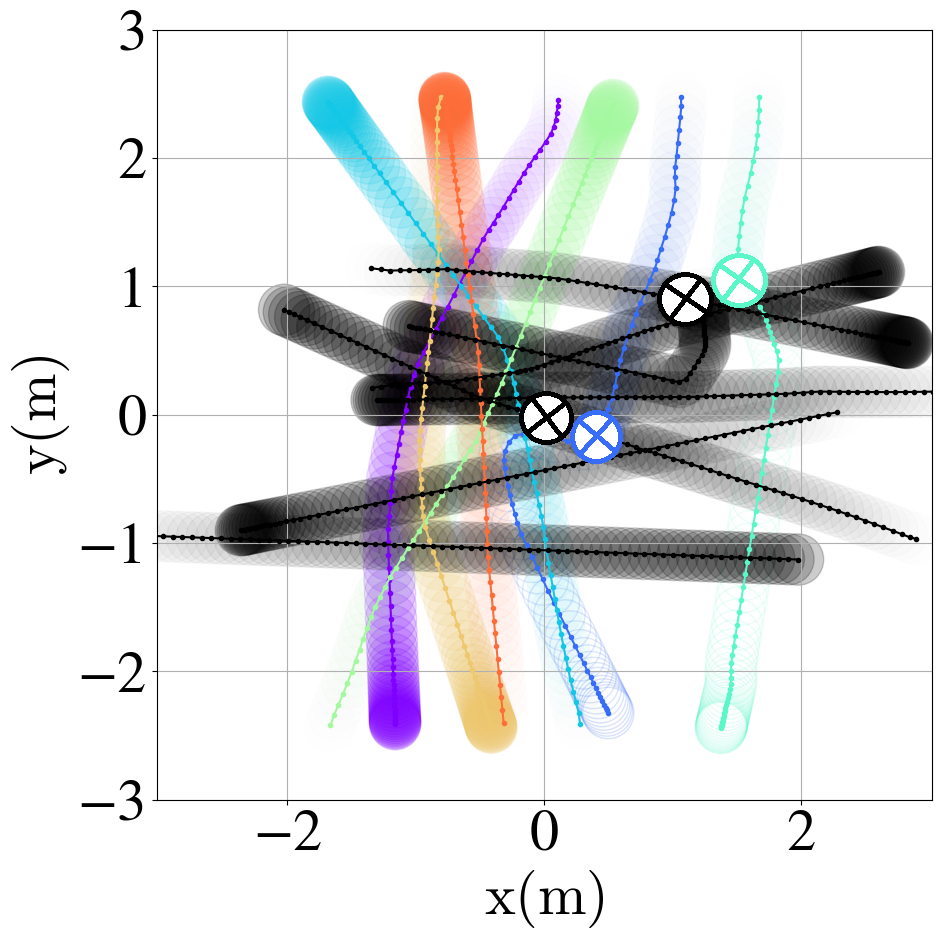} & 
     \includegraphics[width=0.18\linewidth]{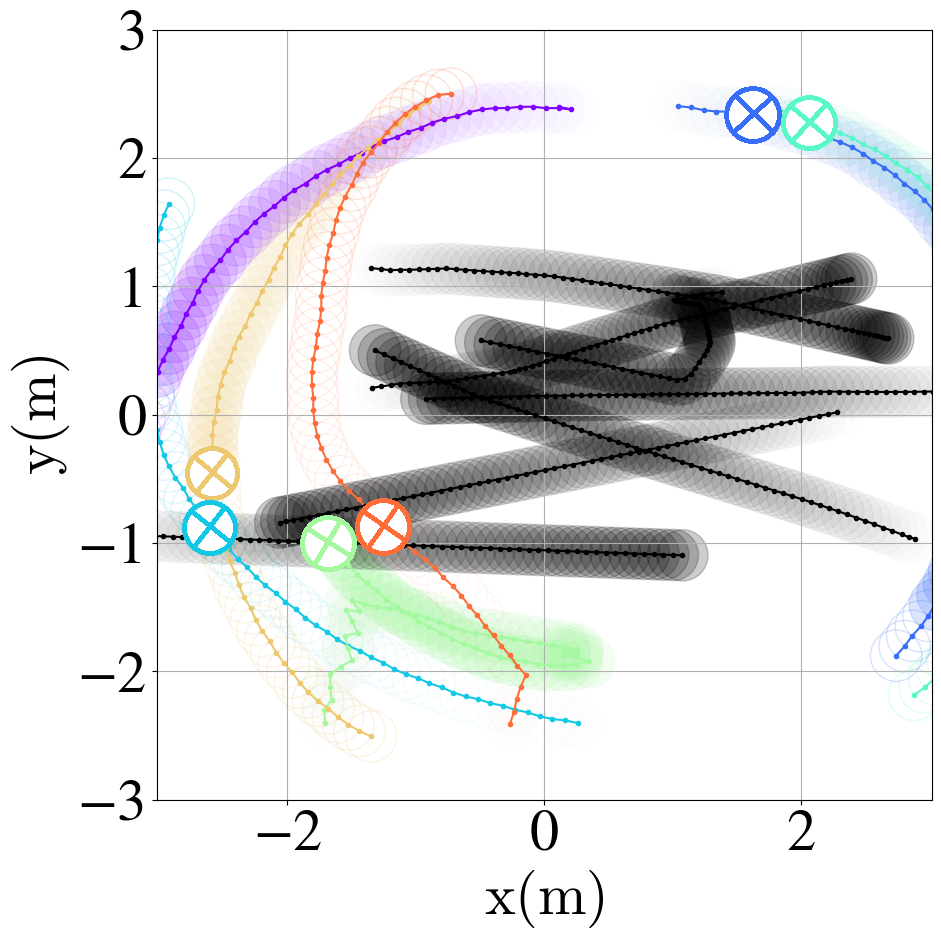} & 
      \includegraphics[width=0.18\linewidth]{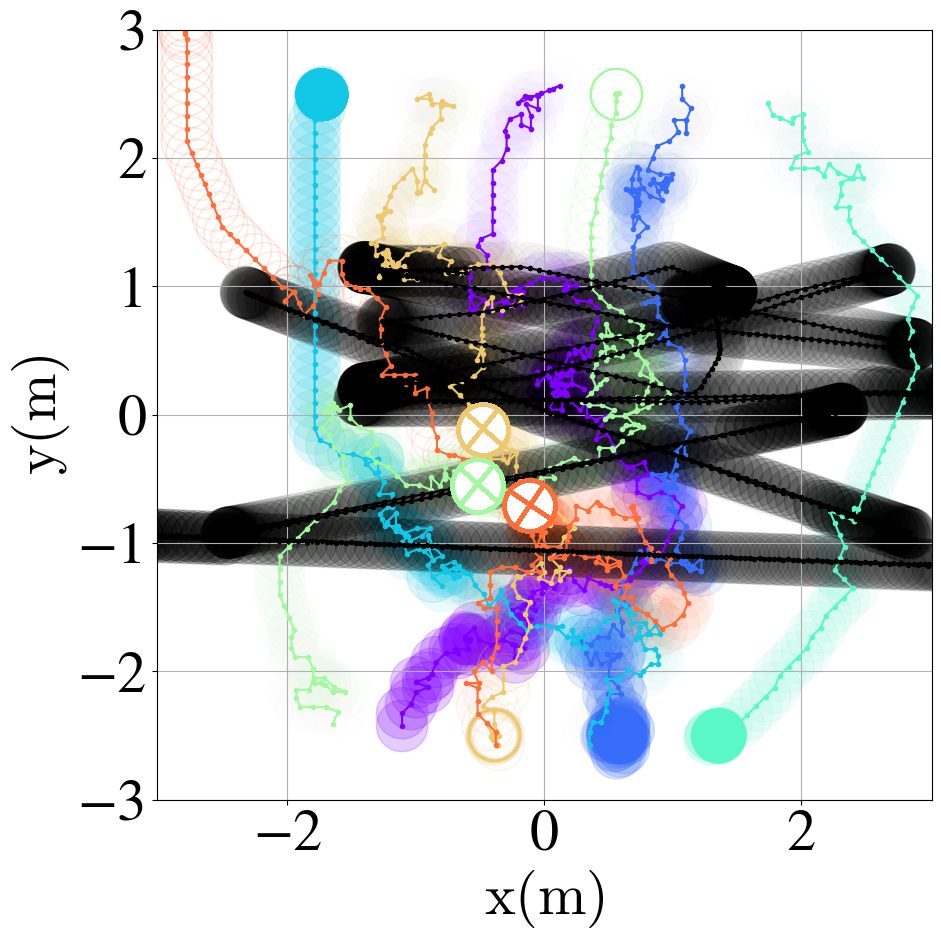} &
      \includegraphics[width=0.18\linewidth]{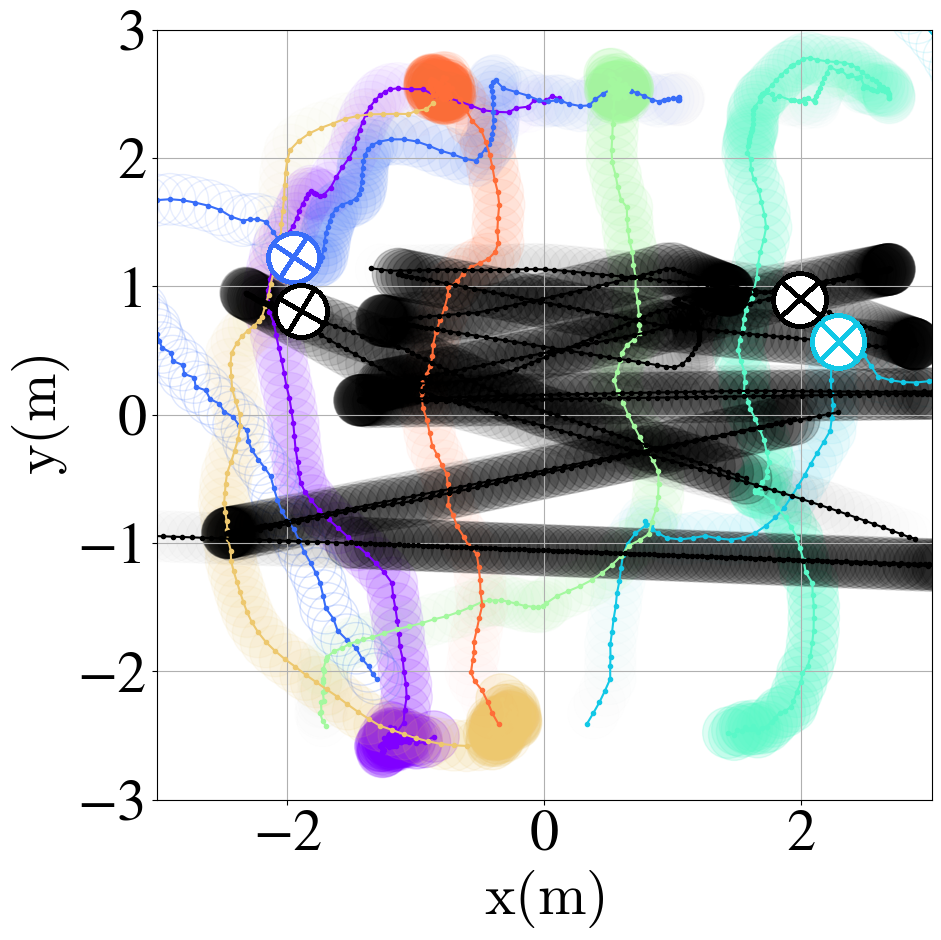} &
     \includegraphics[width=0.18\linewidth]{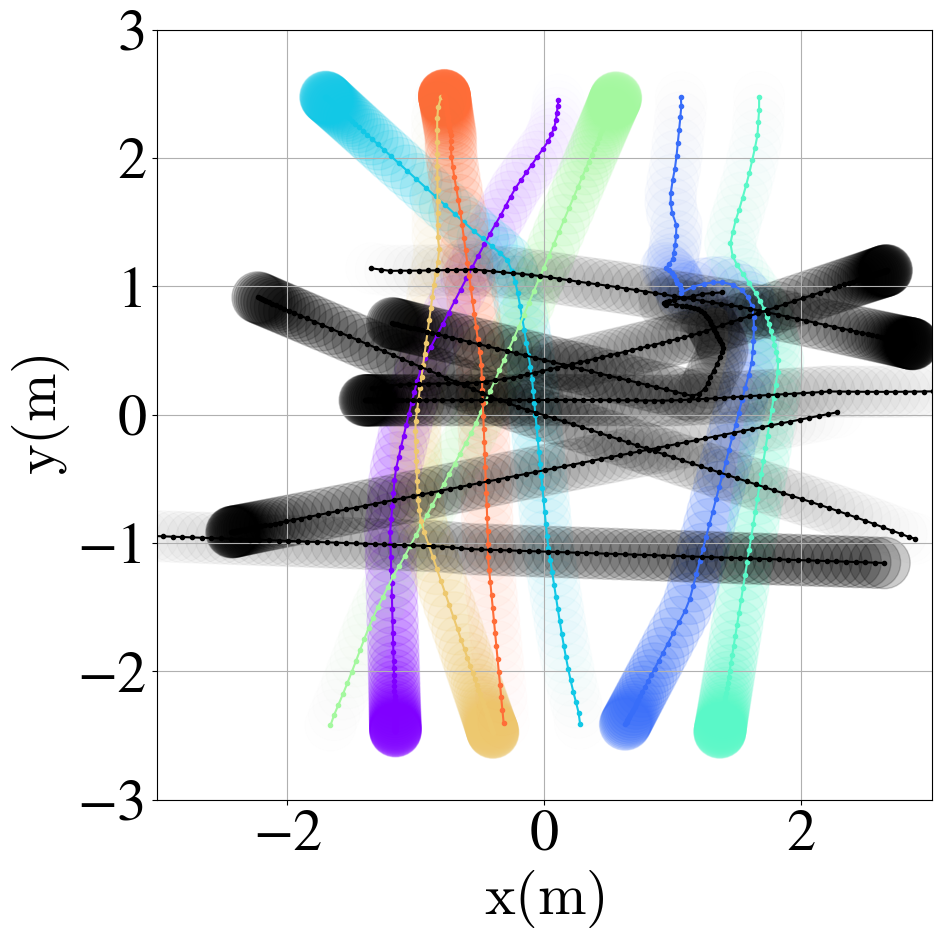}  \\
 \end{tabular}
 \caption{Mixed cooperative/non-cooperative \textit{circle} (top) and \textit{crossing} (bottom) scenario with 4 agents, 6 robots and different planners. Each robot is depicted with a different color. The representation follows the same rules in Fig.~\ref{fig:non-collaborative}. Collisions are marked with crosses surrounded by circles}.
 \label{fig:part-circles}
 \end{figure*}

We conduct a systematic series of \textit{circle} and \textit{crossing} runs to extract quantitative metrics to compare all the planners, using an increasing total number of agents. The circle size of \textit{circle} scenarios is set to $\max(2.5, \frac{2.3 \mathsf{N} r_s }{\pi})$m and the square size of \textit{crossing} scenarios is set to $1.5 \mathsf{N} r_s$m. These numbers guarantee that there is enough space for the initial and final position of all robots and agents while enforcing collision conflicts. Simulation runs range from $\mathsf{N}=10$ to $\mathsf{N}=25$ in steps of $3$. For each value of $\mathsf{N}$, we test five different proportions $\mathsf{P}$ of cooperative robots. In particular, we set the number of cooperative robots equal to $\lceil \mathsf{P} \mathsf{N}\rceil$, with $\mathsf{P} = \{0.01, 0.25, 0.5, 0.75, 1\}$, where $\lceil\bullet\rceil$ is the operator that rounds a real number to the closest greater integer. The others are non-cooperative agents. We run $128$ random scenarios for each combination of planner, number of cooperative robots and non-cooperative agents. Each run in \textit{circle} scenarios randomizes the identity of each agent (non-cooperative or the planner under evaluation), while each run in \textit{crossing} scenarios randomizes the initial and goal positions of each agent. We compare the state-of-the-art planners with 4 versions of \textsf{AVOCADO} with the following parameters:
\begin{itemize}
    \item \textsf{AVOCADO\_1}: Table~\ref{tab:parameter-values}.
    \item \textsf{AVOCADO\_2}: Table~\ref{tab:parameter-values} but $d_i=5$.
    \item \textsf{AVOCADO\_3}: Table~\ref{tab:parameter-values} but $b_i=1$ ($\frac{b_i}{d_i}=0.5)$.
    \item \textsf{AVOCADO\_4}: Table~\ref{tab:parameter-values} but $b_i=-1$ ($\frac{b_i}{d_i}=-0.5)$.
\end{itemize}


We define the success rate as a evaluation metric. Let $\mathsf{M}$ be the number of runs; $\mathrm{succ}_{j,k} \in \{0,1\}$ is the indicator that is equal to $1$ if $||\mathbf{p}_{r,j} - \mathbf{p}_{r,j}^*|| < \xi$ at the end of run $k$ or $0$ otherwise (it is also zero when the robot $j$ at run $k$ collides), where $\mathbf{p}_{r,j}$ is the position of robot $j$, $\mathbf{p}^*_{r,j}$ is the desired goal of robot $j$, and $\xi > 0$ is a small tolerance. Then, the success rate is defined as 
$$\frac{1}{\mathsf{M}\mathsf{N}}\sum_{k=1}^{\mathsf{M}}\sum_{j=1}^{\mathsf{N}}
    \mathrm{succ}_{j,k}.$$

The success rates are in Fig.~\ref{fig:success-circle} and Fig. \ref{fig:success-square} for \textit{circle} and \textit{crossing} scenarios, respectively. 
\textsf{AVOCADO} outperforms all other approaches in success rate. \textsf{AVOCADO\_2}, with a higher value of $d_i$ than \textsf{AVOCADO\_1}, has a greater success rate in \textit{circle} scenarios where the number of cooperative robots is greater, as it is more sensitive to the introduced noise that breaks symmetries. \textsf{AVOCADO\_1} is more stable and has a better performance in partially cooperative scenarios. 
\textsf{AVOCADO\_3} presents a worse success rate due to the bias, that assumes low degrees of cooperation when it is often not true. In this sense, it is very interesting to see how \textsf{AVOCADO\_4}, with a bias towards great degrees of cooperation, presents the worst results among the \textsf{AVOCADO} versions in Fig.~\ref{fig:success-circle} unless in fully-cooperative cases. However, \textsf{AVOCADO\_4} achieves the best success rates in Fig.\ref{fig:success-square}. This is probably due to the fact that, in \textit{crossing} scenarios, the robot first faces non-cooperative perpendicular traffic, where being cautious is desirable; meanwhile, in \textit{circle} scenarios the robot faces agents coming in their same direction, leading to conflicts similar to the one previously seen in Fig.~\ref{fig:bias}b. \textsf{ORCA} experiences deadlocks from geometrical symmetries, so it fails in all cooperative \textit{circle} scenarios. The performance of \textsf{T-MPC} degrades when the number of cooperative agents increases, since, as it is observed in \cite{mavrogiannis2022winding,poddar2023crowd}, the method is suited for scenarios with less than 10 robots or agents. \textsf{RVO-RL}, tuned for fully-cooperative environments, achieves its best performance for $\mathsf{P}=1$, decreasing the success rate as non-cooperative agents appear. \textsf{SARL}, due to the non-fluid behavior, obtains poor success rates. 
\begin{figure}
 \centering
\includegraphics[width=0.9\linewidth]{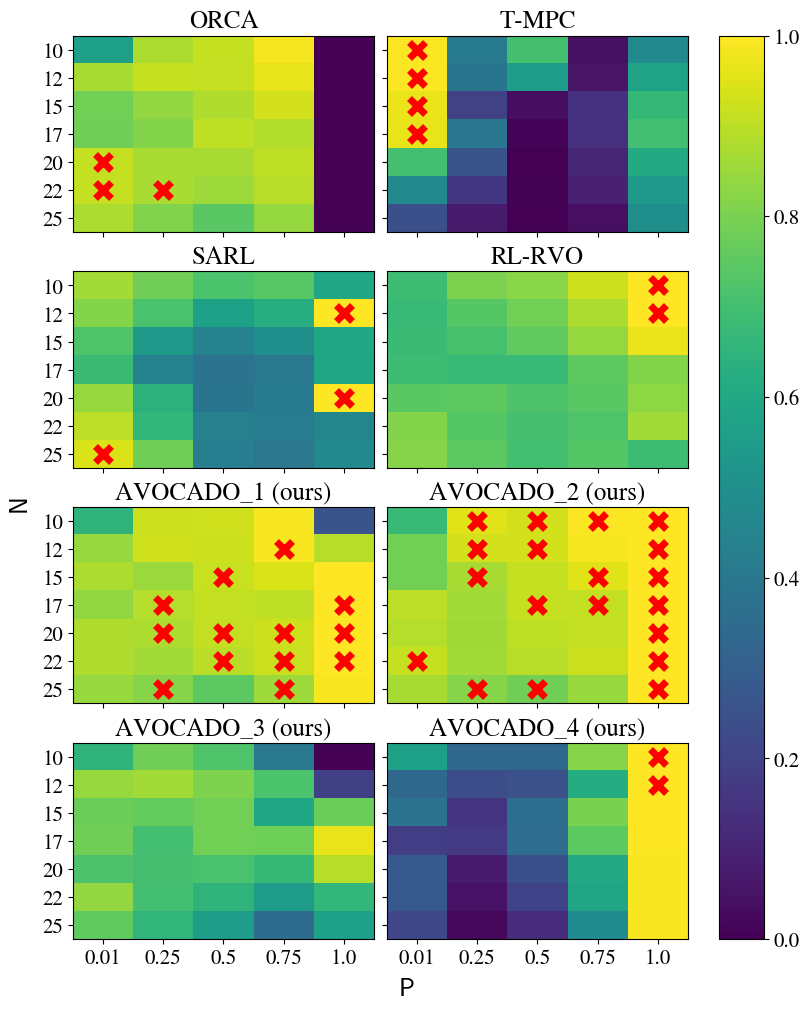} 
 \caption{Success rates of robots with the different planners in \textit{circle} scenarios. Red crosses mark the best planner. The color bar refers to success rate as defined in Section \ref{subsec:multiagent}, whereas $\mathsf{P}$ refers to the proportion of cooperative robots. The yellower the better.}
 \label{fig:success-circle}
 \end{figure}

\begin{figure}
 \centering
\includegraphics[width=0.9\linewidth]{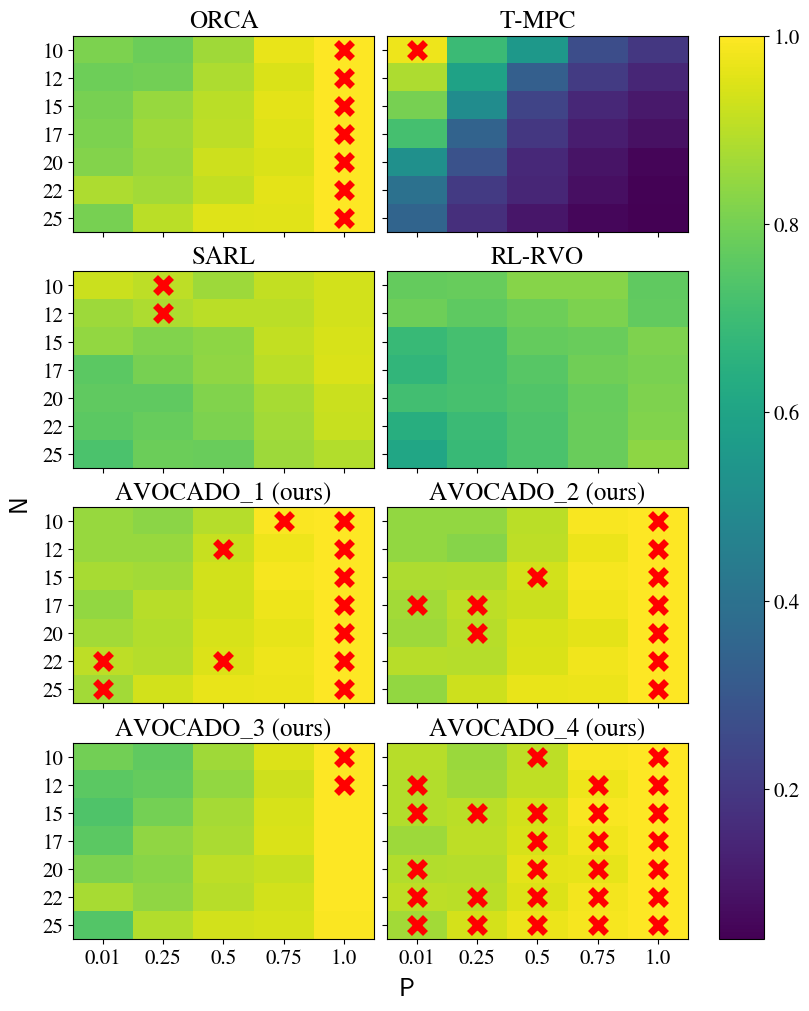} 
 \caption{Success rates of robots with the different planners in \textit{crossing} scenarios. The color bar refers to success rate as defined in Section \ref{subsec:multiagent}, whereas $\mathsf{P}$ refers to the proportion of cooperative robots. The yellower the better.}
 \label{fig:success-square}
 \end{figure}

We show in Fig.~\ref{fig:nav-times} the mean navigation time that every planner takes to reach the goal in \textit{square} scenarios with \mbox{$\textsf{P}=0.5$}. The mean time to goal is computed taking into account only the robots that do not collide: 
$$\frac{1}{\mathsf{M}\mathsf{N}}\sum_{k=1}^{\mathsf{M}}\sum_{\substack{j=1 \\ \mathrm{succ}_{j,k} = 1}}^{\mathsf{N}} t_{j,k},$$
where $t_{j,k}>0$ denotes the instant when robot $j$ reaches its goal at run $k$. The results are aligned with the qualitative results depicted in Figs.~\ref{fig:non-collaborative}, \ref{fig:col-circles}, and \ref{fig:part-circles}). \textsf{RL-RVO} and \textsf{SARL} exhibit the largest time to goal because their trajectories are irregular and present many detours. Meanwhile, the rest of the planners reach the goal in similar times. \textsf{T-MPC} manifests slightly lower times to reach the goal even tough the resulting trajectories are longer than those of \textsf{AVOCADO} or \textsf{ORCA}. This is due to the fact that \textsf{T-MPC} exerts higher velocities and it is, therefore, more risky, which explains its success rate metrics shown in Figs. \ref{fig:success-circle} and \ref{fig:success-square}.

\begin{figure}
    \centering
    \includegraphics[width=1\columnwidth]{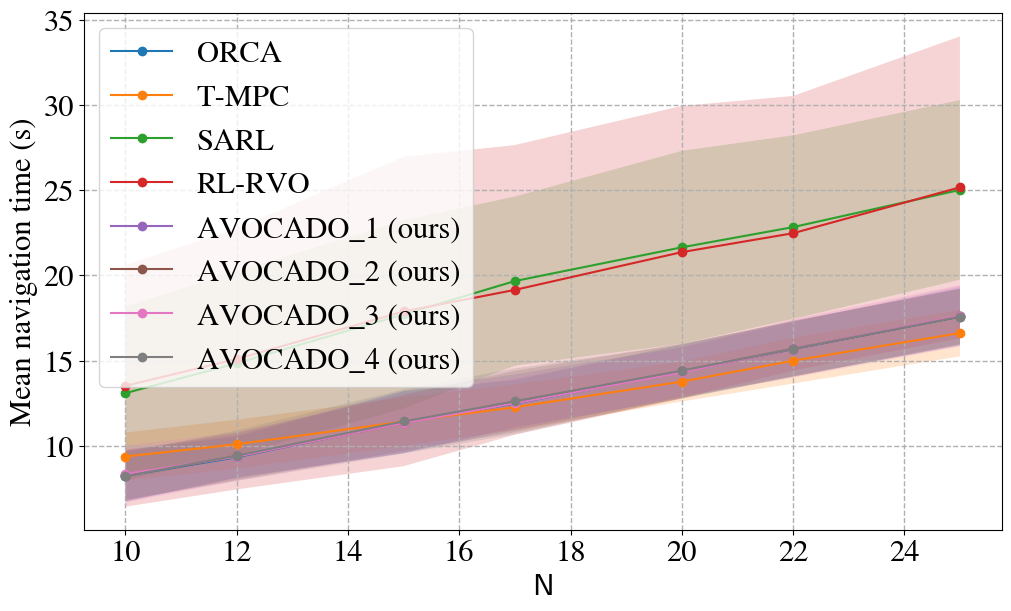}
    \caption{Mean and standard deviation of navigation times (in seconds) of successful robots for the different planners, gathered in 128 \textit{square} scenarios with $\mathsf{P}=0.5$}
    \label{fig:nav-times}
\end{figure}

Finally, regarding computation times, Fig. \ref{fig:times} collects the mean and standard deviation computation times for all the planners under comparison. The pure geometrical methods (\textsf{AVOCADO} and \textsf{ORCA}) share the same inexpensive computational cost, requiring hundreds of milliseconds to compute a solution in crowded environments. The time increases with the number of robots and agents since the number of entities under consideration grow, but this growth is linear with the number of agents and robots within the sensor range. The other methods take, by orders of magnitude, much more time to compute their navigation commands. This computational burden may prevent their use in real hardware applications with constrained resources, specially if there are other higher-level tasks that use those resources.

\begin{figure}
    \centering
    \includegraphics[width=1\columnwidth]{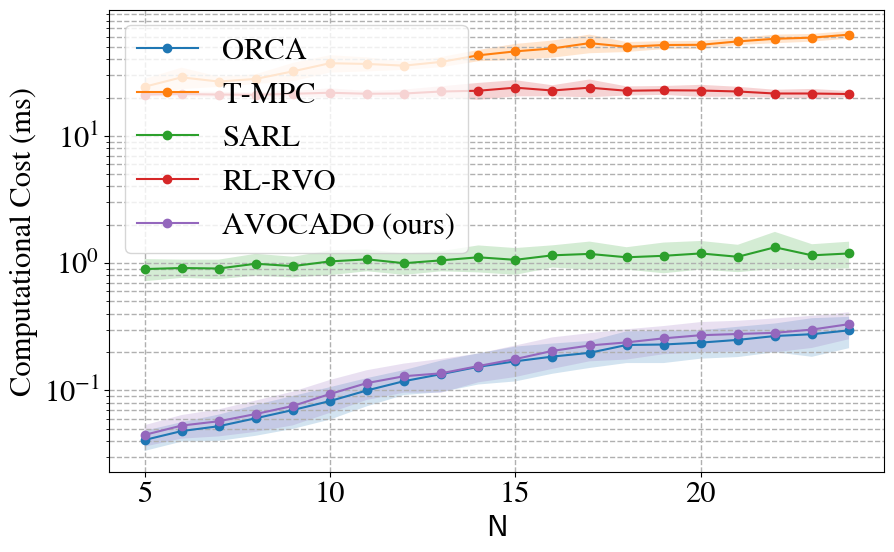}
    \caption{Mean and standard deviation of the computational times (in milliseconds) for the different planners. Y-axis is in a logarithmic scale.}
    \label{fig:times}
\end{figure}

\subsection{Static obstacles}\label{subsec:static_obstacles} 

As real-world scenarios often include other kind of obstacles, we study the behavior of \textsf{AVOCADO} in the presence of static obstacles. We followed the same approach of previous \textsf{VO}-based methods~\cite{van2008reciprocal,van2011reciprocal,rufli2013reciprocal,alonso2013optimal,alonso2018cooperative}, which consider representing the boundaries of the obstacles as segments. Half-spaces constraining the velocities that lead to collision with these segments are directly included in the linear problem optimization. In this way, collision avoidance with static obstacles is naturally achieved with no extra cost.

To verify this insight, we conduct simulated experiments where robots following \textsf{AVOCADO} navigate in a circular scenario that includes static obstacles. Fig.~\ref{fig:static} shows that the robots are able to adjust their trajectories to obstacles of different shapes.

 \begin{figure}
 \centering
 \begin{tabular}{@{}cc@{}}
  \footnotesize{a)} & \footnotesize{b)} \\
  \includegraphics[width=0.47\columnwidth]{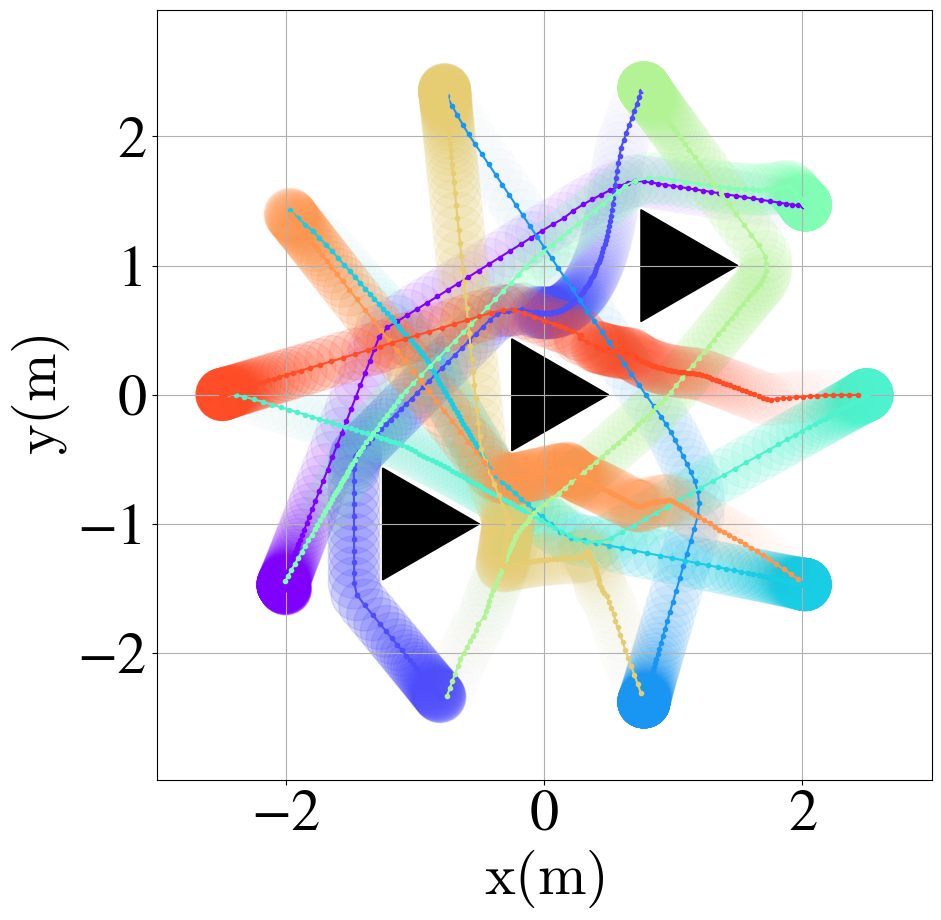} & 
  \includegraphics[width=0.47\columnwidth]{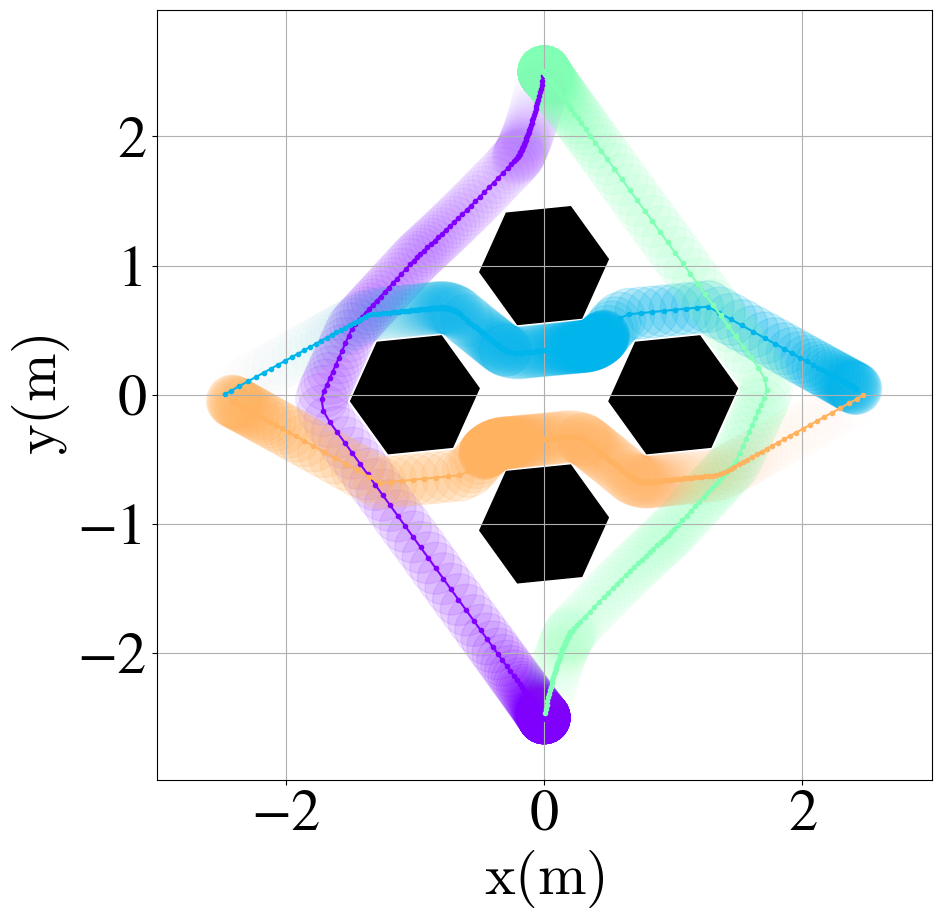}  \\
 \end{tabular}
 \caption{Multi-robot experiments with static obstacles.}
 \label{fig:static}
 \end{figure}

Nevertheless, as previously stated, an important remark is that \textsf{AVOCADO} is a planner that locally selects a non-colliding velocity. Therefore, an additional high-level planner would be needed to navigate in the presence of non-convex obstacles, such as in a maze-like environment.

\section{Experimental results}\label{sec:experiments}
After evaluating \textsf{AVOCADO} in simulated environments, in this section we conduct experiments with real ground robots. We use three Turtlebot 2 robotic platforms that use \textsf{AVOCADO}, and up to three pedestrians as external non-cooperative agents. The experiments involve 19 pedestrians, 15 of them external to the project associated to this work. The pedestrians, as the robots, have a fixed starting point and final goal for each experiment, but they have no instructions on how to behave and interact with the robots. In this way, the pedestrians decide by themselves their degree of cooperation with the robots and other pedestrians, their own velocity and the trajectory to follow to reach the goal, making each experiment different and unpredictable from the perspective of the robots. The robots use \textsf{AVOCADO} with the same parameterization as the default ones detailed in Table \ref{tab:parameter-values}. The purpose is to prove that \textsf{AVOCADO} presents zero-shot-transfer capabilities.

We conduct the experiments in an arena of $6\times6$m. We used an Optitrack Prime$^X$ 13W system of markers and 12 cameras to localize the robots and the pedestrians. We also design and implement an Extended Kalman Filter with a constant velocity assumption to track the positions and velocities of all the robots and pedestrians.

We design three representative experimental scenarios, running them using different combinations of pedestrians, leading to a total of 
33 experiments. Videos of the experiments can be found in the supplementary material. The three scenarios are as follows:
\begin{itemize}
    \item \textbf{Head-on}: two robots are placed next to each other in two corners of the arena, and two pedestrians are placed in the other two corners of the arena, each of them facing one of the robots. The goal of the robots and the pedestrians is to exchange their position with the pedestrian or robot that is in front of them.
    \item \textbf{Circle}: as in the simulations, pedestrians and robots are arranged in an evenly spaced circular formation, alternating robots and pedestrians. The goal of all the players is to go to the opposite side of the circle.
    \item \textbf{Crossing}: two robots are located in the medium point of two opposite sides of the arena, facing each other. In one of the other sides we place a pedestrian and a robot, while two pedestrians are placed in the remaining side of the arena. The team robot-pedestrian and the team pedestrian-pedestrian are told to cross the arena towards the position of the other team. The goal of the first two robots is to cross the perpendicular traffic flow to exchange their position. 
\end{itemize}

Fig.~\ref{fig:groups-exp0} shows two examples of head-on experiments. In the first experiment (a) one of the pedestrians (bottom) decides to evade the robot. \textsf{AVOCADO} detects that and adapts to the situation, following a straight trajectory. Reciprocal dance problem is not observed. The other pedestrian in Fig.~\ref{fig:groups-exp0}a slightly modifies the trajectory to cooperate with the robot. Fig.~\ref{fig:groups-exp0}b shows the same experiment with other pedestrians, that decide to not cooperate at all, demonstrating how the robots are able to take all the responsibility to do the collision avoidance maneuver.
 
 \begin{figure}
 \centering
 \begin{tabular}{@{}cc@{}}
  \footnotesize{a) Group 2} & \footnotesize{b) Group 4} \\
  \includegraphics[trim={2cm 10cm 15cm 7cm}, clip, width=0.47\columnwidth]{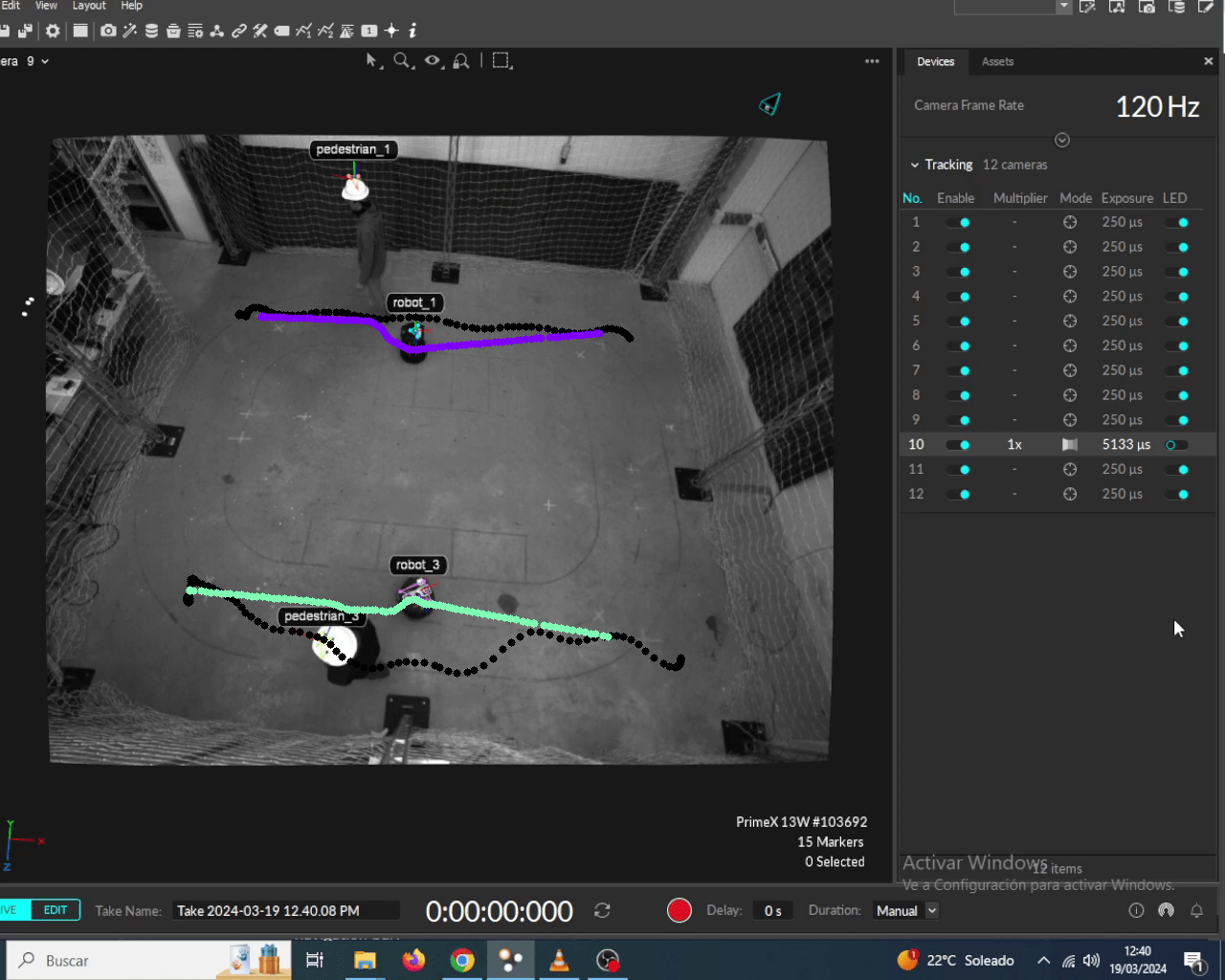} & 
  \includegraphics[trim={2cm 10cm 15cm 7cm}, clip, width=0.47\columnwidth]{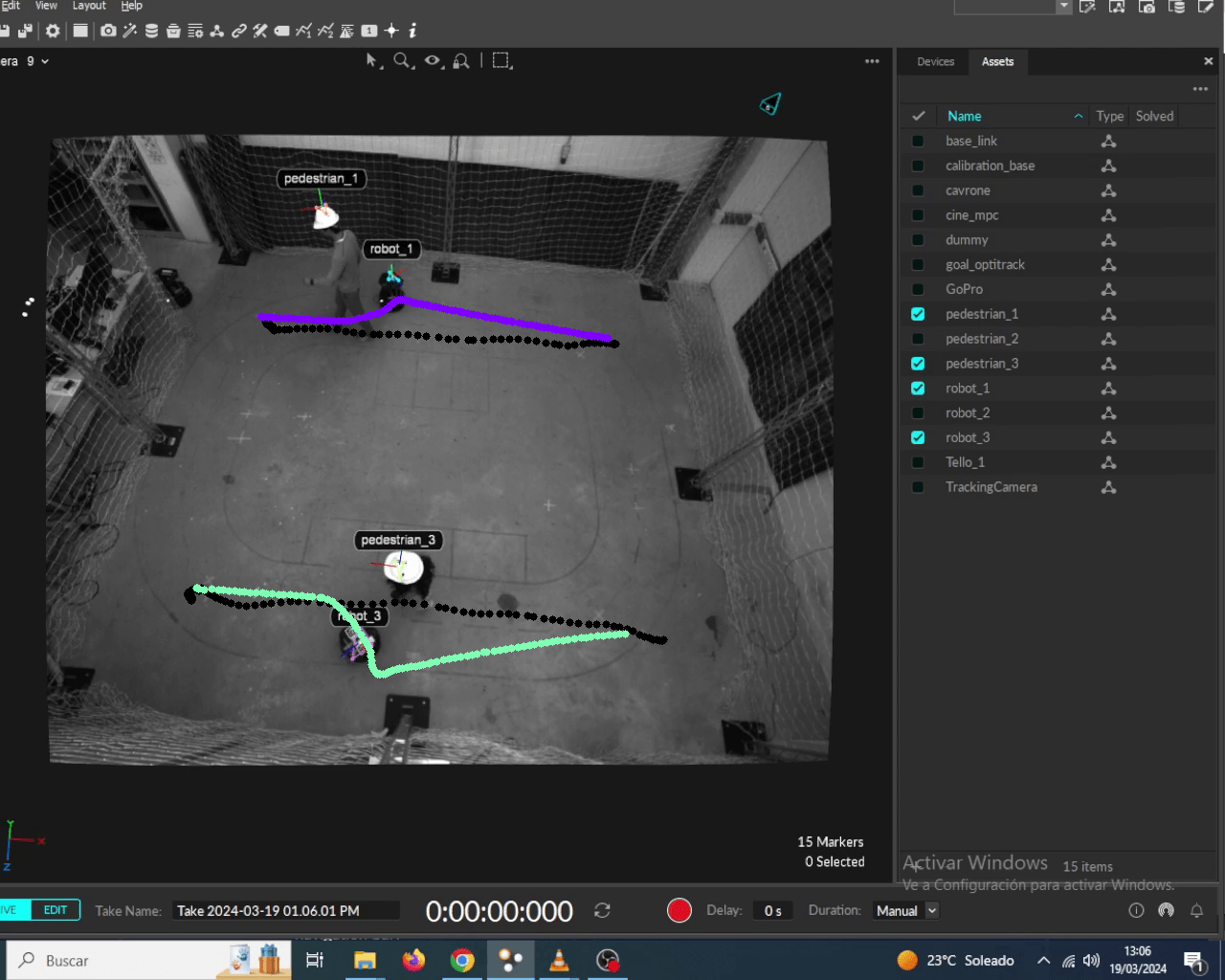}  \\
 \end{tabular}
 \caption{Head-on experiment, groups 2 and 4. Two robots face two pedestrians, exchanging their initial position. The pedestrians trajectories are represented in black and the robots with different colors.}
 \label{fig:groups-exp0}
 \end{figure}

In Fig.~\ref{fig:groups-exp1}a, pedestrians participating in the circle experiment walked with different velocities. When the person starting in the down-right leaves the intersection, the person starting in the top-center is still starting the trajectory. The robots adapt to the different velocities and safely reach the goal with fluid trajectories. The purple robot traverses the intersection moving to its right side, as there is free space there. Fig.~\ref{fig:groups-exp1}b depicts a situation where the human starting in the down-right and the one starting in the top-center have similar behaviors as in the previous experiment in terms of speed. The pedestrian starting in the bottom-left, however, is faster than its previous homologous. The robots accomodate to this situation by being more cooperative and more reactive (yellow and blue) or choosing a different trajectory in the space that pedestrian is leaving behind (purple).

 \begin{figure}
 \centering
 \begin{tabular}{@{}cc@{}}
  \footnotesize{a) Group 3} & \footnotesize{b) Group 5} \\
  \includegraphics[trim={2cm 10cm 15cm 7cm}, clip, width=0.47\columnwidth]{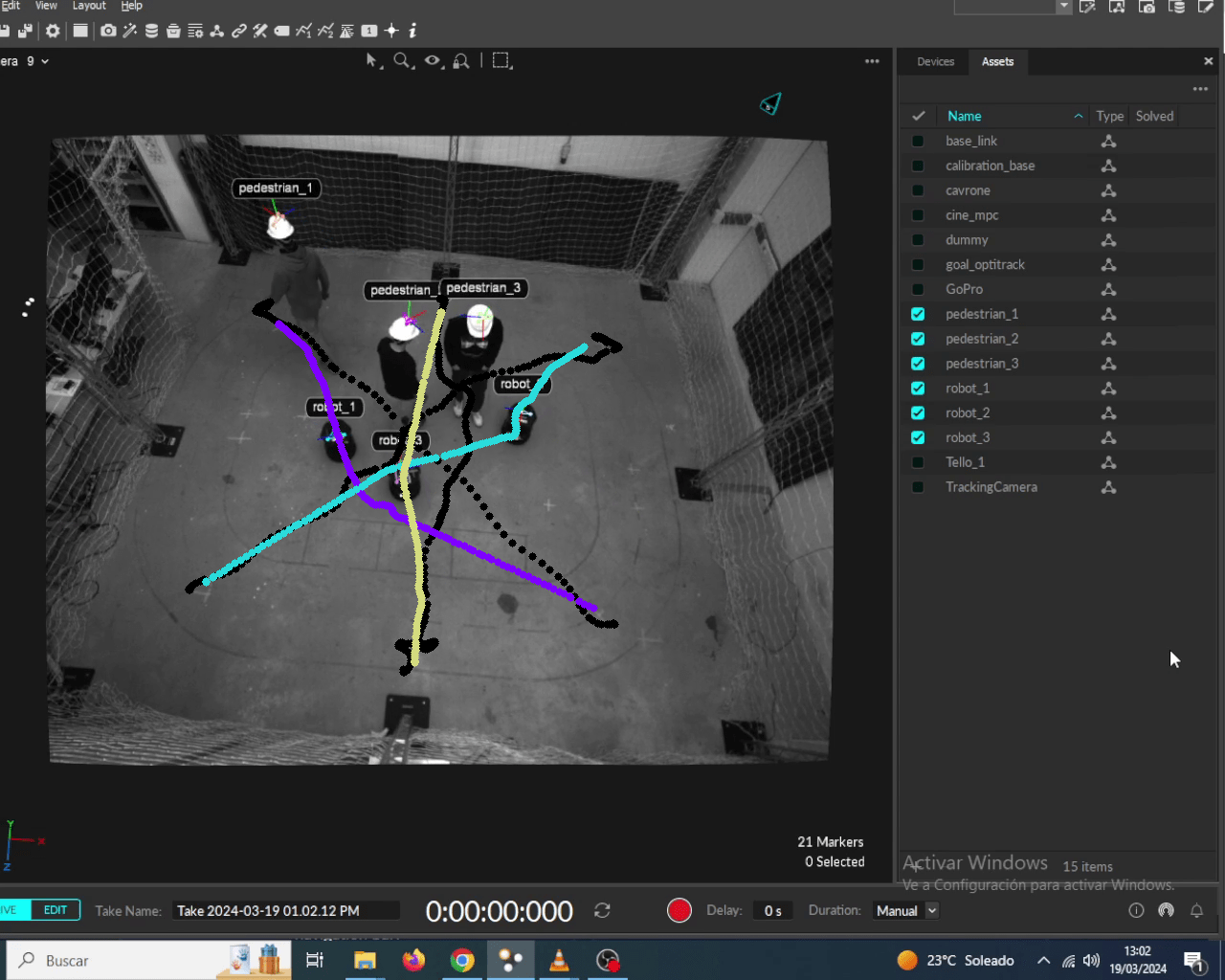} & 
  \includegraphics[trim={2cm 10cm 15cm 7cm}, clip, width=0.47\columnwidth]{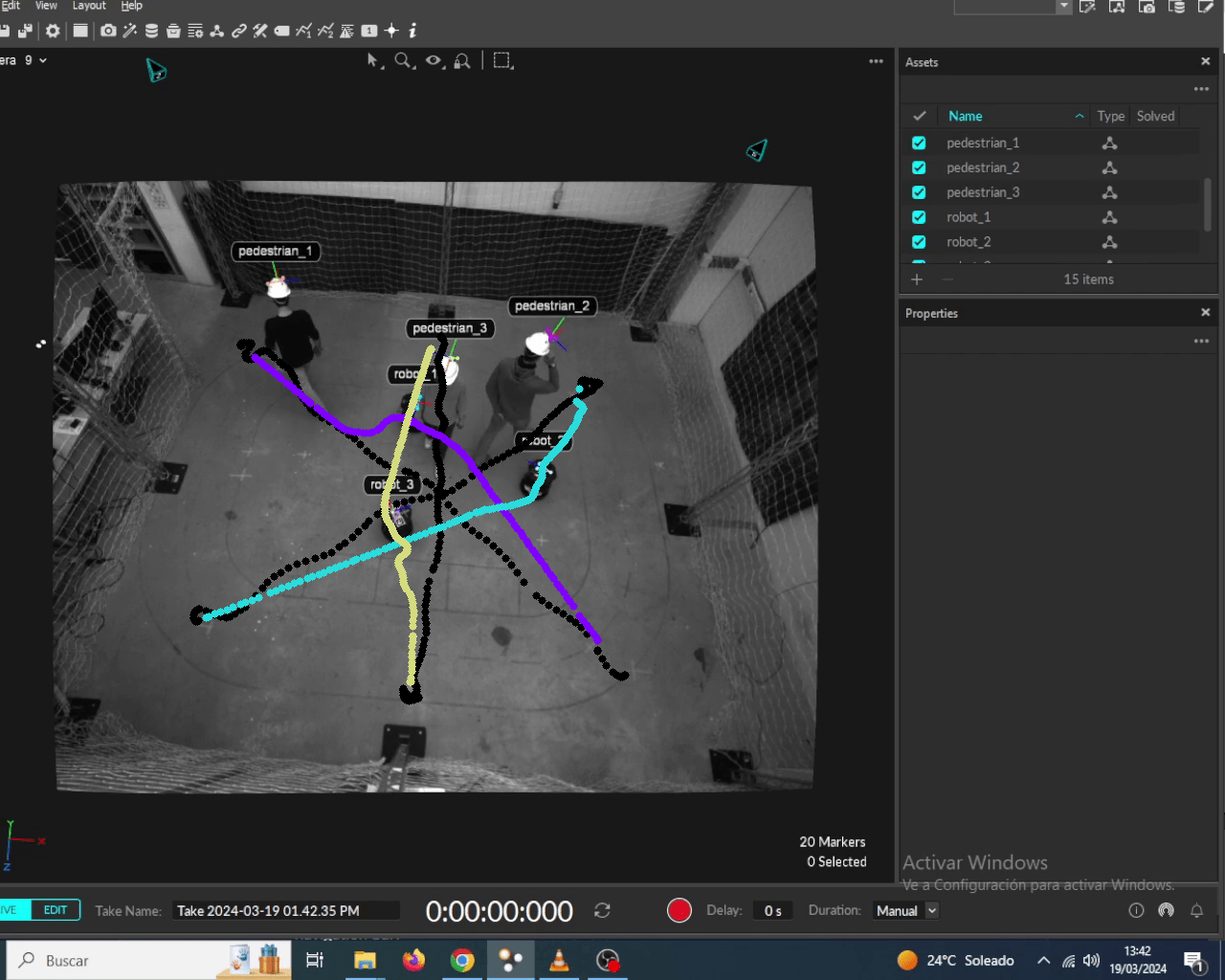}  \\
 \end{tabular}
 \caption{Circle experiment, groups 3 and 5. Three robots and three pedestrians, evenly spaced in a circle, exchange their positions with the opposite player.}
 \label{fig:groups-exp1}
 \end{figure}

A standard crossing scenario is represented in Fig.~\ref{fig:groups-exp2}a. All the agents forming the traffic in the middle of the arena respect their lane. The two robots in opposite sides choose a side to cross the intersection, passing between the two nearby agents and the two far ones. The girl starting behind the purple robot overtakes it safely, through the open space that is in their left. The robot collaborates but giving her more space to do it. Fig.~\ref{fig:groups-exp2}b, however, shows a different situation. The pedestrian behind the other pedestrian decides to accelerate and overtake through the central lane. The purple robot has to slow down to evade a frontal collision. This delay in the central traffic makes the blue robot avoid it through the right side, and the yellow adapt to it as it arrives there.

Overall, extensive hardware experiments with multiple robots and humans prove that \textsf{AVOCADO} is effective navigation strategy for collision avoidance in mixed cooperative/non-cooperative environments. \textsf{AVOCADO} is a zero-shot approach since no further tuning is required to transfer the algorithm from simulations to real robots. In this sense, \textsf{AVOCADO} preserves the properties observed in simulations in terms of success rate, fluidity, adaptation to unknown degrees of cooperation and computational efficiency.

 \begin{figure}
 \centering
 \begin{tabular}{@{}cc@{}}
  \footnotesize{a) Group 4} & \footnotesize{b) Group 5} \\
  \includegraphics[trim={2cm 10cm 15cm 7cm}, clip, width=0.47\columnwidth]{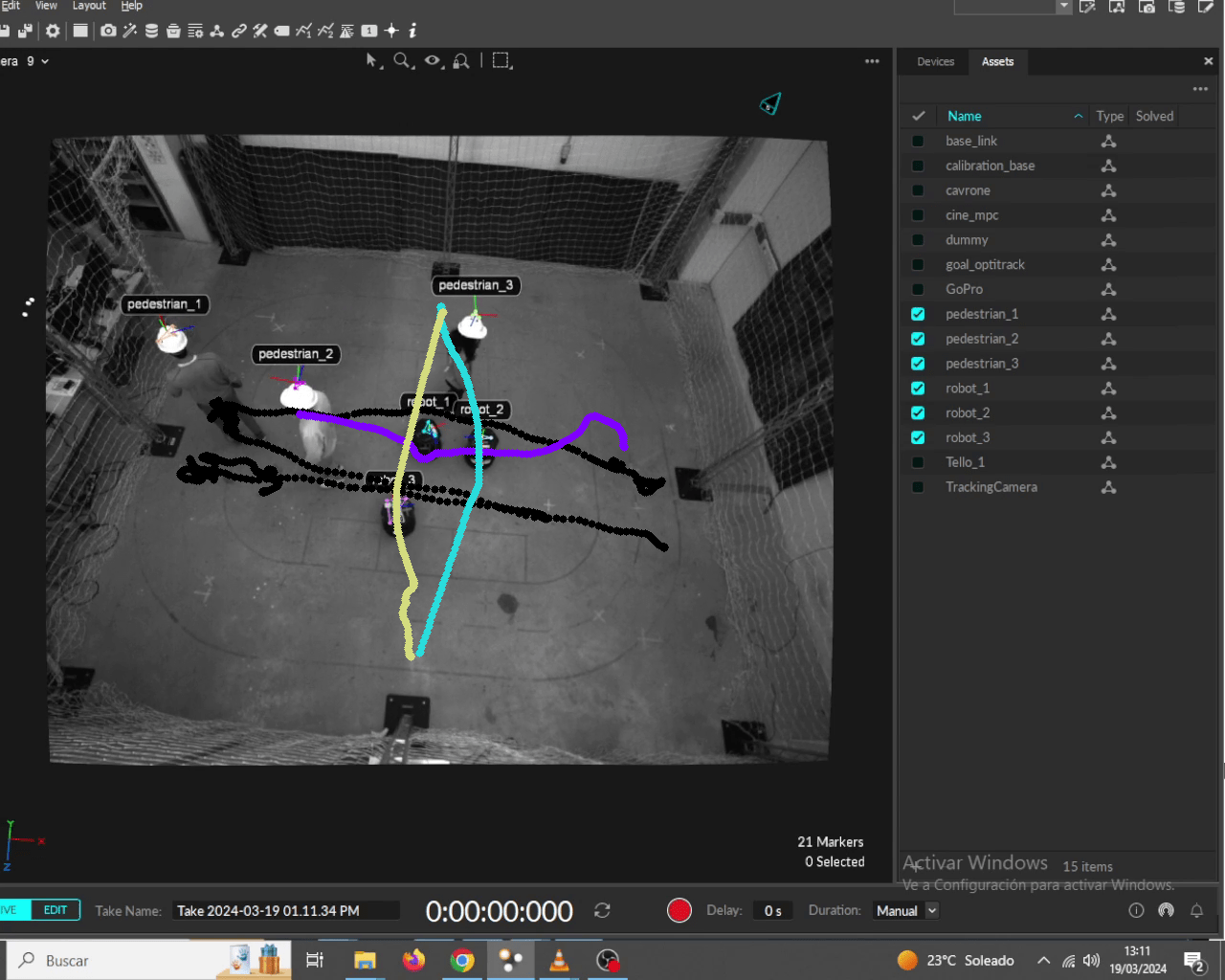} & 
  \includegraphics[trim={2cm 10cm 15cm 7cm}, clip, width=0.47\columnwidth]{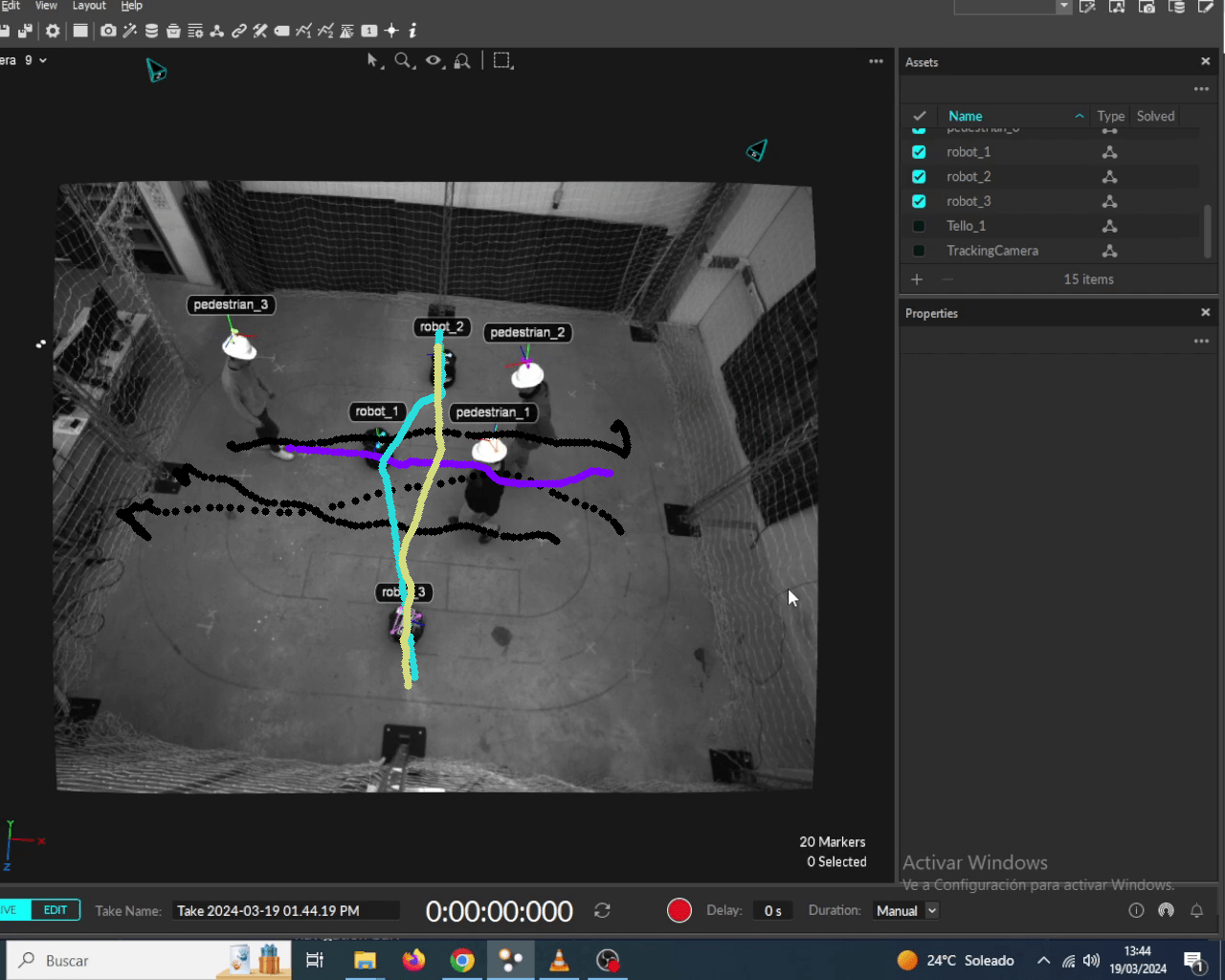}  \\
 \end{tabular}
 \caption{Crossing experiment, groups 4 and 5. Two robots, in opposite sides, exchange positions by crossing an intersection with one robot and three pedestrians.}
 \label{fig:groups-exp2}
 \end{figure}


\section{Conclusions}\label{sec:conclusion}
In this work we have presented \textsf{AVOCADO}, a novel approach to solve the robot collision avoidance problem in mixed cooperative/non-cooperative multi-agent environments, where the degree of cooperation of the agents is unknown. \textsf{AVOCADO} parameterizes the degree of cooperation of the agents using a scalar value that is adapted in real-time using a novel adaptive law based on nonlinear opinion dynamics. We have shown that the adaptive law, under an appropriate tuning, guarantees robot decision on 
the degree of cooperation of the agent before collision. To do so, 
we first exploited the geometry of the problem to develop a novel attention mechanism that depends on the expected time to collision. The attention mechanism is also used to propose a novel method to overcome deadlocks generated by symmetries. The second key to achieve adaptation involved a novel geometrical estimator that predicts the opinion that the agent has on the adapted degree of cooperation of the robot. The estimator only relies on the measured position and velocity of the agent. Finally, \textsf{AVOCADO} integrates the adapted degree of cooperation in a linear program that minimizes the difference between desired and actual velocity while avoiding collision.

Simulated experiments have validated \textsf{AVOCADO}, showing that it is inexpensive to compute and overcomes existing approaches in terms of success rate, number of avoided collisions, computational time and time to reach the goal. In this sense, our solution is readily implementable in low-cost hardware devices, leaving the computational and memory resources free for any desired high-level task such as semantic mapping, autonomous tracking or package delivery. Moreover, extensive experiments with multiple robots and humans have demonstrated that \textsf{AVOCADO} presents zero-shot-transfer capabilities and works in scenarios with uncertainties, noise and evolving unpredictable human behaviors. 

As future work, one direction is to explore the integration of \textsf{AVOCADO} in a higher-level planner that avoids dense crowded areas or convex obstacles. \textsf{AVOCADO}, since it does not have an horizon planner, sometimes enter in conflicting situations that lead to local minima or unavoidable collisions. Another research direction consists in extending \textsf{AVOCADO} to general robot dynamic models, as other \textsf{VO}-based planners~\cite{alonso2013optimal, alonso2018cooperative}; and to 3-D settings. Lastly, it would be interesting to develop a method that, by just relying on onboard sensors, is able to consider how neighboring interactions are affecting the motion of the nearby agents, in order to distinguish non-cooperative behaviors stemming from the degree of cooperation of the agents from the ones enforced by their own collision avoidance goals.





\bibliographystyle{IEEEtran}
\bibliography{IEEEabrv,IEEEexample.bib}

\newpage

\begin{IEEEbiography}[{\includegraphics[width=1in,height=1.25in,clip,keepaspectratio]{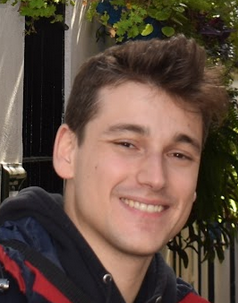}}]
{Diego Martinez-Baselga} (Graduate Student Member, IEEE) received the B.Eng. degree in computer science and the M.Sc. degree in Robotics, Graphics and Computer Vision from the University of Zaragoza, Spain, in 2020 and 2022, respectively, where he is working toward the Ph.D. degree in computer science and systems engineering. He was a visiting researcher at TU Delft, Netherlands, in 2023, in the Autonomous Multi-Robots Lab. His research interests include motion planning and collision avoidance using learning and control methods, and multi-robot systems.
\end{IEEEbiography}

\begin{IEEEbiography}[{\includegraphics[width=1in,height=1.25in,clip,keepaspectratio]{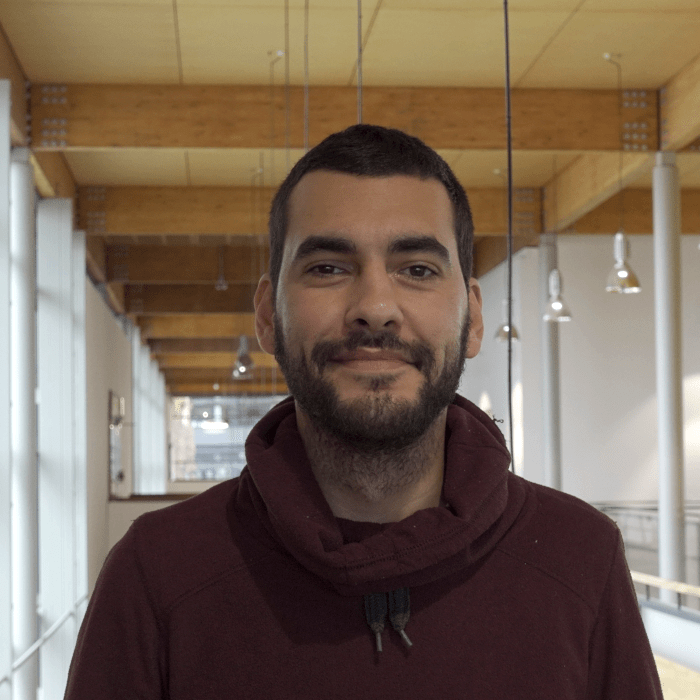}}]
{Eduardo Sebasti\'{a}n} (Graduate Student Member, IEEE) received the B.Eng. (Hons.) degree in electronic and automatic engineering and the M.Eng. (Hons.) degree in electronics in 2019 and 2020, respectively, from the Universidad de Zaragoza, Zaragoza, Spain, where he is currently working toward the Ph.D. degree in computer science and systems engineering. He has been a visiting scholar at the University of California San Diego, USA, in 2022 and 2024. His research interests include nonlinear control and distributed multirobot systems. Eduardo is a Fulbright Scholar and a DAAD AInet Fellow.
\end{IEEEbiography}

\begin{IEEEbiography}[{\includegraphics[width=1in,height=1.25in,clip,keepaspectratio]{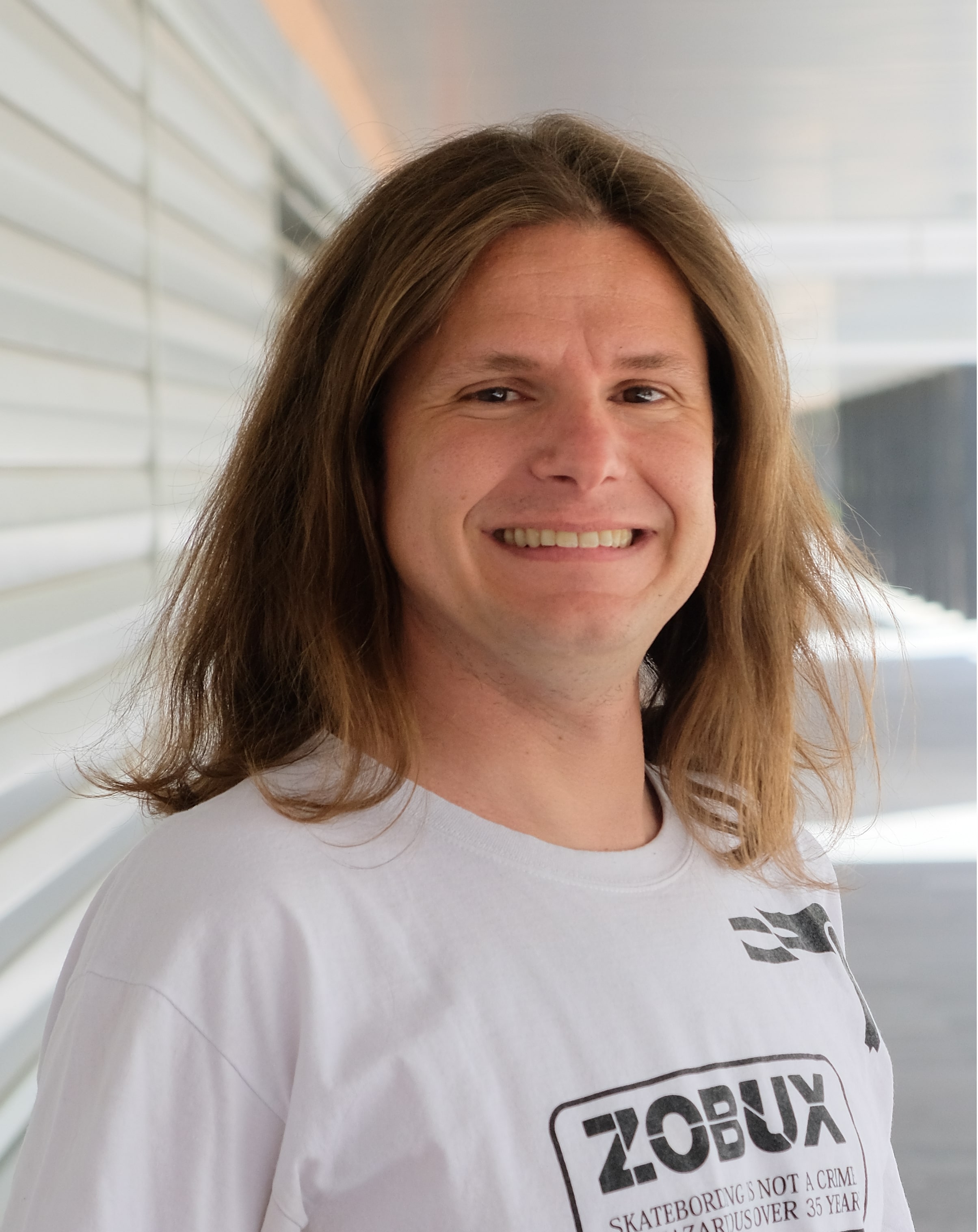}}]{Eduardo Montijano}
(Member, IEEE) received the M.Sc. degree in computer science and the Ph.D. degree in robotics and systems engineering from the Universidad de Zaragoza, Zaragoza, Spain, in 2008 and 2012, respectively.
From 2012 to 2016, he was a faculty member with Centro Universitario de la Defensa, Zaragoza.

He is currently an Associate Professor with the Departamento de Inform\'atica e Ingenier\'ia de Sistemas, Universidad de Zaragoza. His current research focuses on the design of scene understanding algorithms for planning and control of multiple robots.
\end{IEEEbiography}

\begin{IEEEbiography}[{\includegraphics[width=1in,height=1.25in,clip,keepaspectratio]{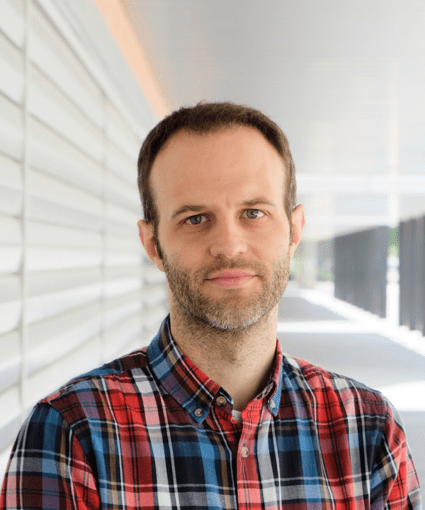}}]{Luis Riazuelo} (Member, IEEE) obtained his MsC in Computer Science in 2006 and his MEng in Biomedical Engineering in 2008 from the University of Zaragoza. In 2018 he obtained his PhD in Computer Science at the University of Zaragoza. He is currently an associate profesor at the Computer Science and Systems Engineering department of the University of Zaragoza. Since 2007 he is researcher of the Robotics, Perception and Real Time group at the Arag\`on Engineering Research. His current research topics include scene understanding for mobile robotics in underground environments and topological semantic mapping and localization in intracorporeal medical scenes.
\end{IEEEbiography}

\begin{IEEEbiography}[{\includegraphics[width=1in,height=1.25in,clip,keepaspectratio]{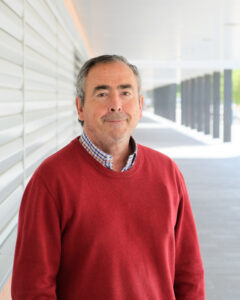}}]{Carlos Sag\"{u}\'{e}s} (Senior Member, IEEE) received the M.Sc. degree in computer science and systems
engineering and the Ph.D. degree in industrial engineering from the University of Zaragoza, Zaragoza, Spain, in 1989 and 1992, respectively.

In 1994, he joined as an Associate Professor with the Departamento de Informatica e Ingenieria de Sistemas, University of Zaragoza, where he became a Full Professor in 2009 and also the Head Teacher. He was engaged in research on force and infrared sensors for robots. His current research interests include control systems and industry applications, computer vision, visual control, and multi-vehicle cooperative control.
\end{IEEEbiography}

\begin{IEEEbiography}[{\includegraphics[width=1in,height=1.25in,clip,keepaspectratio]{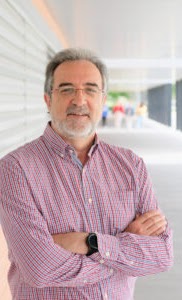}}]{Luis Montano} (Senior Member, IEEE) received his degree in industrial engineering in 1981 and his doctorate in 1987 from the University of Zaragoza, Spain. He is Full Professor at the University of Systems
Engineering and Automation at the University of Zaragoza. He was Director of the Department of Computer Science and Systems Engineering and Deputy Director of the Aragon Engineering Research Institute of the University of Zaragoza. He is Principal Researcher of the Robotics, Computer Vision and Artificial Intelligence research Group of the Institute. His main research interests in robotics are: planning and navigation in dynamic environments, multi-robot systems.
\end{IEEEbiography}

\end{document}